\def\inclapp{1}
\def\viewchanges{1}
\def\usehyperlinks{1}
\def\preprint{1}
\preprint1
\preprint1
\preprint1
\preprint1
    \newtheorem{theorem}{Theorem}[section]
    \newtheorem{lemma}[theorem]{Lemma}
    \newtheorem{definition}[theorem]{Definition}
    \newtheorem{prop}[theorem]{Proposition}
    \newtheorem{rem}[theorem]{Remark}
    \newtheorem{cor}[theorem]{Corollary}
    \newtheorem{example}[theorem]{Example}
    \newtheorem{assumption}{Assumption}
    \theoremstyle{dgthm}
    \newtheorem{theorem}{Theorem}
    \newtheorem{cor}{Corollary}
    \newtheorem{prop}{Proposition}
    \newtheorem{lemma}{Lemma}
    \theoremstyle{dgdef}
    \newtheorem{definition}{Definition}
    \newtheorem{example}{Example}
    \newtheorem{rem}{Remark}
    \newtheorem{assumption}{Assumption}
\crefname{assumption}{assumption}{assumptions}
\Crefname{assumption}{Assumption}{Assumptions}
\Crefname{rem}{Remark}{Remarks}
\let\P\undefined%
\newcommand{\1}{\mathbf{1}}
\newcommand{\P}{\mathbb{P}}
\newcommand{\A}{\mathcal{A}}
\newcommand{\E}{\mathbb{E}} 
\newcommand{\N}{\mathbb{N}} 
\newcommand{\R}{\mathbb{R}} 
\newcommand{\argmin}{\operatorname{argmin}} 
\newcommand{\id}{\operatorname{id}} 
\newcommand{\omb}{(\omega)}
\newcommand{\proj}[1]{\operatorname{proj}_{#1}}
\newcommand{\ind}[1]{\mathbbm{1}_{#1}}
\let\del\undefined
\let\com\undefined
	\newcommand{\del}[1]{{\color{red}{#1}}}
	\newcommand{\com}[1]{{\color{orange}{#1}}}
	\newcommand{\del}[1]{}
	\newcommand{\com}[1]{}
\newcommand{\code}{\url{https://github.com/ FlorianKrach/PD-NJODE}}
\title{Nonparametric Filtering, Estimation and Classification using Neural Jump ODEs}
\preprint1
	\author{%
        \name Jakob Heiss \email{jakob.heiss@berkeley.edu} \\
        \addr Department of Mathematics\\
        ETH Zurich\\
        \addr Department of Statistics\\
        UC Berkeley
        \AND
        \name Florian Krach \email{florian.krach@math.ethz.ch} \\
        \addr Department of Mathematics\\
        ETH Zurich
        \AND 
         \name Thorsten Schmidt \email{thorsten.schmidt@stochastik.uni-freiburg.de}\\
        \addr Department of Mathematics\\
        Albert-Ludwigs-Universität Freiburg
        \AND
        \name Félix B. Tambe-Ndonfack \email{felix.ndonfack@stochastik.uni-freiburg.de}\\
        \addr Department of Mathematics\\
        Albert-Ludwigs-Universität Freiburg
	}
    \providecommand{\keywords}[1]{\textbf{{Keywords:}} \textit{#1}}
        \journalname{Statistics \& Risk Modeling}
    \runningtitle{Neural Jump ODEs}
    \author[1]{Jakob Heiss}
    \author*[2]{Florian Krach}
    \author[3]{Thorsten Schmidt} 
    \author[3]{Félix B. Tambe-Ndonfack} 
    \runningauthor{Heiss, Krach, Schmidt, Tambe-Ndonfack} 
    \affil[1]{\protect\raggedright 
    ETH Zurich, Department of Mathematics, Zurich, Switzerland and University of California, Department of Statistics, Berkeley, USA, e-mail: jakob.heiss@berkeley.edu}
    \affil[2]{\protect\raggedright 
    ETH Zurich, Department of Mathematics, Zurich, Switzerland, e-mail: florian.krach@math.ethz.ch}
    \affil[3]{\protect\raggedright 
    Albert-Ludwigs-Universität Freiburg, Department of Mathematics, Freiburg, Germany, e-mail: thorsten.schmidt@stochastik.uni-freiburg.de, felix.ndonfack@stochastik.uni-freiburg.de}
    \abstract{%
    Neural Jump ODEs model the conditional expectation between observations by neural ODEs and jump at arrival of new observations. They have demonstrated effectiveness for fully data-driven online forecasting in settings with irregular and partial observations, operating under weak regularity assumptions \citep{herrera2020neural,krach2022optimal}. This work extends the framework to input-output systems,  enabling direct applications in online filtering and classification. We establish theoretical convergence guarantees for this approach, providing a robust solution to $L^2$-optimal filtering. Empirical experiments highlight the model's superior performance over classical parametric methods, particularly in scenarios with complex underlying distributions. These results emphasize the approach's potential in time-sensitive domains such as finance and health monitoring, where real-time accuracy is crucial.%
    }
    \keywords{classification, filtering, input-output systems, neural jump ODEs, optimal estimation}
\begin{document}

\maketitle

\if\preprint1
    \begin{abstract}
        Neural Jump ODEs model the conditional expectation between observations by neural ODEs and jump at arrival of new observations. They have demonstrated effectiveness for fully data-driven online forecasting in settings with irregular and partial observations, operating under weak regularity assumptions \citep{herrera2020neural,krach2022optimal}. This work extends the framework to input-output systems,  enabling direct applications in online filtering and classification. We establish theoretical convergence guarantees for this approach, providing a robust solution to $L^2$-optimal filtering. Empirical experiments highlight the model's superior performance over classical parametric methods, particularly in scenarios with complex underlying distributions. These results emphasize the approach's potential in time-sensitive domains such as finance and health monitoring, where real-time accuracy is crucial.
    \end{abstract}
\fi

\section{Introduction}\label{sec:Introduction}

Time series analysis is important in many fields, such as meteorology, climate science, economics, finance, health care, medicine and more. The probably most important, but at the same time also most challenging, tasks are forecasting (or estimation), filtering, and classification of time series data, in particular when done \emph{online}, i.e., incorporating new observations directly upon their arrival. 
Classically, parametric approaches were used for these tasks, where certain models with a few interpretable parameters were assumed such that the analysis could be done analytically. 
In contrast to this, fully data-driven machine learning (ML) approaches have recently become more important due to their excellent empirical performance. 
Those approaches using neural networks can be regarded as nonparametric, when the size of the neural network is a priori unbounded.
They are highly flexible and offer theoretical guarantees through their universal approximation property. 

Recurrent neural networks (RNNs) \citep{rumelhart1985learning, jordan1997serial} are the most prominent approaches for time series. However, they are limited to regular, equidistant observations without missing values. In this work we focus on 
neural jump ODEs (NJODEs), introduced in \citet{herrera2020neural} and refined in \citet{krach2022optimal,andersson2024extending,krach2024learning}. This approach is highly suited for time series estimation and is able to work with irregularly and incompletely observed data. It employs path-dependence in a natural and explainable way through truncated signatures, as explained in detail in Remark \ref{rem:signature}.

Although adaptations for filtering were already proposed in \citet[Section~6]{krach2022optimal}, their main and natural application area is online \emph{forecasting}, where discrete and potentially incomplete past observations of a continuous-time process $X$  are used to optimally \emph{estimate}\footnote{Here we have a situation in mind where the process $X$ might only be partially observed and therefore use the word estimation. When $X$ is fully observed in the future, one would typically refer to forecasting $X$. However, we use both words interchangeably throughout the paper.} future values of this process $X$. 
In particular, the assumption for NJODEs has so far been that the \emph{output process} $V$ (i.e., the process to be estimated) coincides with the \emph{input process} $U$  (i.e., the process which is (partially) observed and used as input to the model). In this work, we generalize the framework to online \emph{filtering} of input-output systems, called 
input-output NJODE (IO NJODE) in the following. 
%

\subsection{Applications}

Applications of the approach are classical stochastic filtering (with discrete observations), filtering with partial and irregular observations, and parameter filtering, in addition to forecasting. 
Moreover, the proposed input-output configuration is applicable to online \emph{classification}, where arbitrary features can be used as input $U$ to predict conditional class probabilities. 
Since it can process sequential, real-time data inputs and continually generates predictions based on the most up-to-date information, the framework can be applied efficiently in time-sensitive domains like financial market analysis or health monitoring systems, where input-output relationships are central to decision making. In particular, we would like to point out two specific application cases:

First, in medical applications, the (unobserved) health state of a patient evolves continuously over time, and typically, rare scheduled examinations provide information on the health state, leading to a typical small data problem, as for example studied in \cite{hackenberg2022deep, hackenberg2025investigating}.

Second, in economic models of credit risk like the Merton model \cite{merton1974pricing}, the firm value evolves dynamically over time but is often difficult to assess and therefore considered unobserved in many works like \cite{duffie2001term,frey2009pricing,frey2012pricing,schmidt2008structural} among many others, see also  \Cref{sec:Geometric Brownian Motion with Uncertain Parameters} for a more detailed discussion. The observation is gathered through expert opinions, like in ratings, or when quarterly reports are due. In particular, the latter can be considered pre-scheduled, which has motivated research as in \cite{gehmlich2018dynamic,fontana2018general}.

\subsection{Contribution}\label{sec:Contribution}
When input and output processes $U$ and $V$ coincide, the setting of \citet{krach2022optimal} is recovered. Otherwise, the IO NJODE proposed in this work constitutes a targeted approach for the filtering task.
We provide a thorough theoretical framework for the general input-output setting and give minimal assumptions (\Cref{sec:Problem Setting}) to prove convergence (\Cref{sec:Convergence Guarantees}) of our IO NJODE model (\Cref{sec:proposed method}) to the conditional expectation, which is the $L^2$-optimal solution to the filtering and estimation problem. 
To establish these theoretical guarantees, we need to introduce a new objective function adapted to the input-output setting. In \Cref{sec:Implications of the choice of Objective Function}, we compare the original and this new objective function and discuss which one should be used depending on the task at hand.
Several examples for which our model can be applied are given in \Cref{sec:Examples}. Besides classical stochastic filtering, we put a special focus on parameter filtering, where the parameters $\alpha$ of a stochastic process $X^{\alpha}$ should be estimated from observations of $X^{\alpha}$. Moreover, we discuss that online classification tasks fall into our framework, by choosing the indicators for each class label as output process $U$. 
Several experiments are provided in \Cref{sec:Experiments}.
For clarity and ease of reading and comprehension, we build on the setting in \citet{krach2022optimal}, but note that the extensions to a dependent observation framework, noisy observations, and long-term predictions can equivalently be incorporated as described in \citet{andersson2024extending,krach2024learning} (cf.~\Cref{rem:cond indep,rem:noise adapted loss,rem:learning long-term predictions}).

\subsection{Related Work}\label{sec:Related Work}
As mentioned before, this work is based on the Neural Jump ODE literature that started with \citet{herrera2020neural} and was lifted to a new level of generality in \citet{krach2022optimal}.
The extensions of \citet{andersson2024extending,krach2024learning} are conceptually parallel developments to this work, both building on \citet{herrera2020neural}. The ideas of these 3 works should be understood as building blocks that can be stacked together as desired. This concept is realized in our implementation (available at \code). For more related work on the background of Neural Jump ODEs, we refer the interested reader to \citet{herrera2020neural,krach2022optimal,andersson2024extending}.

In this work, we use particle filters \citep{djuric2003particle} as baseline method to approximate the true conditional expectation when no analytic expression is available for it. 
In contrast to our model which is fully data-driven, classical particle filters need full knowledge of the underlying distributions to be applicable. Deep learning extensions of particles filters \citep{maddison2017filtering,le2017auto,corenflos2021differentiable,lai2022variational} provide more flexible approaches where the distributions can be learned; however, they are usually restricted to discrete state space models with regular and complete observations, while our method naturally deals with random (and therefore irregular) observation times and missing values. Deep learning extensions of the Kalman filter, like KalmanNet \citep{revach2022kalmannet} or Deep Kalman Filters \citep{krishnan2015deep}, as well as related works \citep{karl2016deep,naesseth2018variational,archer2015black,krishnan2017structured} have similar limitations as deep particle filters. Hence, without adjustments these methods cannot be used as baselines for our model in general settings.

\section{Problem Setting}\label{sec:Problem Setting}
In the following we provide a rigorous theoretical framework, which is, for simplicity, based on a combination of \citet[Section~2]{krach2022optimal} and \citet[Section~4.2]{andersson2024extending}. Including extensions for a dependent observation framework, noisy observations, and long-term predictions as proposed in \citet[Sections~3 and~4]{andersson2024extending} and \citet[Section~2]{krach2024learning} is straight forward, as outlined in \Cref{rem:cond indep,rem:noise adapted loss,rem:learning long-term predictions}.

\subsection{Stochastic Process, Random Observation Times and Observation Mask}
\label{sec:Stochastic Process, Random Observation Times and Observation Mask}
Let $d = d_U + d_V \in \N$ be the dimension and $T>0$ be a fixed time horizon. We work on a filtered probability space $(\Omega, \mathcal{F}, \mathbb{F},\mathbb{P})$, where the filtration $\mathbb{F}=(\mathcal{F}_t)_{t \in \mathbb{R}_+}$ satisfies the usual conditions, i.e.\ , the $\sigma$-field $\mathcal{F}$ is $\mathbb{P}$-complete, $\mathbb{F}$ is right-continuous and $\mathcal{F}_0$ contains all $\mathbb{P}$-null sets of $\mathcal{F}$.  On this filtered probability space, we consider  an adapted, $d$-dimensional, c\`adl\`ag  stochastic process $Z :={(Z_t)}_{t \in [0,T]}$.
We split  $Z =(U,V)$ into the $d_U$-dimensional input process $U$ and the $d_V$-dimensional  output process $V$. The running maximum process of $Z$ is defined as
$$Z^\star_t:= \sup_{0 \leq s \leq t} |Z_s|_1= \sup_{0 \leq s \leq t} (|U_s|_1+|V_s|_1), \quad 0 \leq t \leq T.$$
We denote the set of random times where the output process jumps by  $\mathcal{J}=\{(\omega,t) \colon  \Delta V_t(\omega) \neq 0 \}$, with  $\Delta V_t := V_t - V_{t-}$. 

In addition to the input-output process, we consider an exogenous observation \emph{masking}, which captures if a  coordinate of $Z$ is masked and therefore excluded from the data set. We assume that a random number of $n\in \N$ observations take place at the random times\footnote{In particular $n$ and $t_i$ are a random variables, even though we denote them with lowercase letters.}
$$0=t_0 < t_1 < \dotsb < t_{n} \le  T $$
and denote by $\bar n=\sup \left\{k \in \N \, | \, \P(k = n) > 0 \right\} \in \N \cup\{\infty\}$ the maximal value of $n$.
Note that this setup is extremely flexible and allows, in particular, a possibly unbounded number of observations in the finite time interval $[0,T]$.
Moreover, let 
$$ \tau(t):=  \max\{ t_k \colon  t_k \leq t \} $$
denote the last observation time (or zero if no observation was made yet) before time $t \in [0,T]$.
Finally, the observation mask is a sequence of random variables $M = (M_k)_{0 \leq k \leq \bar n}$ taking values in  $\{ 0,1 \}^{d}$. If $M_{k}^j=1$, then  the $j$-th coordinate $Z_{t_k}^j$ is observed at observation time $t_k$. By abuse of notation, we  also write $M_{t_k} := M_{k}$. 
For notational simplicity, we will  set $t_k(\omega)=\infty$ for $k > n(\omega)$ and for all appearing processes their value at $t_k=\infty$ to $0$.

\subsection{Information \texorpdfstring{$\sigma$}{sigma}-algebra}\label{sec:information sigma-algebra}
The information available at time $t$ is given by the values of the input process $U$ at the observation times when not masked. This leads 
to the filtration of the currently available information $\mathbb{A} := (\mathcal{A}_t)_{t \in [0, T]}$ given by
\begin{equation*}
\A_t := \boldsymbol{\sigma}\left(U_{t_i, j}, t_i, M_{t_i} \mid t_i \leq t,\, j \in \{1 \leq l \leq d_U \mid M_{t_i, l} = 1 \} \right),
\end{equation*}
where $\boldsymbol{\sigma}(\cdot)$ represents the generated $\sigma$-algebra.
By the definition of $\tau$, we have $\mathcal{A}_t = \mathcal{A}_{\tau(t)}$ for all $t \in [0, T]$. 
Additionally,  for any fixed observation (or stopping) time $t_k$, the stopped and pre-stopped\footnote{The stopped $\sigma$-algebra \citep[Definition 2.37]{KarandikarRao2018} is defined as $\mathcal{F}_{\tau} = \{A \in \sigma(\cup_{t} \mathcal{F}_{t}) : A \cap \{\tau \leq t\} \in \mathcal{F}_{t} \, \forall t\}$, where $\tau$ is the stopping time. The pre-stopped $\sigma$-algebra \citep[Definition 8.1]{KarandikarRao2018} is defined as $\mathcal{F}_{\tau-} = \sigma\left(\mathcal{F}_0 \cup \{A \cap \{t < \tau\} : A \in \mathcal{F}_t, \, t < \infty\}\right)$, where $\tau$ is the stopping time.} $\sigma$-algebras at $t_k$ are
\begin{equation*}
\begin{split}
\mathcal{A}_{t_k} &:= \boldsymbol{\sigma}\left(U_{t_i, j}, t_i, M_{t_i} \,\middle|\, i\leq k,\, j \in \{1 \leq l \leq d_U | M_{t_i, l} = 1  \} \right), \\
\mathcal{A}_{t_k-} &:= \boldsymbol{\sigma}\left(U_{t_i, j}, t_i, M_{t_i}, t_k \,\middle|\, i < k,\, j \in \{1 \leq l \leq d_U | M_{t_i, l} = 1  \} \right).
\end{split}
\end{equation*} 

\subsection{Notation and Assumptions on the Stochastic Process \texorpdfstring{$Z$}{Z}}\label{sec:Notation and assumptions}
Since we are working with an input-output system, we are interested in the conditional expectation of the output process $V$, given the observation of the input process (collected in the information $\sigma$-algebra), denoted as $\hat{V}=(\hat V_t)_{0 \leq t \leq T}$ and defined via $$\hat{V}_t := \E[V_t | \A_t].$$ 
We define the interpolation of the observations of $\bar U:=(U_t,\tilde n_t, t)_{t \in [0,T]}$ made until time $t$, where $\tilde n_{t} := \left( \sum_k M_{k,j} \ind{t_k \leq t} \right)_{1\leq j \leq d_U}$ counts the coordinate-wise observations,  by $\tilde U^{\leq t}$ for any $0 \leq t \leq T$. At time $0 \leq s \leq T$, its $j$-th coordinate is given by
\begin{equation*}
\tilde U^{\leq t}_{s,j} := 
\begin{cases}
	\bar U_{t_{a(s,t,j)},j} \frac{t_{b(s,t,j)} - s}{t_{b(s,t,j)} - t_{b(s,t,j)-1}} + \bar U_{t_{b(s,t,j)},j} \frac{s - t_{b(s,t,j)-1}}{t_{b(s,t,j)} - t_{b(s,t,j)-1}}, & \text{if }  t_{b(s,t,j)-1} < s \leq t_{b(s,t,j)}, \\
	\bar U_{t_{a(s,t,j)},j}, & \text{if } s \leq t_{b(s,t,j)-1} , 
\end{cases}
\end{equation*}
where 
\begin{equation*}
\begin{split}
a(s,t, j) &:=  \max\{ 0 \leq a \leq n \vert t_a \leq \min(s,t), M_{t_a,j}  = 1 \}, \\
b(s,t,j) &:=  \inf\{ 1 \leq b \leq n \vert s \leq t_b \leq t, M_{t_b,j}  = 1 \},
\end{split}
\end{equation*}
letting $t_\infty := T$ and $\inf \emptyset := \infty$. Then,
$\tilde{U}^{\leq t}$ is a continuous version (without information leakage and with time-consistency) of the rectilinear (``forward fill'') interpolation of the observations.
According to this definition, $\tilde{U}^{\leq \tau(t)}$ is measurable with respect to $\mathcal{A}_{\tau(t)}$ for any $t$, and for any $r \geq t$ and $s \leq \tau(t)$, we have $\tilde{U}^{\leq \tau(t)}_s = \tilde{U}^{\leq \tau(r)}_s$.
As the paths of $\tilde{U}^{\leq \tau(t)}$ are piecewise linear, they belong to $BV^c([0,T])$, the set of continuous $\mathbb{R}^{d_U}$-valued paths of bounded variation on $[0,T]$ (cf.~\Cref{def:BV}).

We note that the Doob-Dynkin Lemma \citep[Lemma~1.14]{kallenberg2021foundations} implies the existence of measurable functions $F_j : [0,T] \times [0,T] \times BV^c([0,T]) \to \R$ such that $\hat V_{t,j} = F_j(t, \tau(t), \tilde U^{\leq \tau(t)}) $.

For the remainder of this article, we will assume that the following set of assumptions holds throughout. Those are a maximally weakened version of the assumptions made in \citet[Section~2.3]{krach2022optimal}. 

\begin{assumption} \label{assumption:1}
We assume that the input process  $U$ is fully observed at $t_0=0$.  Moreover, we assume that for every $1\leq k, l \leq \bar n$, $M_k$ is independent of  $t_l$ and of $n$ and $\P (M_{k,j} =1 ) > 0$ for all  $d_U+1 \leq j \leq d$ on the training set (during training, every coordinate of the output process $V$ can be observed with positive probability at any observation time).
\end{assumption}

\begin{assumption} \label{assumption:3}
Almost surely $V$ is  not observed at a jump, i.e., $\P( \{ t_j \in \mathcal{J} | j\leq n \} ) = \P( \{ \Delta V_{t_j} \neq 0 | j\leq n \} ) = 0$.
\end{assumption} 

\begin{assumption}\label{assumption:4}
$F_j$ are  continuous and differentiable in their first coordinate $t$ such that their partial derivatives with respect to $t$, denoted by $f_j$, are again continuous and the following integrability condition holds
\begin{small}
\begin{equation}\label{equ:assumption4 bound}
\E\left[ \frac{1}{n} \sum_{i=1}^n \left(|F_j(t_i, t_i, \tilde U^{\leq t_i})|^2 + |F_j(t_{i-1}, t_{i-1}, \tilde U^{\leq t_{i-1}})|^2 \right) +  \int_0^T | f_j(t, \tau(t), \tilde U^{\leq \tau(t)})  |^2 dt  \right] < \infty.
\end{equation}
\end{small}
\end{assumption} 
\begin{assumption}\label{assumption:5}
$U^\star$ is $L^{2}$-integrable, i.e., $\E[(U^\star_T)^2] < \infty$, and $\E\Big[\frac{1}{n}\displaystyle \sum_{i=1}^n V_{t_i}^2 \Big] < \infty$.
\end{assumption} 

\begin{assumption} \label{assumption:6}
The random number of observations $n$ is integrable, i.e., $\E[n] < \infty$.
\end{assumption} 

\begin{assumption}\label{ass:independence}
    The process $Z = (U,V)$ is independent of the observation framework consisting of the random variables $n, (t_i, M_i)_{i \in \N}$.
\end{assumption}

\begin{rem} \label{assumption:2}
The probability that any two observation times are closer than $\epsilon>0$ converges to $0$ when $\epsilon$ does, i.e., if $\delta_{\min}( \omega) := \min_{0 \leq i \leq n( \omega)} |t_{i+1}( \omega) - t_i( \omega)|$ then $\lim_{\epsilon \to 0} \P (\delta_{\min} < \epsilon) = 0$. This is equivalent to $\P(\delta_{\min}=0)=0$, which is satisfied when the observation times are strictly increasing.
\end{rem}
\begin{rem}\label{rem:ass1}
    \Cref{assumption:1} can easily be generalized to settings where $U_0$ is only partly or not observed \citep[see][Remark~2.2]{krach2022optimal}. Furthermore, the assumption that all coordinates can be observed at any observation time can be weakened as shown in \Cref{sec:Different observation times for input and output variables}.
\end{rem}

\begin{rem}\label{rem:cond indep}
    \Cref{ass:independence} puts us in the same setting as was hard-coded through the product probability space in \citet[Section~2.2]{krach2022optimal}. Moreover, the independence \Cref{assumption:1,ass:independence} can easily be replaced by conditional independence assumptions as formulated in \citet[Section~4]{andersson2024extending}.
\end{rem}

As shown in \citet[Proposition~2.5]{krach2022optimal} and also implied by \Cref{lem:L2 identity}, the conditional expectation process $\hat{V}$ is the optimal process in $L^2(\Omega, \mathbb{A} , \P)$ approximating  $(V_t)_{t \in [0,T]}$ and it is unique up to $\P$-null-sets. 

Throughout this paper, we will use the following distance functions (based on the observation times) between processes and define indistinguishability with them. This is a slight modification for the present setting from the corresponding definition in \citet[Equation~5 and Remark~2.8]{krach2022optimal}. 
The proposed pseudo metric controls on the one side the prediction before the jump at $t_k$ and on the other side the realized jump size.

\begin{definition}\label{def:indistinguishability}
Fix $r \in \N$ and set $c_0(k) := (\P (n \geq k))^{-1}$.
The family of (pseudo) metrics $d_k$, $1 \le k \le \bar n$,  for  two c\`adl\`ag $\mathbb{A}$-adapted processes $\eta, \xi  : [0,T] \times \Omega \to \R^{r}$  is defined as 
\begin{equation}\label{equ: pseudo metric}
    d_k (\eta, \xi) = c_0(k)\,  \E\left[ \1_{\{ k \le n\}} \left( | \eta_{t_k-} - \xi_{t_k-} |_1 + | \eta_{t_k} - \xi_{t_k} |_1 \right) \right].
\end{equation}
We call the processes \emph{indistinguishable at observation points}, if $d_k(\eta, \xi)=0$ for every $1 \leq k \leq \bar n$.
\end{definition}

    The motivation for this distance is that we can only approximate the target process $V$ at times, where we can potentially observe it. In particular, if $V$ is never observed within a certain time interval, i.e., all observation times have probability $0$ to lie within this interval, then it is impossible to learn the dynamics of $V$ within this time interval.
    It is important to note that observations of the target process $V$ (and hence \Cref{assumption:1}) are only needed for the training of the model, while inference works solely based on the observations of the input process $U$. Furthermore it seems important to stress that, since observation times are random and expectations are taken, \emph{all possible} observation times are taken into account in the above definition when indistinguishability is considered \citep[for more details see][Section~5.2]{andersson2024extending}.

\section{The Neural Jump ODE Model for Input-Output Processes}\label{sec:proposed method}
We want to model the relationship between an input process $U$ and an output process $V$, which may have different dimensions and exhibit complex dynamics. More precisely, we want to approximate the functions $F_j$, which map the observations of $U$ to the optimal prediction of $V$. 
To achieve this, we define the Input-Output Neural Jump ODE (IO NJODE) model, which leverages neural networks to capture the underlying structure of these processes. In the following, we outline the definition of the IO NJODE model, incorporating the necessary modifications to the original framework defined in \citet[Section~3.3]{krach2022optimal}.

One key component of the model is the truncated signature.  The intuition to rely on  signatures is the following: since a path of $X$ can be reconstructed from its signature, any statistic being a function of the path, can equivalently be represented as a function of the signature. The truncated signature contains the typically most relevant information up to the order $m$ and therefore serves as a tractable approximation of path information. More precisely, let $I$ denote a compact interval in $\R$ and ${X}:I\rightarrow\R^{\delta}$ be a continuous path with finite variation.
The truncated signature transforms this path ${X}$ into a finite-dimensional feature vector $\pi_m({X})$, i.e., the first $m+1$ terms (levels) of the signature of ${X}$ (see \Cref{sec:Signatures} for more details).
A well-known result in stochastic analysis states that continuous functions of finite variation paths can be uniformly approximated using truncated signatures. 
We will require a generalization of this result that allows for an additional finite-dimensional input, which was proven in \citet[Proposition 3.8]{krach2022optimal}.
For clarity, we provide the result and details in the Appendix in \Cref{prop:universal_approx_sig}.

\begin{rem}[Path dependence and explainability]
\label{rem:signature}
Incorporating path dependence via signatures can be viewed as a tractable extension from the Markov property to path dependence. Moreover, signatures decode the statistical properties of the observed paths in a very efficient and direct way - in particular the \emph{truncated signature} focuses on the first $m$ signature components, which are typically the most relevant ones. Since the employed neural networks will not be very deep, the resulting model is quite explicit and therefore contributes to the explainability of the approach.
\end{rem}

As a reviewer pointed out, there are different possibilities to use the signature for our purpose, as we explain in \Cref{rem:jumpyInterpolations,rem:logSignature}.

\begin{rem}\label{rem:jumpyInterpolations}
    Recent research \citep{cuchiero_universal_2025} allows to apply signatures directly to paths with jumps. Hence, we could use the standard rectilinear interpolation instead of our piecewise-linear construction. However, these results come at the price of more involved metrics.
\end{rem}

\begin{rem}\label{rem:logSignature}
    The log-signature \citep[Section~1.3.5]{Chevyrev2016APO} could be used instead of the full signature, since it is integrated in a neural network allowing for nonlinear combinations of the terms (signatures themselves have the universal approximation property with linear combinations, while log-signatures are universal only with nonlinear combinations). This could potentially improve the efficiency and therefore the performance of the model, however, a detailed analysis is left for future research.
\end{rem}

We introduce, following  \citet[Definition~3.12]{krach2022optimal},  a special class of neural networks with bounded outputs derived from any set of neural networks.
\begin{definition}\label{def:bounded output NN}
Define for dimension $\delta \in \N$  the Lipschitz continuous and bounded output activation function $\Gamma_\gamma$ with trainable parameter $\gamma > 0$ by
\begin{equation*}
\Gamma_\gamma : \R^\delta \to \R^\delta, x \mapsto x \cdot \min\left(1, \frac{\gamma}{|x|_2}\right)
\end{equation*}
and the class of \emph{bounded output neural networks} by
\begin{equation*}
\mathcal{N} := \{  f_{(\tilde \theta, \gamma)} := \Gamma_\gamma \circ \tilde f_{\tilde \theta} \, | \, \gamma > 0, \tilde f_{\tilde \theta} \in \tilde{\mathcal{N}}  \},
\end{equation*}
where $\tilde{\mathcal{N}}$ can be any set of neural networks. We use the notation $\tilde f_{\tilde \theta} \in \tilde{\mathcal{N}}$ and $ f_\theta \in \mathcal{N}$ for $\theta=(\tilde \theta, \gamma)$ to highlight the trainable weights $ \tilde \theta$ (and $\gamma$) of the respective (bounded output) neural networks.
\end{definition} 
The key property of the truncation function $\Gamma_\gamma$ is that $\Gamma_\gamma(x)=x$ for all $x$ with $|x|_2 \le \gamma$.

In the following, we consider $\tilde{\mathcal{N}}$ to be the class of feed-forward neural networks that use Lipschitz continuous activation functions. This implies that for any natural numbers $d_1, d_2$ and for any compact set $\mathcal{X} \subset \R^{d_1}$, the set $\tilde{\mathcal{N}}$ is a dense subset of the set of continuous functions $C(\mathcal{X} , \R^{d_2})$ in terms of the supremum norm. Moreover, we assume that all affine functions and, in particular, the identity are in $\tilde{\mathcal{N}}$. Recall that  $\pi_m$ is the truncated signature of order $m$. 

To the (random) observation times $t_1,\dots,t_n$ we associate the pure-jump  process
\begin{align*} 
J_t:=\sum_{i=1}^{n} 1_{\left[t_i, \infty\right)}(t), \quad 0 \le t \le T,
\end{align*}
which simply counts the observations until the current time $t$. Very useful is that the increment $dJ_t$  either vanishes or equals $\Delta J_t = 1$ (since we assumed that the observation times are \emph{strictly} increasing), which guarantees a high tractability of the following model class. 

\begin{definition}[IO NJODE]\label{def:PD-NJODE}
    The  \textit{Input-Output Neural Jump ODE} model is given by
    \begin{equation}\label{equ:PD-NJODE}
\begin{split}
H_0 &= \rho_{\theta_2}\left(0, 0, \pi_m (0), U_0 \right), \\
dH_t &= f_{\theta_1}\left(H_{t-}, t, \tau(t), \pi_m (\tilde U^{\leq \tau(t)} -U_0 ), U_0,\tilde U^{\star}_t, n_t,\delta_t \right) dt  \\
& \quad + \left( \rho_{\theta_2}\left( H_{t-}, t, \pi_m (\tilde U^{\leq \tau(t)}-U_0 ), U_0,\tilde U^{\star}_t, n_t,\delta_t \right) - H_{t-} \right) dJ_t, \\
G_t &= \tilde g_{\tilde \theta_3}(H_t).
\end{split}
\end{equation}
$f_{\theta_1},~\rho_{\theta_2} \in \mathcal{N}$ are bounded output feedforward neural networks and $\tilde g_{\tilde \theta_3} \in \tilde{\mathcal{N}}$ is a feedforward neural network. They have trainable weights $\theta = (\theta_1, \theta_2, \tilde \theta_3) \in \Theta$, where $\theta_i = (\tilde \theta_i, \gamma_i)$ for $i \in \{1,2 \}$ and the set of all potential weights for the NJODE is denoted by $\Theta$. 
\end{definition}

While $H=H^{\theta_1,\theta_2}$ governs the dynamic evolution of the model, the final output $G$ is obtained after the application of a (standard) feedforward neural network to $H$, $G = \tilde g_{\tilde \theta_3}(H^{\theta_1,\theta_2})$. To emphasize the dependence of the model output on the trainable parameters and its input, we will also  write $G^{\theta }(\tilde{U}^{\leq \tau(\cdot)})$.

\subsection{Objective Functions}
Let $\mathbb{D}$ represent the collection of all c\`adl\`ag processes that take values in $\R^{d_V}$ and are adapted to the filtration $\mathbb{A}$.
Define the objective functions $\Psi: \, \mathbb{D} \to \R$ and $L :   \Theta   \to \R$  as
\begin{align}
\Psi(\eta) &:= \E\left[ \frac{1}{n} \sum_{i=1}^n    \left\lvert \proj{V} (M_i) \odot ( V_{t_i} - \eta_{t_i} ) \right\rvert_2^2 + \left\lvert \proj{V} (M_i) \odot (V_{t_i} - \eta_{t_{i}-} ) \right\rvert_2^2  \right], \label{equ:Psi} \\
 L(\theta) &:= \Psi(G^{\theta}(\tilde{U}^{\leq \tau(\cdot)})), \label{equ:Phi}
\end{align}
where $\odot$ is the element-wise (Hadamard) product, $\proj{V}$ denotes the projection onto the coordinates corresponding to the output variable $V$. We call $L$  the \emph{theoretical} objective function.

We note that the objective functions differ from \citet[Equations~11 and~12]{krach2022optimal}, since here we square both terms separately. This is indeed a necessary adjustment to establish the same theoretical results  in the more general setting of input-output systems. In \Cref{sec:Implications of the choice of Objective Function}, we  discuss the different implications of the two definitions in more detail. 

\begin{rem}\label{rem:noise adapted loss}
    To allow for observation noise in the setting of input-output systems \citep[as in][Section~3]{andersson2024extending} we have to distinguish two cases. If there is only observation noise on the input process $U$ but not on $V$, the framework can be applied equivalently without any changes (as long as $U$ still satisfies all assumptions). Indeed, the model will learn to filter the information from $U$ to optimally predict $V$.
    However, if the target process $V$ has noisy observations, then \eqref{equ:Psi} would lead to learning and overfitting to the noise instead of filtering it. In this case, we need to adapt \eqref{equ:Psi} as in \citet[Section~3]{andersson2024extending}, as
    \begin{equation*}
    \Psi_{\text{noisy}}(\eta) := \E\left[ \frac{1}{n} \sum_{i=1}^n \left\lvert \proj{V} (M_i) \odot (O_{t_i} - \eta_{t_{i}-} ) \right\rvert_2^2  \right],
    \end{equation*}
    where $O_{t_i} = V_{t_i} + \epsilon_i$ are the noisy observations of the target process $V$. Indeed, this is consistent with the input-output setting upon using the weaker (pseudo) metrics of \citet[Definition~2.4]{andersson2024extending}. However, \eqref{equ:Psi} should be used in the case of noise-free observations of $V$, due to the higher ``inductive strength'' \citep[see][Section~0.5.1]{KrachPhDThesis}.
\end{rem}

For the definition of the \emph{empirical} objective function, we assume to have $N$ independent realizations of the process $Z$, of the observation mask $M$ and of observation times:
 let $Z^{(j)} \sim Z$, $M^{(j)} \sim M$ and $(n^{(j)}, t_1^{(j)}, \dotsc, t_{n^{(j)}}^{(j)}) \sim ( n, t_1, \dotsc, t_{n})$ be i.i.d random variables or processes, respectively. The training data is a single realization of these. We denote $G^{\theta, j} := G^{\theta }(\tilde{U}^{\leq \tau(\cdot), (j)})$. The empirical objective function $\hat L_N$ equals the Monte Carlo approximation of the loss function with those $N$ samples, given by
\begin{equation}\label{equ:appr loss function}
\hat L_N(\theta) := \frac{1}{N} \sum_{j=1}^N  \frac{1}{n^{(j)}}\sum_{i=1}^{n^{(j)}}   \left\lvert \proj{V}(M_{i}^{(j)}) \odot \left( V_{t_i^{(j)}}^{(j)} - G_{t_i^{(j)}}^{\theta, j } \right) \right\rvert_2^2   
 + \left\lvert \proj{V}(M_{i}^{(j)}) \odot \left( V_{t_i^{(j)}}^{(j)} - G_{t_{i}^{(j)}-}^{\theta, j } \right) \right\rvert_2^2 ,
\end{equation}
and converges $\P$-a.s. to $L(\theta)$ as $N \to \infty$, according to the law of large numbers (see \Cref{thm:MC convergence Yt}).

\section{Convergence Guarantees}\label{sec:Convergence Guarantees}
In this section, we present the main results establishing convergence of the IO NJODE model to the optimal prediction. We begin with the convergence result for the theoretical objective function $L$, followed by demonstrating convergence when using its Monte Carlo approximation $\hat{L}$, i.e., the implementable loss function. In particular, using either $L$ or $\hat L$, we show that the model output $G^\theta$ converges to the true conditional expectation $\hat V$ in the metrics $d_k$. Hence, in the limit, the model output is indistinguishable at observations from $\hat V$. Importantly, the convergence holds when evaluating the trained model on the training set or on an independent test set (see the proof for more details).

\begin{rem}\label{rem:learning long-term predictions}
    We prove convergence to the conditional expectation $\hat V_t = \E[V_t | \mathcal{A}_t]$, which is the optimal prediction of $V$ at time $t$ given \emph{all} information up to time $t$. Using the generalized training procedure of \citet[Section~2]{krach2024learning}, our model can learn the long-term predictions $\hat V_{t,s} = \E[V_t | \mathcal{A}_{s \wedge t}]$ of $V$ at time $t$ given the information only up to $\min(s,t)$, for any $s,t \in [0,T]$. Indeed, this enhanced training procedure applies equivalently in the case of input-output systems.
\end{rem}

For the proof of the convergence results, we follow \citet[Section~4]{krach2022optimal} and extend their results to our settings. 

\subsection{Convergence of the Theoretical Objective Function \texorpdfstring{$L$}{L}}

Initially, we recall the easy results of \citet[Lemma~4.2 and Lemma 4.3]{krach2022optimal} in the present setting, with a small extension which can be proven identically. The first lemma resides on the independence of $(n,M,t_1,\dots,t_n)$ and $V$ (\Cref{ass:independence}) together with the classical property of $L^2$-optimality of the conditional expectation. 
\begin{lemma}\label{lem:L2 identity}
For an $\mathbb{A}$-adapted process $\eta$, it holds that
\begin{multline*}
\E\left[\dfrac{1}{n} \sum_{i=1}^n \left\lvert \proj{V}(M_{t_i}) \odot ( V_{t_i} - \eta_{t_i-} ) \right\rvert_2^2\right] \\
	= \E\left[ \dfrac{1}{n}\sum_{i=1}^n \left\lvert \proj{V}(M_{t_i}) \odot ( V_{t_i} - \hat{V}_{t_i-} ) \right\rvert_2^2\right] + \E\left[\dfrac{1}{n}\sum_{i=1}^n \left\lvert \proj{V}(M_{t_i}) \odot (  \hat{V}_{t_i-} - \eta_{t_i-}) \right\rvert_2^2\right],
\end{multline*}
and
\begin{multline*}
\E\left[\dfrac{1}{n} \sum_{i=1}^n \left\lvert \proj{V}(M_{t_i}) \odot ( V_{t_i} - \eta_{t_i} ) \right\rvert_2^2\right] \\
	= \E\left[ \dfrac{1}{n}\sum_{i=1}^n \left\lvert \proj{V}(M_{t_i}) \odot ( V_{t_i} - \hat{V}_{t_i} ) \right\rvert_2^2\right] + \E\left[\dfrac{1}{n}\sum_{i=1}^n \left\lvert \proj{V}(M_{t_i}) \odot (  \hat{V}_{t_i} - \eta_{t_i}) \right\rvert_2^2\right] .
\end{multline*}
\end{lemma}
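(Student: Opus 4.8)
The plan is to prove each identity by the standard Pythagorean (orthogonality) argument for $L^2$-projections, the only real subtleties being the random number of observations, the masking, and the choice of the conditioning $\sigma$-algebra. The two identities have the same structure, so I would carry out the first (the $t_i-$ version) in detail and note that the second follows verbatim upon replacing $\mathcal{A}_{t_i-}$ by $\mathcal{A}_{t_i}$ and $\hat V_{t_i-}$ by $\hat V_{t_i}$.

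First I would insert $\hat V_{t_i-}$ and expand the square: writing $V_{t_i} - \eta_{t_i-} = (V_{t_i} - \hat V_{t_i-}) + (\hat V_{t_i-} - \eta_{t_i-})$ and using that the entries of $M_{t_i}$ lie in $\{0,1\}$, so that $\proj{V}(M_{t_i})_j^2 = \proj{V}(M_{t_i})_j$, the left-hand side splits into the two desired squared-error terms plus a cross term
$$C := 2\,\E\Big[ \tfrac{1}{n} \sum_{i=1}^n \sum_{j} \proj{V}(M_{t_i})_j\,(V_{t_i}-\hat V_{t_i-})_j\,(\hat V_{t_i-}-\eta_{t_i-})_j\Big].$$
It then remains to show $C = 0$. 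To deal with the random sum I would write $\tfrac1n\sum_{i=1}^n = \sum_{i=1}^\infty \tfrac1n \1_{\{i \le n\}}$ and interchange expectation and summation by Fubini--Tonelli, the required integrability of the summands following from \Cref{assumption:4,assumption:5,assumption:6} together with Cauchy--Schwarz (the same bound that makes $\Psi$ and $L$ finite).

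The core of the argument is to show that each summand vanishes by conditioning. For fixed $i$ I would condition on $\mathcal{G}_i := \mathcal{A}_{t_i-} \vee \sigma\big(M_{t_i}, n, (t_l, M_l)_{l}\big)$, i.e.\ all of $\mathcal{A}_{t_i-}$ enriched by the current mask and the entire observation framework. The weight $\tfrac1n \1_{\{i\le n\}}$, the factor $\proj{V}(M_{t_i})_j$, and --- since $\eta$ and $\hat V$ are $\mathbb{A}$-adapted and c\`adl\`ag --- the factor $(\hat V_{t_i-}-\eta_{t_i-})_j$ are all $\mathcal{G}_i$-measurable, so they pull out of the inner conditional expectation. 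Thus each summand reduces to the weight times $\proj{V}(M_{t_i})_j (\hat V_{t_i-}-\eta_{t_i-})_j\,\E[(V_{t_i}-\hat V_{t_i-})_j \mid \mathcal{G}_i]$, and it suffices to establish the key identity $\E[V_{t_i}\mid \mathcal{G}_i] = \hat V_{t_i-}$.

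I expect this key identity to be the main obstacle, and it is where \Cref{assumption:1,assumption:3,ass:independence} enter. Since $Z$ is independent of the observation framework (\Cref{ass:independence}) and $M_{t_i}$ is independent of the times and of $n$ (\Cref{assumption:1}), enriching $\mathcal{A}_{t_i-}$ by $M_{t_i}$ and the full framework contributes no information about $V_{t_i}$, so $\E[V_{t_i}\mid\mathcal{G}_i] = \E[V_{t_i}\mid\mathcal{A}_{t_i-}]$. It then remains to identify this with the left limit $\hat V_{t_i-}$: using $\hat V_{s,j} = F_j(s,\tau(s),\tilde U^{\le\tau(s)})$ with $\tau(s)=t_{i-1}$ on $[t_{i-1},t_i)$ and the continuity of $F_j$ in its first argument (\Cref{assumption:4}), the left limit equals $F_j(t_i,t_{i-1},\tilde U^{\le t_{i-1}})$, which --- because $t_i$ is itself part of $\mathcal{A}_{t_i-}$ and $V$ is a.s.\ not observed at a jump (\Cref{assumption:3}, so $V_{t_i-}=V_{t_i}$ a.s.\ at observation times) --- coincides with $\E[V_{t_i}\mid\mathcal{A}_{t_i-}]$; this is the step proved identically to \citet[Lemma~4.2 and~4.3]{krach2022optimal}. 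With the key identity in place, $\E[(V_{t_i}-\hat V_{t_i-})\mid\mathcal{G}_i]=0$, every summand vanishes, $C=0$, and the first identity follows. For the second identity the same computation applies with $\mathcal{G}_i := \mathcal{A}_{t_i}\vee\sigma\big(n,(t_l,M_l)_l\big)$, where $M_{t_i}$ is already $\mathcal{A}_{t_i}$-measurable and $\E[V_{t_i}\mid\mathcal{A}_{t_i}]=\hat V_{t_i}$ holds by definition of $\hat V$.
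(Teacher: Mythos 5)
Your proposal is correct and follows essentially the same route as the paper, which states the lemma without a detailed proof and attributes it to the independence of $(n,M,t_1,\dots,t_n)$ from $V$ together with the $L^2$-orthogonality of the conditional expectation (citing \citet[Lemmas~4.2 and~4.3]{krach2022optimal}); your expansion of the square, the vanishing of the cross term by conditioning on the enriched $\sigma$-algebra, and the identification $\E[V_{t_i}\mid\mathcal{A}_{t_i-}]=\hat V_{t_i-}$ via \Cref{assumption:3,assumption:4,ass:independence} are precisely the ingredients of that argument.
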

Observe that an immediate consequence of this result is that $\hat V$ is $L^2$ optimal in the sense
\begin{align}\label{eq:L2 optimailty}
    \E\left[\dfrac{1}{n} \sum_{i=1}^n \left\lvert \proj{V}(M_{t_i}) \odot ( V_{t_i} - \hat V_{t_i} ) \right\rvert_2^2\right] &= 
    \min_{\eta \in \mathbb{D}}
    \E\left[\dfrac{1}{n} \sum_{i=1}^n \left\lvert \proj{V}(M_{t_i}) \odot ( V_{t_i} - \eta_{t_i} ) \right\rvert_2^2\right].
\end{align}

The next result 
derives a highly tractable, linear representation of certain piecewise constant functions of an additional, independent random variable.

\begin{lemma}\label{lem:expectation weighted sum over t_i terms}
Consider a stochastic process $\varphi=(\varphi_t)_{t \in T}$, bounded from below, an independent random variable $Y \sim \operatorname{Unif}([0,1])$ and let  $\bar t := \sum_{i=1}^n \1_{(\frac{i-1}{n}, \frac{i}{n}]}(Y) \, t_i$. Then, 
    \begin{equation*}
        \E [\varphi\left(\bar t\right)] = \E\Big[\sum_{i=1}^n \frac{1}{n} \varphi(t_i)\Big].
    \end{equation*}
    Similarly, if $\bar t := \sum_{i=1}^n \1_{(\frac{i-1}{2n}, \frac{i}{2n}]}(Y) \, t_i + \sum_{i=1}^n \1_{(\frac{n+i-1}{2n}, \frac{n+i}{2n}]}(Y) \, t_{i-1}$, 
    \begin{equation*}
        \E [\varphi\left(\bar t\right)] = \frac{1}{2}\bigg( \E\Big[\sum_{i=1}^n \frac{1}{n} \varphi(t_i)\Big] + \E\Big[\sum_{i=1}^n \frac{1}{n} \varphi(t_{i-1})\Big] \bigg).
    \end{equation*}
\end{lemma}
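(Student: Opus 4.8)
The plan is to condition on all the randomness except $Y$ and to exploit that, given this information, $\bar t$ equals $t_i$ precisely on the event $\{Y \in (\frac{i-1}{n}, \frac{i}{n}]\}$, whose conditional probability is the length $\frac1n$ of the interval because $Y$ is independent and uniform. Before doing so, I would reduce to the case $\varphi \geq 0$. Since $\varphi$ is bounded from below, there is a constant $c$ with $\varphi_t \geq -c$ everywhere; replacing $\varphi$ by $\varphi + c$ makes the integrand nonnegative, and since $\frac1n \sum_{i=1}^n c = c$ the additive constant contributes $+c$ to both sides and cancels. This reduction also renders both sides well-defined in $[0,\infty]$, so that possibly infinite expectations cause no trouble and Tonelli's theorem is applicable throughout.

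Next, let $\mathcal{G} := \boldsymbol{\sigma}(n, (t_i)_{i}, \varphi)$ denote the $\sigma$-algebra generated by everything except $Y$, so that $Y$ is independent of $\mathcal{G}$. On $\mathcal{G}$ the integer $n$ and the times $t_1, \dots, t_n$ are fixed, and the intervals $(\frac{i-1}{n}, \frac{i}{n}]$, $i = 1, \dots, n$, form a measurable partition of $(0,1]$, each of length $\frac1n$. Writing $\varphi(\bar t) = \sum_{i=1}^n \varphi(t_i)\, \1_{(\frac{i-1}{n}, \frac{i}{n}]}(Y)$ and using the independence and uniformity of $Y$, I obtain
\[
\E[\varphi(\bar t) \mid \mathcal{G}] = \sum_{i=1}^n \varphi(t_i) \, \P\!\left(Y \in (\tfrac{i-1}{n}, \tfrac{i}{n}] \,\middle|\, \mathcal{G}\right) = \sum_{i=1}^n \frac1n \varphi(t_i).
\]
Taking expectations and using the tower property then yields the first identity. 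For the second identity the only change is the partition: the $2n$ intervals $(\frac{i-1}{2n}, \frac{i}{2n}]$ and $(\frac{n+i-1}{2n}, \frac{n+i}{2n}]$, $i = 1, \dots, n$, again partition $(0,1]$ but now each has length $\frac{1}{2n}$, so the same conditioning argument gives weights $\frac{1}{2n}$ on $\varphi(t_i)$ and on $\varphi(t_{i-1})$, which is exactly the claimed average.

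The only genuine subtlety is measurability: I must ensure that $\omega \mapsto \varphi_{\bar t(\omega)}(\omega)$ is a bona fide random variable so that the conditional expectation is defined. This is where joint measurability of the process $\varphi$ (which holds for all the concrete integrands to which the lemma is applied, being continuous or c\`adl\`ag in $t$) together with measurability of the random index selecting $\bar t$ from $(Y,n)$ enters. Once this is in place, the decomposition of $\varphi(\bar t)$ into the finite indicator-weighted sum is exact, and the remaining steps are routine applications of the tower property and Tonelli's theorem.
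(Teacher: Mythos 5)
Your proof is correct and follows essentially the same route as the paper's: both rest on the decomposition $\varphi(\bar t)=\sum_{i=1}^n \varphi(t_i)\,\1_{(\frac{i-1}{n},\frac{i}{n}]}(Y)$ and on integrating out the independent uniform $Y$ to produce the weights $\tfrac1n$, the paper doing this via Fubini--Tonelli ($\E[\varphi(\bar t)]=\E[\int_0^1\varphi(\cdot)\,dy]$) and you via conditioning on $\mathcal{G}=\boldsymbol{\sigma}(n,(t_i)_i,\varphi)$ and the tower property, which is the same computation in different notation. Your explicit reduction to $\varphi\geq 0$ and the remark on joint measurability make precise what the paper's opening sentence (``the expectation is well-defined, although possibly infinite'') leaves implicit, but they do not change the argument.
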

For intuition, we provide the short proof of this result, following \citet[Lemma~4.3]{krach2022optimal}.
\begin{proof}
Since $\varphi$ is bounded from below, the expectation is well-defined, although possibly infinite. Furthermore, by independence 
\begin{align*}
    \qquad\qquad\qquad \qquad\qquad \E [\varphi\left(\bar t\right)] &= 
    \E \bigg[ \int_0^1 \varphi\bigg(\sum_{i=1}^n \1_{(\frac{i-1}{n}, \frac{i}{n}]}(y) \, t_i \bigg) dy \bigg] \\
    &= \int_\Omega \int_0^1 \varphi\bigg(\sum_{i=1}^{n(\omega)} \1_{(\frac{i-1}{n(\omega)}, \frac{i}{n(\omega)}]}(y) \, t_i(\omega) \,,\, \omega\bigg) dy \, d\P \\
    &= \int_\Omega \sum_{i=1}^{n(\omega)} \int_{(\frac{i-1}{n(\omega)}, \frac{i}{n(\omega)}]} \varphi(t_i(\omega),\omega) \,dy \, d\P = \E\Big[\sum_{i=1}^n \frac{1}{n} \varphi(t_i)\Big].  \qquad \qquad\qquad\qquad\qedhere
\end{align*}
\end{proof}

Now, we can state the first main convergence result which shows that minima of  the theoretical loss function $L$ defined in Equation \eqref{equ:Phi} converge to the minimal value of $\Psi$ defined in Equation \eqref{equ:Psi}. The result is adapted from \citet[Theorem~4.1]{krach2022optimal}. We let $\Theta_m \subset \Theta$ be the set of combined weights for all 3 neural networks of the IO NJODE that are bounded in 2-norm by $m$ and correspond to neural networks of width and depth bounded by $m$. Overall, the model has the weights $\theta = (\theta_1, \theta_2, \tilde \theta_3) \in \Theta_m$ with $\theta_i = (\tilde{\theta}_i, \gamma_i)$ for $i \in \{ 1,2\}$.
We denote by $\Theta_m^i$ and $\tilde{\Theta}_m^i$ the projection of the set $\Theta_m$ on the weights $\theta_i$ and $\tilde{\theta}_i$, respectively.

\begin{theorem}\label{thm:1}
Assume that Assumptions~\ref{assumption:1} to \ref{assumption:6} hold and
let $\theta^{\min}_m \in \Theta_m^{\min} := \displaystyle \argmin _{\theta \in \Theta_m}\{ L(\theta) \}$ for every $m \in \N$.  
Then, for $m \to \infty$, the value of minima of the loss function $L(\theta_m^{\min})$  converge to the minimal value of $\Psi$, which is uniquely achieved by $\hat{V}$ up to indistinguishability on observation points, i.e.,
\begin{equation*}
L(\theta_m^{\min}) \xrightarrow{m \to \infty} \min_{\eta \in \mathbb{D}} \Psi(\eta) = \Psi(\hat{V}).
\end{equation*}
Furthermore, for every $1 \leq k \leq \bar n$, $G^{\theta_m^{\min}}$ converges to $\hat{V}$ in the pseudo metric $d_k$  as $m \to \infty$.
\end{theorem}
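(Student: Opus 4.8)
The plan is to reduce the statement to a universal-approximation problem at the observation points and then to build an explicit sequence of IO NJODEs attaining the reduction. First I would apply \Cref{lem:L2 identity} to both the left-limit and the value terms of $\Psi$, obtaining the orthogonal decomposition $\Psi(\eta)=\Psi(\hat V)+\E[\frac1n\sum_{i=1}^n(|\proj{V}(M_i)\odot(\hat V_{t_i}-\eta_{t_i})|_2^2+|\proj{V}(M_i)\odot(\hat V_{t_i-}-\eta_{t_i-})|_2^2)]$ for every $\mathbb{A}$-adapted $\eta$. This immediately yields $\min_\eta\Psi(\eta)=\Psi(\hat V)$ together with uniqueness up to indistinguishability at observations, and reduces both claims to driving the residual term to $0$: it then suffices to exhibit weights $\theta_m$ with $L(\theta_m)\to\Psi(\hat V)$ and to transfer this to the minimizers by sandwiching. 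The structural input is the representation $\hat V_{t,j}=F_j(t,\tau(t),\tilde U^{\leq\tau(t)})$ together with \Cref{assumption:4}, which makes each $F_j$ continuous and $t$-differentiable with continuous derivative $f_j=\partial_t F_j$.

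The core of the argument, which I expect to be the main obstacle, is the explicit construction of $\theta_m$. I would take the readout $\tilde g_{\tilde\theta_3}$ to be the identity (affine maps lie in $\tilde{\mathcal N}$) so that $G\equiv H$, and arrange that $H$ tracks $\hat V$. At each observation time the jump map $\rho_{\theta_2}$ resets $H_{t_k}$ to an approximation of $\hat V_{t_k}=F(t_k,t_k,\tilde U^{\leq t_k})$; by the signature universal-approximation result (\Cref{prop:universal_approx_sig}) combined with feed-forward density, $\rho_{\theta_2}$ can approximate $F$ as a function of $(t_k,\pi_m(\tilde U^{\leq t_k}-U_0),U_0,\dots)$ uniformly on a compact set. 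Between observations I would make the drift $f_{\theta_1}$ independent of $H$ and approximate $f(t,\tau(t),\tilde U^{\leq\tau(t)})=\partial_t F$, so that $H_t-\hat V_t=(H_{t_k}-\hat V_{t_k})+\int_{t_k}^{t}(f_{\theta_1}-f)\,ds$ and hence $|H_t-\hat V_t|\le|H_{t_k}-\hat V_{t_k}|+\int_{t_k}^{t}|f_{\theta_1}-f|\,ds$, avoiding a Gronwall step entirely. The uniform approximations hold on a compact set of signature features built from the high-probability event $\{U^\star_T\le R,\ n\le N_0,\ \delta_{\min}\ge\epsilon\}$, controlled by \Cref{assumption:5,assumption:6} and \Cref{assumption:2}; the bounded-output activation $\Gamma_\gamma$ keeps $H$ (hence $G$) bounded so the estimates remain valid, and the contribution of the complementary low-probability event is controlled by the integrability bounds of \Cref{assumption:4,assumption:5}.

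With the construction in hand I would sum the pointwise squared errors over $i=1,\dots,n$ with the $\frac1n$ weight, using $\E[n]<\infty$ (\Cref{assumption:6}) and the integrability of \Cref{assumption:4,assumption:5} to exchange limits and sums, concluding $L(\theta_m)=\Psi(G^{\theta_m})\to\Psi(\hat V)$. Since the admissible sets $\Theta_m$ are nested in $m$, the minimizers satisfy $\Psi(\hat V)=\min_\eta\Psi(\eta)\le L(\theta_m^{\min})\le L(\theta_m)\to\Psi(\hat V)$, which proves the first claim.

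For the $d_k$ convergence I would return to the decomposition: $L(\theta_m^{\min})\to\Psi(\hat V)$ forces $\E[\frac1n\sum_{i=1}^n(|\proj{V}(M_i)\odot(\hat V_{t_i}-G_{t_i}^{\theta_m^{\min}})|_2^2+|\proj{V}(M_i)\odot(\hat V_{t_i-}-G_{t_i-}^{\theta_m^{\min}})|_2^2)]\to0$. Fixing $k$ and keeping only the $k$-th (nonnegative) summand gives $\E[\1_{\{k\le n\}}\frac1n(\dots)]\to0$; I would then remove the mask using the positive observation probabilities and the independence of the masking in \Cref{assumption:1,ass:independence}, split $\{k\le n\}$ into $\{k\le n\le N_0\}$ (where $\frac1n\ge\frac1{N_0}$) and $\{n>N_0\}$ (controlled by uniform integrability of the residual), and pass from the squared $2$-norm to the $1$-norm appearing in $d_k$, obtaining $d_k(G^{\theta_m^{\min}},\hat V)\to0$ for each $k$. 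I expect the hardest parts to be the joint signature approximation of both $F$ and $\partial_t F$ with the accompanying compact-set and tail decomposition, and this final passage from the $\frac1n$-averaged loss to the per-index pseudometric $d_k$, where the independence and integrability assumptions are genuinely used.
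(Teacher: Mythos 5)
Your overall architecture coincides with the paper's: orthogonal decomposition of $\Psi$ via \Cref{lem:L2 identity}, an explicit construction of near-optimal weights based on \Cref{prop:universal_approx_sig} with identity readout and a telescoping (Gronwall-free) bound between observations, sandwiching of the minimizers, and finally extraction of $d_k$-convergence from the residual. However, there is one genuine gap in the construction step. You handle the complement of the high-probability compact event $\{U^\star_T\le R,\ n\le N_0,\ \delta_{\min}\ge\epsilon\}$ by asserting that its contribution is ``controlled by the integrability bounds of \Cref{assumption:4,assumption:5}.'' Those assumptions control $\E[|F|^2\,\1_{\mathrm{bad}}]$ and $\E[(\int_0^T|f|\,dt)^2\,\1_{\mathrm{bad}}]$, but they say nothing about the network output on the bad event. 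There the only available bound is the truncation level $\gamma_i(\epsilon)$ of the bounded-output networks, which is the supremum of the approximating network over the growing compact set $\mathcal{P}_\epsilon$ and hence may diverge as $\epsilon\to 0$. Thus the bad-event term is only bounded by $\gamma_i(\epsilon)^2\,\P(\mathrm{bad}(\epsilon))$, and there is no a priori relation between the growth of $\gamma_i(\epsilon)$ and the decay of $\P(\mathrm{bad}(\epsilon))$; the product need not vanish. The paper closes exactly this hole by feeding $(\tilde U^\star_t, n_t, \delta_t)$ as extra inputs, forming $\upsilon = \bar f\,\1_{D^\complement}$ (and the analogue for $\rho$), approximating these in $L^2$ of the relevant push-forward measures by second networks, and subtracting them, so that the constructed network is approximately zero off the compact set. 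Without this (or an equivalent device), your sequence $\theta_m$ is not shown to satisfy $L(\theta_m)\to\Psi(\hat V)$.

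A secondary, more easily repaired point: in the passage from the $\tfrac1n$-averaged residual to the per-index pseudometric $d_k$, your split into $\{k\le n\le N_0\}$ and $\{n>N_0\}$ invokes ``uniform integrability of the residual'' over $m$, which is not established (the residual involves $G^{\theta_m^{\min}}$, whose output bound grows with $m$). The paper instead uses the one-line H\"older estimate $\E[\1_{\{k\le n\}}|\xi_{t_k}|_1]\le \E[n]^{1/2}\,\E[\tfrac1n\,\1_{\{k\le n\}}|\xi_{t_k}|_1^2]^{1/2}$ together with \Cref{assumption:6} and the mask-removal constant from \Cref{assumption:1}, which avoids any uniform-integrability claim; you should substitute this for your tail split. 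The remainder of your plan (Step 1 decomposition and uniqueness, the identity readout, the jump/drift approximation targets $F$ and $\partial_t F$, and the sandwich $\Psi(\hat V)\le L(\theta_m^{\min})\le L(\theta_m)$) matches the paper's proof.
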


\begin{proof}
\textbf{Step 1:} We start by showing that the objective function $\Psi$ has the unique minimizer $\hat{V} \in \mathbb{D}$ up to indistinguishability. We first show that $\hat{V}$ is a minimizer. 
By  \Cref{assumption:3}, $
\P( \{ \Delta V_{t_i} \neq 0 | i\leq n \} )=0$, such that
$V_{t_{i^-}}= V_{t_i}$ with probability one. 
Then, \Cref{lem:L2 identity} together with  \Cref{{eq:L2 optimailty}} implies that
\begin{small}
\begin{align*}
    \Psi(\hat{V}) &= \E\bigg[ \frac{1}{n} \sum_{i=1}^n  \left(  \left\lvert \proj{V}(M_{t_i}) \odot ( V_{t_i} - \hat{V}_{t_i} ) \right\rvert_2^2 + \left\lvert \proj{V} (M_i) \odot (V_{t_i} - \hat{V}_{t_{i}-} ) \right\rvert_2^2 \right) \bigg]\\
    & =  \min_{\eta \in \mathbb{D}} \E\bigg[ \frac{1}{n} \sum_{i=1}^n  \left(  \left\lvert \proj{V} (M_i) \odot ( V_{t_i} - \eta_{t_{i}} ) \right\rvert_2^2 \right) \bigg] 
    + \min_{\eta \in \mathbb{D}} \E\bigg[ \frac{1}{n} \sum_{i=1}^n  \left(  \left\lvert \proj{V} (M_i) \odot ( V_{t_i} - \eta_{t_{i}-} ) \right\rvert_2^2 \right) \bigg]\\
    & \leq   \min_{\eta \in \mathbb{D}} \E\bigg[ \frac{1}{n} \sum_{i=1}^n  \left(  \left\lvert \proj{V} (M_i) \odot ( V_{t_i} - \eta_{t_{i}} ) \right\rvert_2^2  + \left\lvert\proj{V} (M_i) \odot ( V_{t_i} - \eta_{t_{i}-} ) \right\rvert_2^2 \right) \bigg]\\
    &= \min_{\eta \in \mathbb{D}} \Psi(\eta).
\end{align*}
\end{small}
Next, we show uniqueness at observation points. We start by deriving some needed results. Let $c_1 := \E[n]^{\frac{1}{2}} \in (0,\infty)$, then we use the Hölder inequality with the fact that $n(\omega) \geq 1$ $\P$-a.s.~to get
\begin{equation}\label{equ:HI}
    \E\left[ \left\lvert \xi \right\rvert_2 \right] 
	= \E\left[ \frac{\sqrt{n}}{\sqrt{n}} \left\lvert \xi \right\rvert_2 \right] 
	\leq c_1 \, \E\left[ \frac{1}{n} \left\lvert \xi \right\rvert_2^2 \right]^{1/2}.
\end{equation}
According to \Cref{assumption:1},  $c_2(k) := \min_{d_U +1 \leq j \leq d} \P (M_{k,j} = 1) > 0$. 
Then, we have for any $1 \leq k \leq \bar n$ by the independence of $\proj{V}(M_{k, j})$ from $t_k$, $n$ and $\mathcal{A}_{t_k-}$ that
\begin{small}
\begin{equation*}
\begin{split}
    \E &\left[ \1_{\{k \leq n\}} \bigg(\left\lvert \proj{V}(M_{t_k}) \odot ( \hat{V}_{t_k-} - \eta_{t_k-} ) \right\rvert_1 + \left\lvert \proj{V}(M_{t_k}) \odot ( \hat{V}_{t_k} - \eta_{t_k} ) \right\rvert_1 \bigg)\right] \\
    &= \E\bigg[ \1_{\{k \leq n\}} \sum_{j=d_U + 1}^{d} M_{k, j} \Big( \left\lvert  \hat{V}_{t_k-,j} - \eta_{t_k-,j}  \right\rvert +  \left\lvert  \hat{V}_{t_k, j} - \eta_{t_k, j}  \right\rvert \Big) \bigg] \\
    &=\sum_{j=d_U + 1}^{d} \E\left[ M_{k, j} \right] \, \E\left[ \1_{\{k \leq n\}}\bigg(  \left\lvert  \hat{V}_{t_k-,j} - \eta_{t_k-,j}  \right\rvert + \left\lvert  \hat{V}_{t_k, j} - \eta_{t_k, j}  \right\rvert \bigg) \right] \\
    &\geq  c_2 (k)\, \E\bigg[ \1_{\{k \leq n\}} \bigg(  \left\lvert  \hat{V}_{t_k-} - \eta_{t_k-}  \right\rvert_1 + \left\lvert  \hat{V}_{t_k} - \eta_{t_k}  \right\rvert_1 \bigg) \bigg] .
 \end{split}
\end{equation*}
\end{small}
By the equivalence between $1$-norm and $2$-norm, we  have for some constant $c_3 >0$
\begin{small}
\begin{equation}\label{equ:M split}
\begin{split}
 \E&\left[ \1_{\{k \leq n\}} \bigg(  \left\lvert  \hat{V}_{t_k-} - \eta_{t_k-}  \right\rvert_2 + \left\lvert  \hat{V}_{t_k} - \eta_{t_k}  \right\rvert_2 \bigg) \right] \\
  & \leq \frac{c_3}{c_2(k)} \E\left[ \1_{\{k \leq n\}} \bigg(\left\lvert \proj{V}(M_{t_k}) \odot ( \hat{V}_{t_k-} - \eta_{t_k-} ) \right\rvert_2  +  \left\lvert \proj{V}(M_{t_k}) \odot ( \hat{V}_{t_k} - \eta_{t_k} ) \right\rvert_2 \bigg) \right].
\end{split}
\end{equation}
\end{small}
To finish the last part of this first step, consider $\eta \in \mathbb{D}$ which is  not indistinguishable from $\hat{V}$ at observations. Hence, by \Cref{def:indistinguishability}, there exists some $k \in \{1,\dots,\bar n \}$ such that $d_k(\hat{V},\eta)>0.$ We have by \Cref{lem:L2 identity}
\begin{small}
\begin{equation*}\label{equ:thm1-lower-bound-eta}
\begin{split}
    \Psi(\eta) &= \E\left[\frac{1}{n}\sum_{i=1}^n \left( \left\lvert \proj{V}(M_{i}) \odot (V_{t_i} - \eta_{t_i} ) \right \rvert_2^2 + \left\lvert \proj{V}(M_{i}) \odot ( V_{t_i} - \eta_{t_i-} ) \right \rvert_2^2 \right)\right]\\
    &= \E\left[ \dfrac{1}{n}\sum_{i=1}^n \left\lvert \proj{V}(M_{t_i}) \odot ( V_{t_i} - \hat{V}_{t_i-} ) \right\rvert_2^2\right] + \E\left[\dfrac{1}{n}\sum_{i=1}^n \left\lvert \proj{V}(M_{t_i}) \odot (  \hat{V}_{t_i-} - \eta_{t_i-}) \right\rvert_2^2\right]\\ 
    &\quad +   \E\left[ \dfrac{1}{n}\sum_{i=1}^n \left\lvert \proj{V}(M_{t_i}) \odot ( V_{t_i} - \hat{V}_{t_i} ) \right\rvert_2^2\right] + \E\left[\dfrac{1}{n}\sum_{i=1}^n \left\lvert \proj{V}(M_{t_i}) \odot (  \hat{V}_{t_i} - \eta_{t_i}) \right\rvert_2^2\right]\\
    &= \Psi(\hat{V}) + \E\left[\dfrac{1}{n}\sum_{i=1}^n \left\lvert \proj{V}(M_{t_i}) \odot (  \hat{V}_{t_i-} - \eta_{t_i-}) \right\rvert_2^2 +  \proj{V}(M_{i}) \odot (\hat {V}_{t_i} - \eta_{t_i} ) \bigg\rvert_2^2\right],
\end{split}
\end{equation*}
\end{small}
and it is now easy to see that the second addend is greater than $0$: indeed, we have
\begin{align}
    \E & \left[\dfrac{1}{n}\sum_{i=1}^n \left\lvert \proj{V}(M_{t_i}) \odot (  \hat{V}_{t_i-} - \eta_{t_i-}) \right\rvert_2^2 +  \left\lvert \proj{V}(M_{i}) \odot (\hat {V}_{t_i} - \eta_{t_i} ) \right\rvert_2^2\right] \notag \\
    &=\E  \bigg[\dfrac{1}{n}\sum_{i=1}^{\bar n} \1_{\{ i \leq n \}} \bigg( \left\lvert \proj{V}(M_{t_i}) \odot (  \hat{V}_{t_i-} - \eta_{t_i-}) \right\rvert_2^2 + \left\lvert \proj{V}(M_{i}) \odot (\hat {V}_{t_i} - \eta_{t_i} ) \right\rvert_2^2 \bigg)\bigg] \notag \\
    & \geq \E\left[\tfrac{1}{n}  \1_{\{ k \leq n \}} \left( \left\lvert \proj{V}(M_{t_k}) \odot (  \hat{V}_{t_k-} - \eta_{t_k-}) \right\rvert_2^2 + \left\lvert \proj{V}(M_{t_k}) \odot ( \hat{V}_{t_k} - \eta_{t_k}) \right\rvert_2^2 \right)\right] \notag\\ 
    & \geq \dfrac{1}{c_1^{2}} \bigg(\E\left[ \1_{\{ k \leq n \}} \left\lvert \proj{V}(M_{t_k}) \odot (  \hat{V}_{t_k-} - \eta_{t_k-}) \right\rvert_2\right]^2 
     + \E\left[ \1_{\{ k \leq n \}} \left\lvert \proj{V}(M_{t_k}) \odot (  \hat{V}_{t_k} - \eta_{t_k}) \right\rvert_2\right]^2 \bigg) 
     \label{temp:676}
\end{align}
using  \Cref{equ:HI} for both addends separately. Moreover,
\begin{align}\label{equ:thm1-positive expectation}
     \eqref{temp:676}
    & \geq \dfrac{1}{2 c_1^{2}} \E\left[ \1_{\{ k \leq n \}} \left( \left\lvert \proj{V}(M_{t_k}) \odot (  \hat{V}_{t_k-} - \eta_{t_k-}) \right\rvert_2 +  \left\lvert \proj{V}(M_{t_k}) \odot (  \hat{V}_{t_k} - \eta_{t_k}) \right\rvert_2 \right)\right]^2 \notag \\   
    & \geq \frac{1}{2} \left( \frac{c_2(k)}{c_1 c_3} \right)^2  \E\left[ \1_{\{ n \geq k \}} \left( \left\lvert    \hat{V}_{t_k-} - \eta_{t_k-} \right\rvert_2 +  \left\lvert   \hat{V}_{t_k} - \eta_{t_k} \right\rvert_2 \right)\right]^2  = \frac{1}{2} \left( \frac{c_2(k)}{c_0(k) c_1 c_3} \right)^2 d_k(\hat{V}, \eta)^2 > 0, \notag
\end{align} 
where we have used the Cauchy-Schwarz inequality $(a+b)^2 \leq 2a^2+2b^2$,  Equation \eqref{equ:M split}  and finally \eqref{equ: pseudo metric}.
This gives  $\Psi(\eta) > \Psi(\hat{V})$, and therefore  $\Psi$ has the unique minimizer $\hat{V}$. 

\textbf{Step 2:} Bounding the distance between $G^{\theta_m^\star}$ and $\hat{V}$.

For the second step, we start by showing that the IO NJODE model \eqref{equ:PD-NJODE} can approximate $\hat{V}$ arbitrarily well. Let us fix the dimension $d_H :=d_V$ and choose $\tilde \theta_3^\star$ such that $ \tilde g_{\theta_3^{\star}} = \id$. We define for $\varepsilon > 0$,   $N_\varepsilon := \lceil 2 (T+1) \varepsilon^{-2} \rceil  $,\footnote{$\lceil \cdot \rceil$ is the ceiling function.} and $\mathcal{P}_\varepsilon$ as the closure of piecewise linear functions $A_{ N_\varepsilon  }$ bounded by $N_\varepsilon$, defined in Proposition~\ref{prop:compact subset BV}, which is compact. Using Assumption \ref{assumption:4}, for any $j \in [d_U+1,d]$, the continuous function $F_j$ representing $\hat V_j$  can be written as $ F_j(t, \tau(t), \tilde U^{\leq \tau(t) } -U_0, U_0 )$. We obtain the approximation $\hat F_j$ which is a function of the truncated signature only  by Proposition~\ref{prop:universal_approx_sig}, i.e., there exists $m_0 = m_0(\varepsilon) \in \N$ and  continuous  $\hat F_j$ such that
\begin{equation*}
\sup_{(t, \tau, U) \in [0,T]^2\times \mathcal{P}_\varepsilon } \left| F_j(t, \tau, U ) - \hat F_j(t, \tau, \pi_{m_0}( U -U_0 ), U_0 )\right| \leq \varepsilon/2.
\end{equation*}
Due to the uniformly bounded variation of functions in $\mathcal{P}_\varepsilon$, the set of truncated signatures $\pi_{m_0}(\mathcal{P}_\varepsilon)$ is bounded in $\R^{d'}$ for some ${d'} \in \N$ (depending on $d_U$ and $m_0$). Its closure, denoted by $\Pi_\varepsilon$, is compact. By the universal approximation theorem for neural networks \citep[Theorem 2.4]{10.5555/70405.70408}, there exists $m_{1,1} \in \N$ and neural network weights $\tilde{\theta}_{1,1}^{\star, m_{1,1}} \in {\tilde{\Theta}_{m_{1,1}}^{1}}$ such that for each $j \in [d_U+1,d]$ the function $\hat F_j$ can be estimated by the $j$-th coordinate of the neural network $\tilde f_{\tilde{\theta}_{1,1}^{\star, m_{1,1}}} \in \tilde{\mathcal{N}}$ on the compact set $[0,T]^2\times \Pi_\varepsilon$ within $\varepsilon/2$. Thus, we obtain
\begin{align*}
    &\sup_{(t, \tau, U) \in [0,T]^2\times \mathcal{P}_\varepsilon } \left| F_j(t, \tau, U ) -  \tilde f_{\tilde \theta_1^{\star, m_1}, j}  (t, \tau, \pi_{m_0}( U -U_0 ), U_0 )\right| \notag\\ 
    & \  \qquad \leq \sup_{(t, \tau, U) \in [0,T]^2\times \mathcal{P}_\varepsilon } \left| F_j(t, \tau, U ) -\hat F_j(t, \tau, \pi_{m_0}( U -U_0 ), U_0 )\right|\notag\\ 
    & \ \qquad + \sup_{(t, \tau, U) \in [0,T]^2\times \mathcal{P}_\varepsilon } \left| \hat F_j(t, \tau, \pi_{m_0}( U -U_0 ), U_0 ) -\tilde f_{\tilde \theta_1^{\star, m_1}, j}  (t, \tau, \pi_{m_0}( U -U_0 ), U_0 )\right| 
     \leq \varepsilon/2 +\varepsilon/2 = \varepsilon.
\end{align*}
Analogously,  there exist $m_{2,1} \in \N$, neural network weights $\tilde{\theta}_{2,1}^{\star, m_{2,1}} \in \tilde{\Theta}_{m_{2,1}}^{2}$ and a network $\tilde \rho_{\tilde \theta_{2,1}^{\star, m_{2,1}}, j} \in \tilde{\mathcal{N}}   $ such that for each $j \in [d_U+1,d]$
\begin{equation*}
\sup_{(t, U) \in [0,T] \times \mathcal{P}_\varepsilon } \left| F_j(t, t, U ) -  \tilde \rho_{\tilde \theta_{2,1}^{\star, m_{2,1}}, j}  (t, \pi_{m_{0}}( U -U_0 ), U_0 )\right| \leq \varepsilon.
\end{equation*}
We note that we can extend the neural network by adding $H_{t^-}$ as additional input variable, without decreasing the approximation quality.

Subsequent, we construct the bounded output neural networks by defining $\gamma_1$ and $\gamma_2$ as
\begin{equation*}
\gamma_1 := \max_{(t, \tau, U) \in [0,T]^2\times \mathcal{P}_\varepsilon } \left|  \tilde f_{\tilde \theta_1^{\star, m_1}}  (t, \tau, \pi_{m_0}( U -U_0 ), U_0 )\right|.
\end{equation*}
\begin{equation*}
\gamma_2 := \max_{(t, \tau, U) \in [0,T]^2\times \mathcal{P}_\varepsilon } \left|  \tilde \rho_{\tilde \theta_2^{\star, m_2}, j}  (t, \pi_{m_0}( U -U_0 ), U_0 )\right|.
\end{equation*}
Since  the maximum of a continuous function on a compact set is finite, both constants are finite. Hence, the bounded output neural networks $\bar f_{\bar \theta_{1,1}^{\star, m_{1,1}}},\  \bar \rho_{\bar  \theta_{2,1}^{\star, m_{2,1}}} \in \mathcal{N}$ are well defined with $\bar \theta_{i,1}^{\star, m_{i,1}} := (\tilde \theta_{i,1}^{\star, m_{i,1}}, \gamma_i)$. As already remarked, the truncation functions  coincide with the identity for $|x|_2 \le \gamma_i$, $i=1,2$, respectively, and so the bounded networks match the original networks on $[0,T]^2\times \mathcal{P}_\varepsilon $ and therefore provide the same $\varepsilon$-approximation.

This boundedness alone is insufficient to demonstrate the convergence referred to in this part of the proof. The bounds do not always result in boundedness by $f_j, F_j$ and depend on $\epsilon$. Specifically, as $\epsilon$ decreases and $m$ increases, $\gamma_i$ may grow so quickly that the expectation of neural networks outside the compact set does not converge to zero as the compact set expands. Therefore, we compensate these neural networks with additional ones. To ascertain whether the input trajectory resides within the pertinent compact set, we employ the random variables 
$\tilde U^{\star}_t := \sup_{0\leq s \leq t} | \tilde U^{\leq t}_s|_1 \leq U^\star_T$, $n_t := \max\{ i \in \N \, | \, t_i \leq t \} \leq n$ and $\delta_t := \min\{ t_i - t_{i-1} | t_i \leq t \} \geq \delta_{\min}$
%
%
%
as supplementary inputs for the neural networks. By assigning the respective weights to zero in the neural networks described previously in \textbf{step 2}, the $\varepsilon$-approximation remains intact.

On the one hand, if $\tilde{U}^{\star}_t \leq 1/\epsilon$, $n_t \leq 1/\epsilon$, and $\delta_t \geq \epsilon$, then $\tilde{U}^{\leq t} - U_0 \in A_{N_\epsilon} \subset \mathcal{P}_\epsilon$, thereby ensuring the $\epsilon$-approximation holds. On the other hand, we have to study the case where either $U^\star_T > 1/\epsilon$, $n > 1/\epsilon$, or $\delta_{\min} < \epsilon$. We define $\tilde \mu_1$ to be the push-forward measure of $dt \times \P$ through the measurable map 
$\tilde \mu_1:[0,T] \times \Omega  \to \R^{2+d'+d_U+3}$ given by 
\begin{equation}\label{equ:thm1 input to NN 1}
    \tilde\mu_1(t) = \left(t, \tau(t), \pi_{m_0}(\tilde{U}^{\leq t} - U_0), U_0,U^{\star}_t, n_t, \delta_t\right).
\end{equation}
Moreover, we define $D_1 := [0,T]^2\times \R^{d'} \times \R^{d_U}\times [0, 1/\epsilon]^2 \times [\epsilon, T]$. We denote the complement as $D_1^\complement$, and  define
\begin{equation*}
    \upsilon := \bar f_{\theta_{1,1}^{\star, m_{1,1}}} \1_{ D_1^\complement } : \R^{2+{d'}+d_U} \to \R^{d_U}.
\end{equation*}
 
Thus, $\tilde\mu_1 (\R^{2+{d'}+d_U+3}) = T$, which means that $\tilde\mu_1$ is a finite measure. Furthermore, $\upsilon$ belongs to $L^2(\tilde\mu_1)$ as it is bounded by $\gamma_1$.
So, according to \citet[Theorem~1]{hornik1991approximation}, there exists $m_{1,2} \in \N$ and neural network weights $\tilde{\theta}_{1,2}^{\star, m_{1,2}} \in \tilde{\Theta}_{m_{1,2}}^{1}$ such that  the corresponding neural network $\tilde f_{\tilde \theta^{\star, m_{1,2}}_{1,2}}$ satisfies $\int |\upsilon - \tilde f_{\tilde \theta^{\star, m_{1,2}}_{1,2}} |_2^2 d\tilde\mu_1 < \epsilon$. We can assume $ \tilde{f}_{\tilde{\theta}^{\star, m_{1,2}}_{1,2}} \in \mathcal{N} $, which means that this is an output-bounded neural network.
Then we define the first new neural network $f_{ \theta^{\star, m_{1}}_{1}} = \bar f_{\bar \theta_{1,1}^{\star, m_{1,1}}} - \tilde f_{\tilde \theta^{\star, m_{1,2}}_{1,2}} \in \mathcal{N}$, with the weights $\theta^{\star, m_{1}}_{1} = (\bar \theta_{1,1}^{\star, m_{1,1}}, \tilde \theta^{\star, m_{1,2}}_{1,2})$ and $m_1 = m_{1,1} + m_{1,2}$.

We follow an analogous approach for $F_j$, noting that the process varies slightly due to the differences in the push-forward map. We define $\tilde \mu_2$ to be the push-forward measure of $dq \times \P$ (where $dq$ denotes the Lebesgue measure) through the measurable map $\tilde \mu_2: [0,1] \times \Omega  \to \R^{1+{d'}+d_U+3}$ defined by 
\begin{equation}\label{equ:thm1 input to NN 2}
   \mu(Q, \omega) = \left(\bar t, \pi_{m_0}(\tilde{U}^{\leq \bar t} - U_0), U_0,U^{\star}_t, n_t, \delta_t \right) ,
\end{equation}
where $\bar t := \bar t (Q, (\omega, \tilde \omega)) := \sum_{i=1}^n \1_{(\frac{i-1}{2n}, \frac{i}{2n}]}(Q) \, t_i(\tilde \omega) + \sum_{i=1}^n \1_{(\frac{n+i-1}{2n}, \frac{n+i}{2n}]}(Q) \, t_{i-1}(\tilde \omega)$ as in \Cref{lem:expectation weighted sum over t_i terms}. Let $D_2 := [0,T]^2\times \R^{d'} \times \R^{d_U}\times [0, 1/\epsilon]^2 \times [\epsilon, T]$ and also define
\begin{equation*}
    \Upsilon :=  \bar \rho_{\theta_{2,1}^{\star, m_{2,1}}}  \1_{ D_2^\complement } : \R^{2+d'+d_U} \to \R^{d_U}.
\end{equation*}

Then, $\tilde \mu_2 (\R^{1+d'+d_U+3}) = 1$, i.e., $\tilde\mu_2$ is a finite measure and $\Upsilon$ is a function in $L^2(\tilde\mu_2)$, since it is bounded by $\gamma_2$.
According to \citet[Theorem~1]{hornik1991approximation}, there exists $m_{2,2} \in \N$ and neural network weights $\tilde \theta^{\star, m_{2,2}}_{2,2} \in \tilde{\Theta}_{m_{2,2}}^{2}$ such that the corresponding neural network $\tilde \rho_{\tilde \theta^{\star, m_{2,2}}_{2,2}}$ satisfies $\int |\Upsilon - \tilde \rho_{\tilde \theta^{\star, m_{2,2}}_{2,2}} |_2^2 d\tilde\mu_2 < \epsilon$. Then we define the (new) neural network $\rho_{ \theta^{\star, m_{2}}_{2}} = \bar \rho_{\bar \theta_{2,1}^{\star, m_{2,1}}} - \tilde \rho_{\tilde \theta^{\star, m_{2,2}}_{2,2}} \in \mathcal{N}$, with the weights $\theta^{\star, m_{2}}_{2} = (\bar \theta_{2,1}^{\star, m_{2,1}}, \tilde \theta^{\star, m_{2,2}}_{2,2})$ and $m_2 = m_{2,1} + m_{2,2}$.
{It is important to recognize that we can't directly approximate $f$, $F$ or the difference between them and their initial approximating neural networks because the input space of $f$ and $F$ is infinite-dimensional. This scenario is not addressed by universal approximation theorems.}

Setting $m:= \max(m_0, m_1, m_2, \gamma_1, \gamma_2, |\tilde \theta_1^{\star, m_1}|_2, |\tilde \theta_2^{\star, m_2}|_2)$, it follows that $\theta^\star_m := (\theta^{\star, m_1}_1, \theta^{\star, m_2}_2, \tilde \theta^\star_3 ) \in \Theta_m$.

We can then finalize the purpose of this \textbf{step 2}. Throughout the rest of the proof, we will use the notation
\begin{equation}
    \begin{split}
        \zeta &:= \left( \1_{\{ U_T^\star \geq 1/\varepsilon \}} + \1_{\{ n \geq  1/\varepsilon \}} + \1_{\{ \delta \leq \epsilon \}} \right),
    \end{split}
\end{equation}
and we know by the argument above that on $D_i$ the $\epsilon$-approximation holds for any input of the form \eqref{equ:thm1 input to NN 1} or \eqref{equ:thm1 input to NN 2} respectively, while we have $\1_{D_i^\complement} \leq \zeta$ for $i=1,2$ and any potential input where the $t$ coordinates are within $[0,T]$.
we set $F = (F_j)_{d_U+1 \leq j \leq d}$ and $f = (f_j)_{d_U+1 \leq j \leq d}$.  For $t \in \{t_1, \dotsc, t_n\} \subset [0,T]$ we have with $\1_{D_2} + \1_{D_2^\complement}=1$ and by triangle inequality that
\begin{align*}
\left\lvert G_t^{\theta_m^\star} - \hat{V}_t  \right\rvert_1 
	& = \left\lvert  \rho_{\theta_2^{\star, m_2}}\left(H_{t-}, t,\pi_{m}( \tilde U^{\leq t} - U_0 ), U_0,U^{\star}_t, n_t, \delta_t \right) - F \left(t, t, \tilde U^{\leq t} \right) \right\rvert_1  \\
	 &= \left\lvert  \rho_{\theta_2^{\star, m_2}} - F \right\rvert_1  (\1_{D_2} + \1_{D_2^\complement}),\\
    &\leq \left( \left\lvert  \bar \rho_{\bar \theta_{2,1}^{\star, m_{2,1}}} - F \right\rvert_1  + \left\lvert  \tilde \rho_{\tilde \theta^{\star, m_{2,2}}_{2,2}} \right\rvert_1 \right) \1_{D_2} + \left( \left\lvert  \bar \rho_{\bar \theta_{2,1}^{\star, m_{2,1}}} - \tilde \rho_{\tilde \theta^{\star, m_{2,2}}_{2,2}} \right\rvert_1 +  \left\lvert F \right\rvert_1 \right) \1_{D_2^\complement} \\
    &\leq \left( \epsilon d  + \left\lvert   \Upsilon - \tilde \rho_{\tilde \theta^{\star, m_{2,2}}_{2,2}} \right\rvert_1 \right) \1_{D_2} + \left( \left\lvert  \Upsilon - \tilde \rho_{\tilde \theta^{\star, m_{2,2}}_{2,2}} \right\rvert_1  +  \left\lvert F \right\rvert_1 \right) \1_{D_2^\complement} \\
    & \leq \epsilon d  + \left\lvert   \left(\Upsilon - \tilde\rho_{\tilde \theta^{\star, m_{2,2}}_{2,2}}\right) (t) \right\rvert_1 +  \left\lvert F(t) \right\rvert_1 \zeta,
\end{align*}
where we write $(t)$ to specify the time input for the respective functions (e.g., $F(t)$).  For $t \in [0,T] \setminus \{ t_1, \dotsc, t_n \}$,
\begin{equation*}
\begin{split}
 \left\lvert G_t^{\theta_m^\star} - \hat{V}_t  \right\rvert_1  
	&\leq  \left\lvert G_{\tau(t)}^{\theta_m^\star} - \hat{V}_{\tau(t)}  \right\rvert_1 \\
 & \quad +   \int_{\tau(t)}^t \left\lvert f_{\theta_1^{\star, m_1}}\left(H_{s-}, s, \tau(t), \pi_{m} (\tilde U^{\leq \tau(t)} -U_0 ), U_0,U^{\star}_t, n_t, \delta_t \right) 
		- f(s, \tau(t), \tilde U^{\leq \tau(t)})  \right\rvert_1 ds\\
	& \leq \left( \epsilon d  + \left\lvert  \left( \Upsilon - \tilde\rho_{\tilde \theta^{\star, m_{2,2}}_{2,2}} \right) (\tau(t))\right\rvert_1 +  \left\lvert F(\tau(t)) \right\rvert_1 \zeta\right) 
    + \int_{0}^T \left( \epsilon d  + \left\lvert   \upsilon - \tilde f_{\tilde \theta^{\star, m_{1,2}}_{1,2}} \right\rvert_1 +  \left\lvert f \right\rvert_1 \zeta \right) ds, 
\end{split}
\end{equation*}
Using the same arguments for norm equivalence as in \textbf{step 1}, there exists a constant $c>0$ such that for all $t \in [0,T]$
\begin{equation*}
\begin{split}
&\left\lvert G_t^{\theta_m^\star} - \hat{V}_t  \right\rvert_2  \leq c \left( \epsilon (1+T)d  + \left\lvert  \left( \Upsilon - \tilde\rho_{\tilde \theta^{\star, m_{2,2}}_{2,2}} \right)\right\rvert_2 +  \left(\lvert F \rvert_2 +\int_0^T \left\lvert f(t) \right\rvert_2 dt \right)\zeta
    + \int_{0}^T  \left\lvert   \upsilon - \tilde f_{\tilde \theta^{\star, m_{1,2}}_{1,2}} \right\rvert_2 ds  \right) .
\end{split}
\end{equation*}
We have the bound
\begin{align}\label{equ:thm1 bounding the difference Y - hat X}
    &\E \left[ \frac{1}{n} \sum_{i=1}^n \left\lvert G_{t_i-}^{\theta_m^\star} - \hat{V}_{t_i-}  \right\rvert_2^2 \right] 
    + \E \left[ \frac{1}{n} \sum_{i=1}^n \left\lvert G_{t_i}^{\theta_m^\star} - \hat{V}_{t_i}  \right\rvert_2^2 \right] \notag \\
    & \leq 7 \left( 
        \epsilon^2 ((1+T)d)^2
        + \E\left[ \left( \frac{1}{n}\sum_{i=1}^n \left( \left\lvert F(t_i) \right\rvert_2^2 +\left\lvert F(t_{i-1}) \right\rvert_2^2 \right) + \left( \int_0^T\left\lvert f(t) \right\rvert_2 dt\right)^2  \right) \zeta  \right] \right. \notag\\
        & \quad+ \E \left[ \frac{1}{n} \sum_{i=1}^n \left\lvert   \left( \Upsilon - \tilde\rho_{\tilde \theta^{\star, m_{2,2}}_{2,2}} \right) (t_{i-1}) \right\rvert_2^2 \right] \notag\\
        &\quad \left. + \E \left[ \frac{1}{n} \sum_{i=1}^n \left\lvert   \left( \Upsilon - \tilde\rho_{\tilde \theta^{\star, m_{2,2}}_{2,2}} \right) (t_{i}) \right\rvert_2^2 \right]
        +  \E \left[ \left( \int_{0}^T \left\lvert   \upsilon - \tilde f_{\tilde \theta^{\star, m_{1,2}}_{1,2}} \right\rvert_2  ds \right)^2 \right]
    \right) \notag\\
    & \leq 7 \left( 
        \epsilon^2 ((1+T)d)^2
        + \E_{\mu_Q \times \P}\left[ \left\lvert F(\bar t) \right\rvert_2^2 \zeta  \right] + \E\left[ T \int_0^T \left\lvert f(t) \right\rvert_2^2 dt \, \zeta  \right] \right. \notag \\
        &\quad\qquad+ \E_{\mu_Q \times \P} \left[ \left\lvert   \left( \Upsilon - \tilde\rho_{\tilde \theta^{\star, m_{2,2}}_{2,2}} \right) (\bar t) \right\rvert_2^2 \right] 
        + \left. \E \left[ T  \int_{0}^T \left\lvert   \upsilon - \tilde f_{\tilde \theta^{\star, m_{1,2}}_{1,2}} \right\rvert_2^2  ds \right]
    \right) \\
    & = 7 \Big( 
        \epsilon^2 ((1+T)d)^2
        + \E_{\mu_Q \times \P}\left[ \left\lvert F(\bar t) \right\rvert_2^2 \zeta  \right] + \E\left[ T \int_0^T \left\lvert f(t) \right\rvert_2^2 dt \, \zeta  \right]  \notag \\
        & \quad\qquad + \left. \int \left\lvert    \Upsilon - \tilde\rho_{\tilde \theta^{\star, m_{2,2}}_{2,2}} \right\rvert_2^2 d \tilde\mu_2 
        + T \int \left\lvert   \upsilon - \tilde f_{\tilde \theta^{\star, m_{1,2}}_{1,2}} \right\rvert_2^2  d \tilde\mu_1 
    \right) \notag\\
    & = 7 \left( 
        \epsilon^2 ((1+T)d)^2
        + \E_{\mu_Q \times \P}\left[ \left\lvert F(\bar t) \right\rvert_2^2 \zeta  \right] + \E\left[ T \int_0^T \left\lvert f(t) \right\rvert_2^2 dt \, \zeta  \right] + \epsilon(1+T)
    \right) , \notag
\end{align}
where we used Cauchy-Schwarz inequality in the first inequality. For the second inequality, we used \Cref{lem:expectation weighted sum over t_i terms} for the second, third and fourth term and for second and the last term, we used Hölder inequality for $p=q=2$. For the third equality we used the definition of the push-forward measures $\tilde \mu_1, \tilde\mu_2$ and  we applied the $L^2$ approximation for $\upsilon$ and $\Upsilon$ through the designated neural networks for the last one.

Now, we want to show that the $\varepsilon$-error converges to $0$ when increasing the truncation level and network size $m \in \N$. To do that, we define $\varepsilon_m \geq 0$ to be the smallest number such that the bounds given above with error $\varepsilon_m$ hold, when using an architecture with signature truncation level $m$ and weights in $\Theta_m$.
Increasing $m$ can only improve the approximation, and we have proven before that for any $\epsilon$ there exists some $m$ such that the $\epsilon$-approximation holds. Hence, $\lim_{m \to \infty} \varepsilon_m = 0 $. Let $\theta^\star_m \in \Theta_m$ denote the choice for the neural networks' weights as described above.
By hypothesis, $\theta_m^{\min} \in \argmin _{\theta \in \Theta_m}\{ L(\theta) \}$, therefore, we obtain for $\bar t := \sum_{i=1}^n \1_{(\frac{i-1}{n}, \frac{i}{n}]}(Q) \, t_i$ with $Q \sim \operatorname{Unif}([0,1])$ independent of $\mathcal{F}$ and  $R_{t_i}^2 := \left\lvert G_{t_i-}^{\theta_m^\star} - \hat{V}_{t_i-}  \right\rvert_2^2 + \left\lvert G_{t_i}^{\theta_m^\star} - \hat{V}_{t_i}  \right\rvert_2^2$

\begin{align}\label{equ:phi convergence 1}
\min_{\eta \in \mathbb{D}} \Psi(\eta) 
    & \leq L(\theta_m^{\min}) \leq L(\theta_m^{\star}) \notag \\
    & = \E\left[ \frac{1}{n} \sum_{i=1}^n \left(  \left\lvert \proj{V}(M_{t_i}) \odot ( V_{t_i} - G_{t_i}^{\theta_m^{\star}} ) \right\rvert_2^2 + \left\lvert \proj{V}(M_{t_i}) \odot (  V_{t_i} - G_{t_{i}-}^{\theta_m^{\star}} ) \right\rvert_2^2 \right) \right] \notag \\
    & = \E\left[ \frac{1}{n} \sum_{i=1}^n \left( \left\lvert \proj{V}(M_{t_i}) \odot ( {V}_{t_i} -  \hat{V}_{t_i} ) \right\rvert_2^2 +  \left\lvert \proj{V}(M_{t_i}) \odot ( \hat{V}_{t_i} - \ G_{t_i}^{\theta_m^{\star}}) \right\rvert_2^2  \right. \right. \notag \\
	&			\left.\left. \qquad \qquad \qquad  + \left\lvert \proj{V}(M_{t_i}) \odot ( {V}_{t_i} -  \hat{V}_{t_i-} ) \right\rvert_2^2 +  \left\lvert \proj{V}(M_{t_i}) \odot ( \hat{V}_{t_i-} - \ G_{t_i-}^{\theta_m^{\star}}) \right\rvert_2^2  \right) \right] \notag \\
    & \leq  \Psi(\hat V) +\E\left[\frac{1}{n} \sum_{i=1}^n R_{t_i}^2 \right]  , 
\end{align}
whereby the $4$th (in)equality follows from \Cref{lem:L2 identity}.
Together with Assumption~\ref{assumption:2} on $\delta_{\min}$, integrability of $|U_T^\star|_2$ and $|n|$ (Assumptions~\ref{assumption:5} and~\ref{assumption:6}) suggest that 
$$\zeta:= \1_{\{ U_T^\star \geq 1/\varepsilon_m \}} + \1_{\{ n \geq  1/\varepsilon_m \}} + \1_{\{ \delta \leq \varepsilon_m \}} \xrightarrow[m \to \infty]{\P-a.s.} 0.$$ 
Using this and \eqref{equ:thm1 bounding the difference Y - hat X} together with \Cref{assumption:4}, we get by dominated convergence that
\begin{equation*}
\begin{split}
\E\left[\frac{1}{n} \sum_{i=1}^n R_{t_i}^2 \right]   \xrightarrow{m \to \infty} 0.
\end{split}
\end{equation*}
With this limit and the fact that $\Psi(\hat{V})= \min_{\eta \in \mathbb{D}} \Psi(\eta)$, we directly obtain from \eqref{equ:phi convergence 1}
\begin{equation*}
\min_{\eta \in \mathbb{D}} \Psi(\eta) 
	 \leq L(\theta_m^{\min}) \leq L(\theta_m^{\star}) \xrightarrow{m \to \infty} \min_{\eta \in \mathbb{D}} \Psi(\eta)  .
\end{equation*}

\textbf{Step 3:} For every $1 \leq k \leq K$ we want to show that $G^{\theta_m^{\min}}$ converges to $\hat{V}$ in the metric $d_k$  as $m \to \infty$ i.e. $\lim_{m\to\infty} d_k \left( \hat{V},  G^{\theta_m^{\min}} \right) = 0$ for all $1 \leq k \leq K$. 
First, Lemma \ref{lem:L2 identity} yields
\begin{equation}\label{equ:proof thm bound 1}
\begin{split}
L(\theta_m^{\min}) - \Psi(\hat{V}) 
	&= \E\left[\frac{1}{n} \sum_{i=1}^n  \left\lvert \proj{V}(M_{t_i}) \odot ( \hat{V}_{t_i-} - G^{\theta_m^{\min}}_{t_{i}-} ) \right\rvert_2^2 \right]\\ 
    & \qquad \qquad \qquad \quad+  \E\left[\frac{1}{n}\sum_{i=1}^n  \left\lvert \proj{V}(M_{i}) \odot (\hat{V}_{t_i} - G^{\theta_m^{\min}}_{t_{i}} ) \right \rvert_2^2\right].
\end{split}
\end{equation}
Hence, applying \eqref{equ:HI}, \eqref{equ:M split} 
and finally \eqref{equ:proof thm bound 1}, yields
\begin{equation}\label{equ:convergence in L1}
\begin{split}
d_k \left( \hat{V},  G^{\theta_m^{\min}} \right)
	& \leq   \frac{c_0(k) \, c_1 \, c_3}{c_2(k)} \, \left( L(\theta_m^{\min}) - \Psi(\hat{V}) \right)^{1/2}  \xrightarrow{m \to \infty} 0,
\end{split}
\end{equation}
completing the proof.
\end{proof}

\subsection{Convergence of the Monte Carlo Approximation}\label{sec:Convergence of the Monte Carlo approximation}

In this section, we present the convergence with respect to the Monte Carlo approximation $\hat{L}_N$ as the number of samples $N$ increases. This result builds on and extends \citet[Theorem~4.4]{krach2022optimal}. Throughout this section, we assume that the size $m$ of the neural network and the signature truncation level are fixed.

\begin{theorem}
\label{thm:MC convergence Yt}
Let $\theta^{\min}_{m,N} \in \Theta^{\min}_{m,N} := \argmin_{\theta \in \Theta_m} \{\hat L_N(\theta)\}$ for all $m, N \in \mathbb{N}$. Then, for each $m \in \mathbb{N}$, we have $(\mathbb{P} \times \tilde{\mathbb{P}})$-almost surely  as $N \to \infty$ that 
\begin{itemize}
    \item $\hat L_N$ converges uniformly to $L$ on $\Theta_m$,  
    \item $L(\theta^{\min}_{m,N})$ converges to $L(\theta^{\min}_{m})$ and
    \item $\hat L_N(\theta^{\min}_{m,N})$ converges to $L(\theta^{\min}_{m})$.
\end{itemize}
Furthermore, there exists an increasing random sequence $(N_m)_{m \in \mathbb{N}}$ in $\mathbb{N}$ such that for every $1 \leq k \leq K$, it holds almost surely that
$\lim_{m\to\infty} d_k ( \hat{V},  G^{\theta_{m, N_m}^{\min}} ) = 0$.
\end{theorem}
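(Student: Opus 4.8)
The plan is to derive the first three bullet points from a uniform strong law of large numbers (SLLN) on the compact parameter set $\Theta_m$, combined with standard $M$-estimation (argmin) arguments, and then to obtain the final convergence $d_k(\hat V, G^{\theta^{\min}_{m,N_m}}) \to 0$ by a diagonal subsequence extraction that couples the second bullet point with \Cref{thm:1}. The bound underpinning the last step is already available: since $G^\theta \in \mathbb{D}$, the chain of inequalities in the proof of \Cref{thm:1} (via \Cref{lem:L2 identity}, see \eqref{equ:convergence in L1}) gives, for \emph{every} $\theta$,
\[
d_k(\hat V, G^\theta) \leq \frac{c_0(k)\, c_1\, c_3}{c_2(k)} \bigl( L(\theta) - \Psi(\hat V) \bigr)^{1/2}.
\]

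The decisive step is the uniform convergence $\sup_{\theta \in \Theta_m} |\hat L_N(\theta) - L(\theta)| \to 0$ $(\mathbb{P}\times\tilde{\mathbb{P}})$-a.s. I would establish this via a uniform SLLN for i.i.d.\ summands, checking three properties. First, $\Theta_m$ is compact, being a closed bounded subset of a finite-dimensional space (weights bounded in $2$-norm by $m$, widths and depths bounded by $m$). Second, for $\mathbb{P}$-a.e.\ realization the per-sample loss $\theta \mapsto \frac1n \sum_{i=1}^n ( |\proj{V}(M_i)\odot(V_{t_i}-G^\theta_{t_i})|_2^2 + |\proj{V}(M_i)\odot(V_{t_i}-G^\theta_{t_i-})|_2^2 )$ is continuous in $\theta$; this reduces to continuous dependence of the jump-ODE solution $H^\theta$ of \eqref{equ:PD-NJODE}, and hence of $G^\theta = \tilde g_{\tilde\theta_3}(H^\theta)$, on the parameters. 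Because on $\Theta_m$ the activations $\Gamma_{\gamma_i}$ (with $\gamma_i \le m$) and the feed-forward networks $f_{\theta_1},\rho_{\theta_2},\tilde g_{\tilde\theta_3}$ are uniformly Lipschitz and depend continuously on $\theta$, continuous dependence on parameters of ODEs on each interval between the (fixed, for a given realization) observation times, together with continuity of the jump map $\rho_{\theta_2}$, yields continuity of $\theta \mapsto G^\theta_{t_i},\, G^\theta_{t_i-}$. Third, I need an integrable envelope: again using $\gamma_i \le m$, the output $G^\theta$ is bounded uniformly over $\theta \in \Theta_m$ by a constant $C_m$, so the per-sample loss is dominated by $C\bigl(\frac1n\sum_{i=1}^n V_{t_i}^2 + C_m^2\bigr)$, which is integrable by \Cref{assumption:5} (and $n \ge 1$). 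Compactness, continuity, and domination give the uniform SLLN.

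The second and third bullet points then follow by the usual argmin argument. Since $L$ is continuous on the compact set $\Theta_m$ and $\hat L_N \to L$ uniformly, we have $\hat L_N(\theta^{\min}_{m,N}) \le \hat L_N(\theta^{\min}_m) \to L(\theta^{\min}_m)$ and $|\hat L_N(\theta^{\min}_{m,N}) - L(\theta^{\min}_{m,N})| \le \sup_{\Theta_m}|\hat L_N - L| \to 0$, while $L(\theta^{\min}_{m,N}) \ge L(\theta^{\min}_m)$ by definition of $\theta^{\min}_m$; squeezing yields both $L(\theta^{\min}_{m,N}) \to L(\theta^{\min}_m)$ and $\hat L_N(\theta^{\min}_{m,N}) \to L(\theta^{\min}_m)$ a.s. For the final statement, I write $L(\theta^{\min}_{m,N}) - \Psi(\hat V) = (L(\theta^{\min}_{m,N}) - L(\theta^{\min}_m)) + (L(\theta^{\min}_m) - \Psi(\hat V))$, where the first summand $\to 0$ as $N \to \infty$ for fixed $m$ (second bullet point) and the second $\to 0$ as $m \to \infty$ (\Cref{thm:1}). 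On the a.s.\ event where the second bullet point holds for all $m$, I choose $N_m$ increasing with $L(\theta^{\min}_{m,N_m}) - L(\theta^{\min}_m) \le 1/m$, so that $L(\theta^{\min}_{m,N_m}) - \Psi(\hat V) \to 0$ and the displayed bound forces $d_k(\hat V, G^{\theta^{\min}_{m,N_m}}) \to 0$ for every fixed $k$.

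The main obstacle is the uniform SLLN of the first step, and within it the continuous dependence of the jump-ODE output $G^\theta$ on $\theta$, which simultaneously supplies the continuity required for the uniform law and the measurability of $\theta^{\min}_{m,N}$ needed to speak of it as a random element. The remaining steps are essentially soft-analysis bookkeeping, with only mild care required for the data-dependent (hence random) choice of the subsequence $(N_m)$ and for keeping track of the product measure $\mathbb{P}\times\tilde{\mathbb{P}}$, since the minimizers $\theta^{\min}_{m,N}$ depend on the training sample while $L$, $\Psi$, and $d_k$ are evaluated under a fresh realization governed by $\mathbb{P}$.
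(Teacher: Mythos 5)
Your proposal is correct and follows essentially the same route as the paper: a uniform strong law of large numbers on the compact set $\Theta_m$ (resting on continuity of $\theta\mapsto G^\theta_t$ and an integrable envelope via the output-bounded networks and \Cref{assumption:5}, which is exactly the content of \Cref{lem:properties for MC conv thm,lemma:convergencelocallyuniform}), followed by the diagonal choice of $N_m$ and the bound $d_k(\hat V, G^\theta)\leq \tfrac{c_0(k)c_1c_3}{c_2(k)}(L(\theta)-\Psi(\hat V))^{1/2}$ from the proof of \Cref{thm:1}. The only (harmless) deviation is in the second and third bullet points, where you obtain $L(\theta^{\min}_{m,N})\to L(\theta^{\min}_m)$ by a direct squeeze from uniform convergence, whereas the paper routes this through the convergence of $\theta^{\min}_{m,N}$ to the argmin set $\Theta_m^{\min}$ and a dominated-convergence step; your version is slightly more elementary and equally valid.
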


Before giving the proof, we introduce some additional notation  and helpful results.
We define the separable Banach space $\mathcal{S} := \ell^1(\R^{d_s}) = \{ x = (x_i)_{ i \in \N} \in (\R^{d_s})^{\N} \; \vert \; \lVert x \rVert_{\ell^1} < \infty \}$ for an appropriate $d_s \in \N$, equipped with the norm $\lVert x \rVert_{\ell^1} := \sum_{i \in \N} \lvert x_i \rvert_2$. Moreover, we define the function
\begin{equation*}
\phi(x,y,z,p) :=  \left\lvert p \odot ( x - y ) \right\rvert_2^2 + \left\lvert p \odot (x - z) \right\rvert_2^2 
\end{equation*}
and $\xi_j := (\xi_{j, 0}, \dotsc, \xi_{j, n^{(j)}}, 0, \dotsc)$, where $\xi_{j,k} := (t_{k}^{(j)}, U_{t_k^{(j)}}^{(j)}, M_{t_k^{(j)}}^{(j)}, \pi_m(\tilde U^{\leq t_k^{(j)}, {(j)}} - U_0^{(j)}), \tilde U^{\star, (j)}_t, k, \delta_{t_k^{(j)}}^{(j)}) \in \R^{d_s}$. 
Let $n^{j}(\xi_j) := \max_{k \in \N}\{ \xi_{j,k} \neq 0\}$, $t_k(\xi_j):= t_{k}^{(j)}$, $V_k (\xi_j) := V_{t_k^{(j)}}^{(j)}$ and $M_k(\xi_j) := M_{t_k^{(j)}}^{(j)}$.
Then we have  $n^{(j)} = n^{j}(\xi_j)$ $\P$-almost-surely. Additionally, we know that $\xi_j$ are i.i.d. random variables taking values in $\mathcal{S}$. As before, we write $G^\theta$ to emphasize the dependence of the model output on the network weights $\theta$ and define 
\begin{equation*}
W(\theta, \xi_j) := \frac{1}{n^{j}(\xi_j)}\sum_{i=1}^{n^{j}(\xi_j)}  \phi \left( V_i(\xi_j), G^\theta_{t_i(\xi_j)}(\xi_j), G^\theta_{t_i(\xi_j)-}(\xi_j), \proj{V}(M_i(\xi_j)) \right).
\end{equation*}

The following results were proven in \citet[Lemmas~4.7 and~4.8]{krach2022optimal}.
\begin{lemma}\label{lem:properties for MC conv thm}
For every $t \in [0, T]$, the random function  $\theta \in \Theta_M \mapsto G_{t}^{\theta}$  is uniformly continuous almost surely.
\end{lemma}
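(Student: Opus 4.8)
The plan is to fix a single realization of the data and prove that $\theta \mapsto G_t^\theta$ is \emph{Lipschitz} on the bounded parameter set $\Theta_M$; Lipschitz continuity on a bounded set immediately yields uniform continuity, and the exceptional null set enters only through the requirement that the number of observations be finite. By \Cref{assumption:6} we have $\E[n]<\infty$, hence $n(\omega)<\infty$ almost surely, and by \Cref{assumption:2} the observation times are a.s.\ distinct; fix $\omega$ in the corresponding full-measure set $\Omega_0$. For such $\omega$ every path-dependent input appearing in \eqref{equ:PD-NJODE} — the truncated signatures $\pi_m(\tilde U^{\leq \tau(\cdot)}-U_0)$, the running maximum $\tilde U^\star$, the counter $n_\cdot$ and the gap $\delta_\cdot$, together with the interpolated path itself — becomes a fixed, bounded, deterministic object, and $H^\theta$ is built by solving the drift ODE on each of the finitely many inter-observation intervals $[t_{i-1},t_i)$ and applying the reset map $\rho_{\theta_2}(H_{t_i-},\dots)$ at each $t_i$.

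First I would record an a priori bound. Since $f_{\theta_1}$ and $\rho_{\theta_2}$ are bounded-output networks (\Cref{def:bounded output NN}) with $|\gamma_1|,|\gamma_2|\le M$ on $\Theta_M$, the drift is bounded by $M$ and every reset lands in the ball of radius $M$; hence $|H_s^\theta|\le M(1+T)=:K$ for all $s\in[0,t]$ and all $\theta\in\Theta_M$, so that every trajectory stays in the fixed compact ball $B_K$. Next I would show that on $B_K$ the three networks are jointly Lipschitz in state and weights, uniformly over $\Theta_M$: a feedforward network with Lipschitz activations and weights bounded by $M$ is Lipschitz in its input with a constant $L$ depending only on $M$ and the (bounded) architecture, and is Lipschitz in its weights on the bounded set $\Theta_M$ uniformly over inputs in $B_K$; the truncation $\Gamma_\gamma$ is $1$-Lipschitz in $x$ and, because $\partial_\gamma\Gamma_\gamma(x)=\tfrac{x}{|x|_2}\1_{\{\gamma<|x|_2\}}$ has norm at most $1$, also $1$-Lipschitz in $\gamma>0$ (extending continuously to $\gamma=0$). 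Composing these, $f_{\theta_1}$ and $\rho_{\theta_2}$ are Lipschitz in $H$ with a constant $L$ independent of $\theta\in\Theta_M$, and jointly Lipschitz in $(H,\theta)$ on $B_K\times\Theta_M$.

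I would then propagate these estimates. On a single interval a Grönwall argument bounds the difference of two flow endpoints by $e^{LT}$ times the sum of the difference of initial conditions and $\int_{t_{i-1}}^{t_i}\sup_{x\in B_K}|f_{\theta_1}(x,\dots)-f_{\theta_1'}(x,\dots)|\,ds$, the latter being controlled linearly in $|\theta_1-\theta_1'|$ by the weight-Lipschitzness. At each observation time the reset map contributes a further factor $L_\rho$ (its state-Lipschitz constant) and an additive term linear in $|\theta_2-\theta_2'|$. Composing the $n(\omega)+1$ flow steps with the $n(\omega)$ resets shows that $\theta\mapsto H_t^\theta$ is Lipschitz on $\Theta_M$ with constant $C(\omega)=C(M,T,n(\omega))<\infty$; since $G_t^\theta=\tilde g_{\tilde\theta_3}(H_t^\theta)$ and $\tilde g_{\tilde\theta_3}$ is itself Lipschitz in $(H,\tilde\theta_3)$ on $B_K\times\Theta_M$, one more composition gives that $\theta\mapsto G_t^\theta$ is Lipschitz on $\Theta_M$, and a fortiori uniformly continuous, for every $\omega\in\Omega_0$.

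The main obstacle is bookkeeping the uniformity of the constants rather than any single deep step. Concretely, one must (i) treat the truncation parameter $\gamma$ jointly with the remaining weights, including $\gamma$ close to $0$, when establishing joint Lipschitzness of the bounded-output networks; (ii) secure the state-Lipschitz constant $L$ so that it is independent of $\theta\in\Theta_M$, since otherwise the Grönwall exponential $e^{LT}$ would fail to be $\theta$-uniform and the modulus of continuity could degenerate; and (iii) accept that the resulting Lipschitz constant $C(\omega)$ grows like $(L_\rho e^{LT})^{n(\omega)}$ and is finite only because $n(\omega)<\infty$ a.s.\ — since $\bar n$ may be infinite, no bound uniform in $\omega$ is available, which is exactly why the statement asserts uniform continuity only almost surely for each fixed $t$.
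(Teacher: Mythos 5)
Your proposal is correct and follows essentially the same route as the proof this paper relies on (the lemma is not re-proven here but cited from \citet[Lemma~4.7]{krach2022optimal}): fix $\omega$ with finitely many observations, use the boundedness and Lipschitz regularity of the networks in both state and weights, control the flow between observations by Gr\"onwall and compose with the finitely many jump/reset maps, then conclude uniform continuity on the bounded parameter set. Your version is simply more quantitative -- proving Lipschitzness with explicit $\omega$-dependent constants, and handling $\gamma$ near $0$ directly -- which is a harmless (indeed slightly cleaner, since it avoids any compactness question for $\gamma\in(0,m]$) strengthening of the same argument.
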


\begin{lemma}
\label{lemma:convergencelocallyuniform}
Let $(\xi_i)_{i \geq 1}$ be a sequence of $i.i.d.$ random variables with values in $\mathcal{S}$ and $W:\mathbb{R}^m\times \mathcal{S}\to \mathbb{R}$ be a measurable function.
Assume that a.s., the function $\theta\in \mathbb{R}^m \mapsto W(\theta, \xi_1)$ is continuous and for all $C>0$, $\E(\sup_{|\theta|_2 \leq C}|W(\theta, \xi_1)|)< + \infty$. Then, a.s. $f_N:  \mathbb{R}^m \to \R, \theta  \mapsto \frac{1}{N}\sum_{i=1}^{N}W(\theta, \xi_i)$ converges locally uniformly to the continuous function $f: \mathbb{R}^m \to \R, \theta \mapsto \E(W(\theta, \xi_1))$, i.e.,
\begin{equation*}
\lim_{N\to\infty} \sup_{|\theta|_2\leq C} \left|\frac{1}{N}\sum_{i=1}^{N}W(\theta, \xi_i) -   \E(W(\theta, \xi_1))\right| = 0 \qquad a.s.
\end{equation*}
Additionally, let $K \subset \R^m$ be compact and define the random variables $v_n := \inf_{x\in K} f_n(x)$. We consider a minimizing sequence of random variables $(x_n)_{n=0}^{\infty}$, given by $f_n(x_n) = \inf_{x\in K} f_n(x) $ and let $v^\star = \inf_{x\in K}f(x)$ and $\mathcal{K}^\star = \{ x\in K: f(x) =v^\star \}$. Then $v_n \to v^\star$ and $\min_{z \in \mathcal{K}^\star}|x_n - z|_2 \to 0 $ a.s. as $n \to \infty$.
\end{lemma}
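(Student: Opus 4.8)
The plan is to prove the two assertions in turn: first the uniform strong law of large numbers on compact sets (a Glivenko--Cantelli statement for the parametric family $\{W(\theta,\cdot) : |\theta|_2 \le C\}$), and then deduce convergence of the minimal values and of the minimizers by a standard argmin--continuity argument. Throughout, fix $C>0$, write $K_C := \{\theta \in \R^m : |\theta|_2 \le C\}$, and set $f_N(\theta) := \frac1N\sum_{i=1}^N W(\theta,\xi_i)$ and $f(\theta) := \E[W(\theta,\xi_1)]$.

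For the uniform law, the two ingredients I would combine are a pointwise strong law and an equicontinuity estimate. Since the envelope $\mathcal{G}(\xi) := \sup_{\theta \in K_C}|W(\theta,\xi)|$ is integrable by hypothesis, each $W(\theta,\cdot)$ is integrable and the classical SLLN gives $f_N(\theta)\to f(\theta)$ almost surely for each fixed $\theta$. To upgrade this to uniform convergence I would introduce, for $\theta_0\in K_C$ and $\rho>0$, the local oscillation $g_{\theta_0,\rho}(\xi) := \sup_{\theta\in B(\theta_0,\rho)\cap K_C}|W(\theta,\xi)-W(\theta_0,\xi)|$. This is measurable because, by the almost sure continuity of $\theta\mapsto W(\theta,\xi)$, the supremum may be restricted to a countable dense subset; as $\rho\downarrow 0$ it decreases pointwise to $0$ and is dominated by $2\mathcal{G}$, so dominated convergence yields $\E[g_{\theta_0,\rho}(\xi_1)]\to 0$. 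Hence, given $\varepsilon>0$, I would choose for each $\theta_0$ a radius $\rho(\theta_0)$ with $\E[g_{\theta_0,\rho(\theta_0)}(\xi_1)]<\varepsilon$, cover the compact set $K_C$ by finitely many balls $B(\theta_l,\rho_l)$, $l=1,\dots,p$, and for $\theta$ in the $l$-th ball bound $|f_N(\theta)-f(\theta)|\le |f_N(\theta)-f_N(\theta_l)| + |f_N(\theta_l)-f(\theta_l)| + |f(\theta_l)-f(\theta)|$. The middle term tends to $0$ almost surely by the SLLN at the finitely many centers; the first is bounded by $\frac1N\sum_i g_{\theta_l,\rho_l}(\xi_i)\to \E[g_{\theta_l,\rho_l}]<\varepsilon$ by the SLLN applied to the integrable oscillation; and the third is $\le \E[g_{\theta_l,\rho_l}]<\varepsilon$. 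Taking $\limsup_N$ and then $\varepsilon\downarrow 0$ along a sequence gives $\sup_{K_C}|f_N-f|\to 0$ almost surely, and the same oscillation estimate shows $f$ is continuous.

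For the minima, I would use that $\theta\mapsto\inf_K$ is $1$-Lipschitz in the sup-norm: evaluating $f_N$ at a minimizer of $f$ and $f$ at $x_N$ gives $|v_N-v^\star| = |\inf_K f_N - \inf_K f| \le \sup_K|f_N-f|\to 0$ almost surely, where $v_N$ is attained since $f_N$ is continuous on the compact $K$. For the minimizers I would argue by contradiction along subsequences: if $\min_{z\in\mathcal{K}^\star}|x_N-z|_2\not\to 0$, extract a subsequence bounded away from $\mathcal{K}^\star$ by some $\delta>0$, and by compactness a further subsequence $x_{N_j}\to x_\infty\in K$ with $x_\infty\notin\mathcal{K}^\star$ (using that $\mathcal{K}^\star$ is closed, as $f$ is continuous), so $f(x_\infty)>v^\star$. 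But $|f_{N_j}(x_{N_j})-f(x_{N_j})|\le\sup_K|f_{N_j}-f|\to 0$ and $f(x_{N_j})\to f(x_\infty)$ by continuity, whence $v_{N_j}=f_{N_j}(x_{N_j})\to f(x_\infty)>v^\star$, contradicting $v_N\to v^\star$. This yields $\min_{z\in\mathcal{K}^\star}|x_N-z|_2\to 0$ almost surely.

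The main obstacle is the uniform law: converting the pointwise almost sure convergence, whose exceptional null set a priori depends on $\theta$, into uniform almost sure convergence over the whole compact set. The decisive device is the measurable oscillation function $g_{\theta_0,\rho}$ together with the estimate $\E[g_{\theta_0,\rho}(\xi_1)]\to 0$, which simultaneously supplies the equicontinuity needed to pass from a finite net to all of $K_C$ and controls the empirical and population increments by the same integrable bound. Measurability of the oscillation (via continuity) and integrability of the envelope are precisely the two hypotheses that make dominated convergence applicable; the argmin step is then entirely standard.
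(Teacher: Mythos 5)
Your proof is correct, and the main thing to point out is that the paper does not actually contain a proof of this statement at all: the lemma is imported by citation, ``The following results were proven in \citet[Lemmas~4.7 and~4.8]{krach2022optimal}.'' So the comparison is between your self-contained argument and the proof in that reference, which (following the original NJODE papers) obtains the uniform convergence by viewing $\theta \mapsto W(\theta,\xi_i)$ as i.i.d.\ random elements of the separable Banach space of continuous functions on the compact ball, with integrable norm thanks to the envelope condition, and invoking the strong law of large numbers for Banach-space-valued random variables; your route is instead the classical elementary one (Wald/Jennrich-type uniform law of large numbers): measurable local oscillations $g_{\theta_0,\rho}$ via countable dense subsets, dominated convergence to make $\E[g_{\theta_0,\rho}(\xi_1)]$ small, a finite subcover of the compact ball, the scalar SLLN at the centers and at the oscillations, and a three-term estimate. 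What each buys: the Banach-space argument is shorter modulo a heavy external theorem, while yours is self-contained, needing only the scalar SLLN and dominated convergence, at the cost of the covering bookkeeping. The second half (the $1$-Lipschitz stability of $\inf_K$ under uniform convergence, plus the subsequence--compactness contradiction for the minimizers, using that $\mathcal{K}^\star$ is closed) is the standard consistency-of-argmin argument and matches what is done in the cited reference. Two minor points you should make explicit when writing this up: the exceptional null set in your uniform law depends on $\varepsilon$ and on the chosen cover, so take $\varepsilon$ along a countable sequence (as you indicate) \emph{and} intersect over countably many radii $C \in \N$ to get local uniform convergence on all of $\R^m$; and measurability of the minimizers $x_n$ is part of the lemma's hypotheses, so no measurable-selection argument is required on your side.
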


We can now proceed with the proof of \Cref{thm:MC convergence Yt}.
\begin{proof}[Proof of \Cref{thm:MC convergence Yt}.]
\textbf{Step 1:} we show that $\hat L_N (\theta) \xrightarrow{N \to \infty} L(\theta)$ uniformly on $\Theta_m$.

Fix $m \in \N$. Since $G^\theta_t$ is the output of neural networks with bounded outputs, it is bounded in terms of the input, the weights (bounded by $m$), the time interval $T$, and certain constants depending on the network's architecture and activation functions. Specifically, for all $t \in [0, T]$ and $\theta \in \Theta_m$, we have $|G^\theta_t(\xi_j)| \leq \tilde{B}(1 + U^{\star,(j)})$, where $U^{\star,(j)}$ corresponds to the input $\xi_j$ and $\tilde{B}$ is a constant that may depend on $m$. Therefore, we have
\begin{multline*}
    \phi \left( V_i(\xi_j), G^\theta_{t_i(\xi_j)}(\xi_j), G^\theta_{t_i(\xi_j)-}(\xi_j), \proj{V}(M_i(\xi_j)) \right) \\
    = \left\lvert \proj{V}(M_i(\xi_j)) \odot (V_i(\xi_j) - G^\theta_{t_i(\xi_j)}(\xi_j) ) \right\rvert_2^2 + \left\lvert \proj{V}(M_i(\xi_j)) \odot ( V_i(\xi_j) - G^\theta_{t_i(\xi_j)-}(\xi_j) ) \right\rvert_2^2 \\
    \leq 4 \left(\tilde{B}^2 (1 + U^{\star,(j)})^2 + (V^{(j)}_{t_i^{(j)}})^2 \right).
\end{multline*}
Using the integrability of $U$ and $V$ described in Assumption~\ref{assumption:5} we have
\begin{equation}
\label{equ:dominating bound loss function}
\E\left[\sup_{\theta \in \Theta_m} |W(\theta, \xi_j)|\right] 
\leq   \E\left[\frac{1}{n^{j}(\xi_j)}\sum_{i=1}^{n^{j}(\xi_j)} 4 \left(\tilde{B}^2 (1 + U^{\star,(j)})^2 + (V^{(j)}_{t_i^{(j)}})^2 \right) \right]  < \infty.
\end{equation}
Using Lemma~\ref{lem:properties for MC conv thm}, the function $\theta \mapsto W(\theta, \xi_1)$ is continuous, therefore, Lemma~\ref{lemma:convergencelocallyuniform} implies that a.s for $N \to \infty$, the function 
\begin{equation}\label{equ:unif conv 1}
    \theta  \mapsto \frac{1}{N}\sum_{i=1}^{N}W(\theta, \xi_i) = \hat{L}_N (\theta)
\end{equation}
 converges locally uniformly on $\Theta_m$ to the continuous function 
 \begin{equation}\label{equ:unif conv 2}
    \theta  \mapsto \E[W(\theta, \xi_1)]=L(\theta).
\end{equation}

\textbf{Step 2:} we show that $L(\theta^{\min}_{m,N}) \xrightarrow{N \to \infty} L(\theta^{\min}_{m})$ and $\hat L_N(\theta^{\min}_{m,N}) \xrightarrow{N \to \infty} L(\theta^{\min}_{m})$.

Lemma~\ref{lemma:convergencelocallyuniform} also implies that $\lim_{N \to \infty} \min_{\vartheta \in \Theta^{\min}_m} \left|\theta^{\min}_{m,N} - \vartheta \right|_2= 0$ (a.s.). Thus there exists a sequence $(\hat\theta^{\min}_{m,N})_{N \in \N}$ in $\Theta_m^{\min}$ such that $\lim_{N \to \infty} \lvert \theta^{\min}_{m,N} - \hat\theta^{\min}_{m,N} \rvert_2 = 0$ (a.s.). 
Since the function $\theta \mapsto G_t^\theta$ on $\Theta_m$ is uniformly continuous, we have for any fixed deterministic and bounded $\xi_0$ 
$$\lim_{N \to \infty} \lvert G_{t}^{\theta^{\min}_{m,N}}(\xi_0) - G_{t}^{\hat\theta^{\min}_{m,N}}(\xi_0) \rvert_2 = 0 \text{ a.s. for all  } t \in [0,T].$$ 
Using the fact that $\phi$ is continuous, we get $\lim_{N \to \infty} \lvert W(\theta^{\min}_{m,N}, \xi_0) - W(\hat\theta^{\min}_{m,N}, \xi_0) \rvert = 0$ (a.s.).  We assume now that $\xi_0$ is an i.i.d.\ random variable distributed as the $\xi_j$ defined on a copy $(\Omega_0, \mathbb{F}_0, \mathcal{F}_0, \P_0)$ of the filtered probability space $(\Omega, \mathbb{F}, \mathcal{F}, \P)$. Then for $\P_0$-almost every fixed $w_0$, we have $$\lim_{N \to \infty} \lvert W(\theta^{\min}_{m,N}, \xi_0(w_0)) - W(\hat\theta^{\min}_{m,N}, \xi_0(w_0)) \rvert = 0\quad \text{(a.s.)}.$$ 
Hence we have for $\P$-a.e.\ fixed $\omega \in \Omega$, that $\lvert W(\theta^{\min}_{m,N}\omb, \xi_0) - W(\hat\theta^{\min}_{m,N}\omb, \xi_0) \rvert \to 0$ $\P_0$-a.s.\ as $N \to \infty$.
Using \eqref{equ:dominating bound loss function}, we apply the dominated convergence theorem, leading to
\begin{equation*}
\lim_{N \to \infty} \E_{\xi_0}\left[ \lvert W(\theta^{\min}_{m,N}\omb, \xi_0) - W(\hat\theta^{\min}_{m,N}, \xi_0) \rvert \right]  = 0, 
\end{equation*}
for $\P$-a.e. $\omega \in \Omega$. We note that $\omega$ corresponds to the realisation of the training samples, which lead to the trained parameters, while $\omega_0$ and $\xi_0$ correspond to the validation or test samples (which can also coincide with the training samples, see below).
We know that for every integrable random variable $Z$, we have $0 \leq \lvert \E[Z] \rvert \leq \E[\lvert Z \rvert] $ and noting that $\hat\theta^{\min}_{m,N}\in \Theta_m^{\min}$ we can deduce that for $\P$-almost every fixed $\omega \in \Omega$,
\begin{equation}
\label{equ: MC convergence}
\lim_{N \to \infty} L(\theta^{\min}_{m,N}\omb) = \lim_{N \to \infty} \E_{\xi_0}\left[  W(\theta^{\min}_{m,N}\omb, \xi_0) \right] = \lim_{N \to \infty} \E_{\xi_0}\left[  W(\hat\theta^{\min}_{m,N}\omb, \xi_0) \right] = L(\theta^{\min}_m).
\end{equation}

For $\P$-a.e.\ fixed $\omega$, we have for $\hat L_{\tilde N}$ and $L$ defined through test samples $\tilde{\xi}_j$ on $\Omega_0$ that
\begin{equation}\label{equ: MC convergence 2}
\lvert \hat L_{\tilde N}(\theta^{\min}_{m, N}\omb) -  L(\theta^{\min}_{m}) \rvert \leq \lvert \hat L_{\tilde N}(\theta^{\min}_{m, N}\omb) -  L(\theta^{\min}_{m, N}\omb) \rvert + \lvert L(\theta^{\min}_{m, N}\omb) -  L(\theta^{\min}_{m}) \rvert.
\end{equation}
Using \textbf{step 1} when $\tilde N \to \infty$, the first term on the right side goes to $0$ and with \eqref{equ: MC convergence}, the second term goes to $0$. Now, due to the uniform convergence in \labelcref{equ:unif conv 1,equ:unif conv 2}, we have the same convergence when setting $\tilde N = N$, i.e., when evaluating on the training samples. Therefore, we have shown that $L(\theta^{\min}_{m,N}) \xrightarrow{N \to \infty} L(\theta^{\min}_{m}) \quad \text{and} \quad \hat L_N(\theta^{\min}_{m,N}) \xrightarrow{N \to \infty} L(\theta^{\min}_{m}).$

\textbf{Step 3:} Finally, we show convergence in $d_k$. 

Define $N_0 := 0$ and for each $m \in \mathbb{N}$,
$$
N_m(\omega) := \min\left\{ N \in \mathbb{N} \; \vert \; N > N_{m-1}(\omega), \left| L(\theta^{\min}_{m,N}(\omega)) - L(\theta^{\min}_{m}) \right| \leq \frac{1}{m} \right\},
$$
which is achievable due to \eqref{equ: MC convergence} for almost every $\omega \in \Omega $ under $(\mathbb{P} \times \tilde{\mathbb{P}})$. With Theorem \ref{thm:1}, this choice of $N_m$ implies that for almost every $\omega \in \Omega $,
$$
\left| L(\theta^{\min}_{m, N_m(\omega)}(\omega)) - \Psi(\hat{V}) \right| \leq \frac{1}{m} + \left| L(\theta^{\min}_{m}) - \Psi(\hat{V}) \right| \xrightarrow{m \to \infty} 0.
$$
Thus, by employing similar arguments as in the proof of Theorem \ref{thm:1} (starting from \eqref{equ:proof thm bound 1}), we can demonstrate that
$$
d_k \left( \hat{V} , G^{\theta^{\min}_{m, N_m(\omega)}(\omega)} \right) \leq \frac{c_0(k) \, c_1 \, c_3}{c_2} \left( L(\theta^{\min}_{m, N_m(\omega)}(\omega)) - \Psi(\hat{V}) \right)^{1/2} \xrightarrow{m \to \infty} 0,
$$
for each $1 \leq k \leq K$ and for almost every $\omega \in \Omega $ under $(\mathbb{P} \times \tilde{\mathbb{P}})$.
\end{proof}

\subsection{Different Observation Times for Input and Output Variables}\label{sec:Different observation times for input and output variables}
Our model can also deal with settings, where inputs and outputs are not observed simultaneously. In our notation, this means that certain input or output coordinates are masked via the observation mask $M$.

In some real-world applications, there is zero probability of observing the input and the output at exactly the same time. Then \Cref{assumption:1} would be violated. However, the following corollary shows how this assumption can be relaxed.
\begin{cor}
    When we remove the condition, 
    \begin{equation}
        \P (M_{k,j} =1 ) > 0 \; \text{ for all } \; d_U+1 \leq j \leq d
    \end{equation}
    of \Cref{assumption:1} we still obtain all the results of \Cref{thm:1}, but with the (weaker) pseudo-metric
    \begin{equation*}\label{equ: weaker pseudo metric}
    \tilde d_k (\eta, \xi) = c_0(k)\,  \E\left[ \1_{\{n \geq k\}} \left( | \proj{V}(M_{t_i}) \odot \left( \eta_{t_k-} - \xi_{t_k-} \right) | + | \proj{V}(M_{t_i}) \odot \left(  \eta_{t_k} - \xi_{t_k}  \right) | \right) \right]
\end{equation*}
instead of the pseudo-metric $d_k$.
\end{cor}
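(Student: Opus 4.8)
The plan is to revisit the proof of \Cref{thm:1} and isolate precisely where the dropped positivity condition $\P(M_{k,j}=1)>0$ is actually used. Tracing through that argument, the condition enters only through the constant $c_2(k) := \min_{d_U+1 \leq j \leq d} \P(M_{k,j}=1) > 0$ and, as a consequence, through the inequality \eqref{equ:M split}. The \emph{sole} purpose of \eqref{equ:M split} is to pass from the \emph{masked} difference $\proj{V}(M_{t_k}) \odot (\hat{V}_{t_k} - \eta_{t_k})$ to the \emph{full} difference $\hat{V}_{t_k} - \eta_{t_k}$, thereby producing the metric $d_k$. Since the weaker metric $\tilde{d}_k$ retains the projection $\proj{V}(M_{t_k})$ inside its definition, this conversion is no longer required, and the whole argument can be carried out verbatim up to the point just before \eqref{equ:M split} is invoked.

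Concretely, in \textbf{Step 1} (uniqueness) I would reproduce the chain of inequalities culminating in \eqref{temp:676}, which already bounds the positive second addend of $\Psi(\eta) - \Psi(\hat{V})$ from below by
\[
\frac{1}{2c_1^2}\,\E\left[\1_{\{k\leq n\}}\left(\left\lvert \proj{V}(M_{t_k}) \odot (\hat{V}_{t_k-} - \eta_{t_k-})\right\rvert_2 + \left\lvert \proj{V}(M_{t_k}) \odot (\hat{V}_{t_k} - \eta_{t_k})\right\rvert_2\right)\right]^2,
\]
using only \eqref{equ:HI} and the elementary inequality $(a+b)^2 \leq 2a^2 + 2b^2$. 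Instead of then applying \eqref{equ:M split}, I would recognize the right-hand side, up to norm equivalence and the factor $c_0(k)^{-2}$, as a positive multiple of $\tilde{d}_k(\hat{V},\eta)^2$. This shows $\Psi(\eta) > \Psi(\hat{V})$ whenever $\tilde{d}_k(\hat{V},\eta) > 0$ for some $k$, i.e., $\hat{V}$ is the unique minimizer up to indistinguishability \emph{measured in $\tilde{d}_k$}.

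\textbf{Step 2} of the original proof concerns only the approximation capability of the IO NJODE and invokes \Cref{assumption:4,assumption:5,assumption:6} (and \Cref{assumption:2}) together with the universal approximation results; it never uses the dropped condition, so it transfers unchanged and $L(\theta_m^{\min}) \to \Psi(\hat{V})$ still holds. For \textbf{Step 3} (convergence in the metric) I would start from \eqref{equ:proof thm bound 1}, which writes $L(\theta_m^{\min}) - \Psi(\hat{V})$ as the sum of the two expected \emph{masked} squared differences. Applying \eqref{equ:HI} to each of the two terms of $\tilde{d}_k(\hat{V}, G^{\theta_m^{\min}})$ separately and bounding the single $k$-th summand by the full sum over $i$ yields directly
\[
\tilde{d}_k(\hat{V}, G^{\theta_m^{\min}}) \leq C_k\,\left(L(\theta_m^{\min}) - \Psi(\hat{V})\right)^{1/2} \xrightarrow{m\to\infty} 0,
\]
with an explicit constant $C_k$ depending only on $c_0(k)$, $c_1$ and a norm-equivalence factor, again bypassing \eqref{equ:M split}.

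I expect no genuine obstacle here: the weaker metric $\tilde{d}_k$ is engineered to absorb exactly the masking that \eqref{equ:M split} previously had to undo, so the removal of the positivity condition is compensated coordinate for coordinate. The only care needed is bookkeeping, namely verifying that $c_2(k)$ and \eqref{equ:M split} appear nowhere else in the proof of \Cref{thm:1}, so that every other ingredient (in particular the $L^2$-optimality of $\hat{V}$ from \Cref{lem:L2 identity} and the entirety of Step 2) is untouched. It is worth recording the complementary observation that when $\proj{V}(M_{t_k})$ vanishes almost surely, $\tilde{d}_k \equiv 0$ and the uniqueness statement becomes vacuous at $t_k$; this faithfully reflects that $V$ is then unobservable at $t_k$ and cannot be learned there, which is exactly the honest content of the weakened assumption.
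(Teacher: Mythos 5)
Your proposal is correct and takes essentially the same approach as the paper: the paper's proof is the one-line observation that the positivity condition is needed only to strip $\proj{V}(M_{t_k})$ from the expectation via \eqref{equ:M split}, which your more detailed tracing through Steps 1--3 confirms and elaborates. Your closing remark about the vacuous case where $\proj{V}(M_{t_k})$ vanishes almost surely is a sensible addition but not part of the paper's argument.
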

\begin{proof}
    Revisiting the proof of \Cref{thm:1}, it is apparent that this condition 
    of \Cref{assumption:1} is only needed to remove $\proj{V}(M_{t_i})$ from the expectation (see \Cref{equ:M split}). 
\end{proof}

\section{Examples Satisfying the Assumptions}\label{sec:Examples}

In the following, we provide examples for which we show that our assumptions of \Cref{sec:Notation and assumptions} are satisfied.
In \Cref{sec:Parameter Filtering} we introduce a class of parameter filtering examples, where it is easy to verify the assumptions.
Additionally, in \Cref{sec:Stochastic Filtering of Brownian Motion} we recall the Brownian motion filtering example \citet[Section 7.6]{krach2022optimal}, which can  be adapted for the input-output setting.
Online classification as a special case of predicting state space models is discussed in \Cref{sec:SSM and classification}.
These examples are chosen sufficiently simple to compute the ground truth conditional expectation $\hat{V}_t = \E\left[ V_t | \mathcal{A}_t \right]$ as a reference. However, the strength of our method is its ability to work for much more general settings, where other methods would fail.

\subsection{Parameter Filtering}\label{sec:Parameter Filtering}

One application area for the IO NJODE model is \emph{parameter filtering} examples. Here we consider a stochastic process $(X^{\alpha_t}_t)_{t\in [0,T]}$, whose distribution depends on unknown random parameters $\alpha = (\alpha_t)_{t \in [0,T]}$ with  values in $\R^{d_V}$. We focus on the special case of parameters with time dependence given by $\alpha_t = \alpha_0 \odot \phi(t)$, where $\alpha_0$ is a random variable in $\R^{d_V}$ and $\phi : [0,T] \to \R^{d_V}$ is a deterministic continuously differentiable function; as before,  $\odot$ is the element-wise (Hadamard) product.  
For any fixed choice of $\alpha$, the distribution of $X^\alpha$ is fixed (and can, for example, be used for sampling). 
In a Bayesian-like approach, we are interested in solving the following problem. Given a (prior) distribution of the parameters $\alpha$, we want to infer (or filter) these parameters only from observations of the process $X^\alpha$. In particular, the input variables for our model are $U=X^\alpha$ and the output variables are $V=\alpha$. Note that \Cref{assumption:3} is automatically satisfied due to continuity of $\alpha$.
We assume that the observation framework is independent of $(X^\alpha, \alpha)$ (\Cref{ass:independence}) and satisfies the associated assumptions (\Cref{assumption:1,assumption:6}). 
Then, the conditional expectation $\hat{V}$ satisfies
\begin{equation*}
    \hat{V}_t = \E\left[ \alpha_t | \mathcal{A}_t \right] = \phi(t) \, \E\left[ \alpha_0 | \mathcal{A}_t \right],
\end{equation*}
hence, the functions $F_j$, are continuously differentiable in their first input-coordinate $t$, since $\phi$ is so. Therefore, \Cref{assumption:4} is satisfied if $\alpha_0$ is square integrable. Indeed, by Jensen's inequality we have
\begin{equation*}
\begin{split}
    \E\left[ \frac{1}{n}\sum_{i=0}^n F_j(t_i, t_i, \tilde U^{\leq t_i})^2\right] 
    &= \E_{\mu_Q \times \P}[\E[ |V_{\bar t}|_2 | \mathcal{A}_{\bar t}]^2] 
    \leq \E_{\mu_Q \times \P}[ |V_{\bar t}|_2^2]
    \leq  \max_{0 \leq t \leq T}( |\phi(t)|_2^2) \, \E[|\alpha_0|_2^2], \\
    \E\left[ \int_0^T | f_j(t, \tau(t), \tilde U^{\leq \tau(t)})  |^2 dt  \right] 
    &= \E\left[ \int_0^T | \phi'(t) \E[\alpha_0 | \mathcal{A}_{\tau(t)}]  |_2^2 dt  \right] 
    \leq \max_{0 \leq t \leq T}(|\phi'(t)|_2^2) \, T \, \E[|\alpha_0|_2^2],
\end{split}
\end{equation*}
where we used \Cref{lem:expectation weighted sum over t_i terms} and that a continuous function is bounded on a compact set.
Furthermore, \Cref{assumption:5} is satisfied for the choice of $X$ by \citet[Lemma 16.1.4]{cohen2015stochastic}.

\subsubsection{Constant Parameters}

In the following, we give explicit examples of parameter filtering settings, where $\alpha$ is constant, hence, $f_j \equiv 0$.

Note that the distributions of the parameters can easily be altered. In the case of the Brownian motion example below, we choose a normal distribution for the drift parameter, since this allows us to derive an explicit formula for the true conditional expectation to be used as reference solution. While this would otherwise not be possible, it is important to note that our model can deal with any other distribution as long as its second moment exists.

\begin{example}\label{exa:brownian motion uncertain drift}
    We consider a scaled Brownian motion with uncertain drift, i.e., a model of the form
\begin{equation}\label{equ:model equ exa BM w drift}
\begin{split}
    X_t & = x_0 + \mu t + \sigma W_t, \\
    \mu &\sim \mathcal{N}(a, b^2),
\end{split}
\end{equation}
where $\sigma, b > 0$, $a, x_0 \in \R$ are fixed parameters and $W$ is a standard Brownian motion (independent of $\mu$).
The goal is to filter out the parameter $\mu$ from observations of $X$, hence, $U=X$ is the input variable and $V=\mu$ the output variable of the input-output NJODE model.
Clearly, integrability of $\mu$ is satisfied, hence the derivations above yield that the assumptions are satisfied.

We follow the approach in \citet[Section~7.6]{krach2022optimal} to compute the true conditional expectation $\hat{V}_t = \E[\mu | \mathcal{A}_t]$ in this setting. For observation times $t_1, \dotsc, t_k$ of the process $X$, let $\tilde W_i := W_{t_i} - W_{t_{i-1}}$ for $1 \leq i \leq k$.
Then 
\begin{equation*}
    v := (\tilde W_1, \dotsc, \tilde W_k, \mu)^\top \sim N((0,\dotsc, 0, a)^\top, \Sigma)
\end{equation*}
is multivariate normally distributed, where 
\begin{equation*}
    \Sigma := \operatorname{diag}(t_1-t_0, \dotsc, t_k-t_{k-1}, b^2).
\end{equation*}
For $I_l \in \R^{k \times k}$ the lower triangular matrix with $1$ entries, we can express the joint vector of (centered) observations and the target variable as multivariate normal distribution
\begin{equation*}
    (X_{t_1 } - x_0, \dotsc, X_{t_k}-x_0, \mu)^\top = \Gamma v \sim N((t_1 a, \dotsc, t_k a, a)^\top, \Gamma \Sigma \Gamma^\top),
\end{equation*}
where
\begin{equation*}
    \Gamma := \begin{pmatrix}
        \sigma I_l & \begin{matrix}
            t_1 \\ \vdots \\ t_k
        \end{matrix} \\
        0 & 1
    \end{pmatrix}, \quad \text{and} \quad 
    \Gamma \Sigma \Gamma^\top =: \tilde \Sigma = \begin{pmatrix}
        \tilde \Sigma_{11} & \tilde \Sigma_{12} \\
        \tilde \Sigma_{21} & \tilde \Sigma_{22}
    \end{pmatrix},
\end{equation*}
with $\tilde \Sigma_{11} \in \R^{k \times k }$, $\tilde \Sigma_{12} = \tilde \Sigma_{21}^\top \in \R^{k \times 1}$ and   $\tilde \Sigma_{22} = b^2 \in \R$.
Then, \citet[Proposition~3.13]{Eaton2007Multi} implies that the conditional distribution of $(\mu \, |\, X_{t_1 } - x_0, \dotsc, X_{t_k}-x_0)$ is again normal with mean $\hat \mu := a +  \tilde \Sigma_{21} \tilde \Sigma_{11}^{-1} (X_{t_1 } - x_0 - t_1 a, \dotsc, X_{t_k}-x_0 - t_k a)^\top $ and variance $\hat \Sigma :=  \tilde \Sigma_{22} -  \tilde \Sigma_{21} \tilde \Sigma_{11}^{-1} \tilde \Sigma_{12}$.
In particular, since $\mu_t = \mu $ is constant in time and $x_0$ is deterministic, we have for $t_k \leq t \leq t_{k+1}$
\begin{equation*}
    \hat{V}_t = \E[\mu_t | \mathcal{A}_t] = \E[\mu | X_{t_1 }, \dotsc, X_{t_k}] = \hat{\mu}. 
\end{equation*}
\end{example}

In the following example, no explicit formula for the conditional expectation can be derived. Therefore, we resort to two alternative approaches to derive reference solutions. First, we use a particle filter, which converges to the optimal solution as the number of particles increases. We note that in contrast to our model, full knowledge of all distributions is necessary to apply the particle filter. The second approach uses a typical financial estimator, which does not need such knowledge.

\begin{example}\label{exa:geometric Brownian motion uncertain params}
    Here we consider a geometric Brownian motion with random drift and volatility parameter, starting from $X_0 = x_0$
    \begin{equation}\label{exa:geometric Brownian motion}
    \begin{split}
        dX_t & = \mu X_t dt + \sigma X_t dW_t, \\
        \mu &\sim \mathcal{N}(a, b^2), \\
        \sigma &\sim \operatorname{Unif}([\sigma_{\min}, \sigma_{\max} ]),
    \end{split}
    \end{equation}
    where $a \in \R$, $b > 0$ and $0 < \sigma_{\min} < \sigma_{\max}$ are fixed and $W$ is a standard Brownian motion.
    The goal is to filter the parameters $V=(\mu, \sigma)$ from observations of $U=X$. The integrability of $\mu, \sigma$ implies that \Cref{assumption:4,assumption:5} are satisfied (cf.\ \Cref{sec:Parameter Filtering}). 
    In this case, there is no explicit form for the true conditional expectation $\hat V_t = \E[(\mu,\sigma) | \mathcal{A}_t]$. 
    Instead we suggest two alternative approaches. The first one approximates the true conditional expectation via particle filtering (see e.g.\ \citet{djuric2003particle}) with sequential importance sampling (SIS), which is composed of the initialization, prediction and correction step. The second one is an alternative reference parameter estimation often used in finance.
    
    For particle filtering, the \emph{initialization} of the parameters $\mu, \sigma$ is done via their joint distribution 
    $$\pi_0(\mu,\sigma) = \varphi(\mu;a,b)\dfrac{1}{\sigma_{\max}-\sigma_{\min}}\ind{\sigma \in [\sigma_{\min}, \sigma_{\max} ]},$$
    where $\varphi(\mu;a,b)$ is the probability density function of the normal distribution $\mathcal{N}(a, b^2)$. 
    In particular, $N_p$ particles $(\mu^{(j)}, \sigma^{(j)})$ for $1\leq j\leq N_p$ are sampled from this distribution and each particle is given the equal weight $w_0^{(j)} = \tfrac{1}{N_p}$.
    After initialization, the prediction step is applied repeatedly until a new observation becomes available, whence the correction step is triggered.
    For the \emph{prediction step} it is enough to note that $\mu$ and $\sigma$ are constant parameters, hence, the predicted distribution at any given time $t$ remains the same as the posterior distribution from the previous step. In the \emph{correction step}, the distribution is updated using Bayes' theorem with the new observation $X_{t_i}=x_i$ at time $t_i$. The transition density $p(x_i|x_{i-1},\mu,\sigma)$ for the system \eqref{exa:geometric Brownian motion} is given by
    \begin{equation*}
        p(x_i|x_{i-1},\mu,\sigma) = \dfrac{1}{x_i \sigma \sqrt{2\pi (t_i - t_{i-1})}}\exp{\left(- \dfrac{\left[\ln \left(\frac{x_i}{x_{i-1}}\right) - \left(\mu-\frac{\sigma^2}{2}\right)(t_i - t_{i-1}) \right]^2}{2\sigma^2 (t_i - t_{i-1})}\right)}.
    \end{equation*}
    Hence, the posterior distribution is updated as
    $\pi_{t_i}(\mu,\sigma) \propto  p(x_i|x_{i-1},\mu,\sigma)\pi_{t_{i-1}}(\mu,\sigma),$
    leading to the updated weights
    \begin{align*}
        \hat{w}^{(j)}_i &= w^{(j)}_{i-1} p(x_i^{(j)}|x^{(j)}_{i-1},\mu^{(j)},\sigma^{(j)}), &
        {w}^{(j)}_i &= \frac{\hat{w}^{(j)}_i}{\sum_{j=1}^{N_p} \hat{w}^{(j)}_i} .
    \end{align*}
    Then, the conditional expectations are given by
    \begin{align*}
            \hat \mu_{t_i} &= \displaystyle\int \mu \pi_{t_i}(\mu,\sigma) d(\mu ,\sigma) \approx \sum_{j=1}^{N_p} \mu^{(j)} w^{(j)}_i, &
            \hat \sigma_{t_i} &= \displaystyle \int \sigma \pi_{t_i}(\mu,\sigma) d(\mu ,\sigma) \approx \sum_{j=1}^{N_p} \sigma^{(j)} w^{(j)}_i .
    \end{align*}
    Compared to our method, which is fully data-driven, the particle filter needs knowledge of the involved distributions.

    For the second approach, we can use (a generalization for irregular time steps of) the standard financial estimator  of $\mu$ and $\sigma$ as reference \citep{khaled2010estimation}. In particular, we have $X_{t_{i+1}} = X_{t_i} \exp{\left((\mu - \sigma^2/2) \Delta t_i + \sigma \Delta W_{i}\right)}$, where $\Delta t_i = t_{i+1} - t_i$ and $\Delta W_i = W_{t_{i+1}}- W_{t_i}$. 
    If we assume that $\Delta t_i \geq c$ for some constant $c > 0$ and for all $i \in \N$, then the law of large numbers \citep[Theorem~1]{csorgHo1983strong} implies for $r_i := \log(X_{t_{i+1}} / X_{t_{i}})$,
    \begin{equation*}
        \hat m_N := \frac{1}{N} \sum_{i=1}^N  \frac{r_i}{\Delta t_i} \xrightarrow[N \to \infty]{\P-a.s.} \mu - \sigma^2/2.
    \end{equation*}
    Moreover, $r_i / \sqrt{\Delta t_i} - (\mu - \sigma^2/2) \sqrt{\Delta t_i} \sim N(0, \sigma^2)$, therefore,
    \begin{equation*}
        \hat \sigma_N^2 := \frac{1}{N} \sum_{i=1}^N (r_i / \sqrt{\Delta t_i} - \hat m_N \sqrt{\Delta t_i})^2 \xrightarrow[N \to \infty]{\P-a.s.} \sigma^2.
    \end{equation*}
    This yields the estimators for drift and diffusion $\hat \mu_N := \hat m_N + \sigma^2_N/2$ and $\sqrt{\hat \sigma^2_N}$, respectively.
    It is important to keep in mind that these estimators are not an approximation of the conditional expectation and will therefore not optimize the loss function \eqref{equ:Phi}.
\end{example}

In the previous examples, the increments of the underlying process have a normal or log-normal distribution, which are relatively easy special cases. In the case of normally distributed increments, the standard Kalman filter \citep[which needs knowledge of the underlying distribution, as the particle filter;][]{kalman1960new} is known to be the optimal filter, retrieving the true conditional expectation (see \Cref{sec:Kalman Filter}). 

To show that our model can also deal with nonstandard cases, we consider a Cox--Ingersoll--Ross (CIR) process in the following example, where the distribution of the increments is much more complex. Again, we use a particle filter approximating the true conditional expectation as reference solution.

\begin{example}\label{exa:CIR constant params}
    We consider a Cox--Ingersoll--Ross (CIR) process with random speed, mean and volatility parameters starting from $X_0=x_0$, given by
    \begin{equation}\label{equ: CIR}
    \begin{split}
        dX_t & = a(b-X_t) dt + \sigma \sqrt{X_t} dW_t, \\
        a &\sim \operatorname{Unif}([a_{\min}, a_{\max} ]), \\
        b &\sim \operatorname{Unif}([b_{\min}, b_{\max} ]), \\
        \sigma &\sim \operatorname{Unif}([\sigma_{\min}, \sigma_{\max} ]),
    \end{split}
    \end{equation}
    where $W$ is a standard Brownian motion and $0 < a_{\min} < a_{\max}$, and similarly for $b$ and $\sigma$, are fixed.
    The goal is to filter $V=(a,b,\sigma)$ from observations of $U=X$. \Cref{assumption:4,assumption:5} are satisfied due to square-integrability of all parameter distributions (cf.\ \Cref{sec:Parameter Filtering}). 
    
    We use the same particle filter approach for deriving a reference solution as in \Cref{exa:geometric Brownian motion uncertain params}. 
    The initial distribution of the parameters is
    \begin{equation*}
        \pi_0(a,b,\sigma) = \mathcal{U}(a; a_{\min}, a_{\max}) \, \mathcal{U}(b; b_{\min}, b_{\max}) \, \mathcal{U}(\sigma; \sigma_{\min}, \sigma_{\max}),
    \end{equation*}
    where $ \mathcal{U}(x; u_1, u_2) = (u_2 - u_1)^{-1} \ind{ \{ x \in [u_1, u_2] \} }$ is the probability density function of the uniform distribution. Moreover, the transition density from $X_{t_{i-1}}=x_{i-1}$ to $X_{t_i}=x_i$ with $\Delta_i = t_i - t_{i-1} $ is given by \citep{CIR}
    \begin{equation}\label{equ: CIR transition density}
        p(x_i | x_{i-1}, a,b,\sigma) = c \, e^{-u-v} \left( \frac{v}{u} \right)^{q/2} \, I_q(2 \sqrt{uv}),
    \end{equation}
    where 
    \begin{equation*}
        c = \frac{2a}{(1-e^{-a \Delta_i}) \sigma^2}, \; q=\frac{2ab - \sigma^2}{\sigma^2}, \; u = c x_{i-1} e^{-a \Delta_i}, \, v = c x_i,
    \end{equation*}
    and $I_q$ is a modified Bessel function of the first kind of order $q$.
\end{example}

\subsubsection{Time-Dependent Parameters}
Finally, we extend the last example by making one of the parameters time-dependent.

\begin{example}\label{exa:CIR time-dep params}
    We consider the Cox--Ingersoll--Ross (CIR) process as defined in \eqref{equ: CIR}, but with time dependent mean parameter 
    \begin{equation*}
        b_t = b_0 (1 + \sin(w \, t) /2), \quad b_0 \sim \operatorname{Unif}([b_{\min}, b_{\max} ]),
    \end{equation*}
    where $w \in R$ is a fixed. 
    For the particle filter, we approximate the transition density by $p(x_i | x_{i-1}, a,b_{t_{i-1}},\sigma)$ as in \eqref{equ: CIR transition density}, i.e., by setting $b$ to the value at the last observation.  
\end{example}

\subsection{Stochastic Filtering of Brownian Motion}\label{sec:Stochastic Filtering of Brownian Motion}
We briefly recall the example of \citet[Section 7.6]{krach2022optimal}, adapted to the context of input-output models, that is, the observation process serves as input, while the signal process is the target output. 
Let $X, W$ be two i.i.d. 1-dimensional Brownian motions, let $\alpha \in \R$, then $X$ is the unobserved signal (output) process and $Y := \alpha X + W$ is the observation (input) process.
We follow \citet[Section 7.6]{krach2022optimal} to derive the analytic expressions of the conditional expectations.
Let $t_i$ be the observation times and define 
$ v := ( Y_{t_1}, \dots, Y_{t_k}, X_{t_k} )^\top \sim N(0,\mathbf\Sigma)$, where $\mathbf\Sigma \in \R^{(k+1)\times (k+1)}$ with entries
$$\mathbf{\Sigma}_{i,j} = \begin{cases}
    (\alpha^2 + 1) t_{\min(i,j)}, & \text{if } i,j \leq k\\
    \alpha t_{\min(i,j)}, & \text{if } (i \leq k,  j = k+1) \text{ or } (i = k+1,  j \leq k) \\
    t_k , & \text{if } i,j = k+1\\
\end{cases}. $$
If we write
$$  \mathbf\Sigma =: \begin{pmatrix}
 \Sigma_{11} &  \Sigma_{12} \\
 \Sigma_{21} &  \Sigma_{22}
\end{pmatrix},$$
where $ \Sigma_{11} \in \R^{k\times k }$, $ \Sigma_{12} =  \Sigma_{21}^\top \in \R^{k \times 1}$ and   $ \Sigma_{22} = \operatorname{Var}(X_{t_k}) = t_k \in \R^{1\times 1 }$, then the conditional distribution of $(X_{t_k} \, | \, Y_{t_1}, \dotsc, Y_{t_{k}})$ is again normal with mean $\hat \mu :=  \Sigma_{21}  \Sigma_{11}^{-1} (Y_{t_1}, \dotsc, Y_{t_{k}})^\top $ and variance $\hat \Sigma :=   \Sigma_{22} -   \Sigma_{21}  \Sigma_{11}^{-1}  \Sigma_{12}$ \citep[Proposition~3.13]{Eaton2007Multi}. 
In particular, we have for any $s>0$ that  $\E[ X_{t_k + s} | \mathcal{A}_{t_k} ] = \hat \mu$.

\subsection{Finite State Space Processes and Online Classification}\label{sec:SSM and classification}
The Input-Output NJODE model can be applied to finite state space processes (SSP) and as a special case to online classification tasks.
Let $(S_t)_{t \in [0,T]}$ be a finite SSP, i.e., a stochastic process taking values in a finite state space $\mathcal{S}:=\{ s_1, \dotsc s_{N_S} \}$, where $N_S \in \N_{\geq 2}$ is the number of states, and assume we want to predict the state of the model given (irregular and incomplete) observation of an (arbitrary) feature process $U$ that satisfies \Cref{assumption:5}. 
Choosing the target output process $V$ to be coordinate-wise given by the indicator functions for the different states, $V_{t,j} := \ind{S_t = s_j} $, yields the conditional expectation
\begin{equation*}
    \hat{V}_t = \E[V_t | \mathcal{A}_t] = (\P[S_t = s_j | \mathcal{A}_t] )_{1 \leq j \leq N_S}.
\end{equation*}
Hence, training our model on the input-output system $(U,V)$ leads to an approximation of the state probabilities $\hat{V}$, if $\hat V$ satisfies \Cref{assumption:4}. Note that $V$ automatically satisfies the integrability \cref{assumption:5}, since it is bounded by $1$, and for the same reason \Cref{equ:assumption4 bound} of \Cref{assumption:5} can be reduced to 
\begin{equation*}
\E\left[ \int_0^T | f_j(t, \tau(t), \tilde U^{\leq \tau(t)})  |^2 dt  \right] < \infty.
\end{equation*}

Online classification is the special case where the states correspond to the different classes. Clearly, we can always reduce the dimension of $V$ by $1$ there, since $|V_t|_1 = 1$. The following example shows an easy case of this online classification problem based on a Brownian motion.

\begin{example}\label{exa:BM classification}
    Let $U := W$ be a standard Brownian motion, $\alpha \in \R$ and define $S_t := \ind{W_t \geq \alpha}$ to be the state space model indicating whether the Brownian motion is larger than $\alpha$. Hence, this is a classification task with $2$ classes such that it is enough to consider the 1-dimensional $V := S$. 
    The conditional expectation of $V$ can be computed analytically as
    \begin{equation*}
    \begin{split}
        \hat{V}_t 
        = \P\left[ W_t \geq \alpha | \mathcal{A}_t \right] 
        = \P\left[ \frac{(W_t - W_{\tau(t)})}{\sqrt{t - \tau(t)}} \geq  \frac{\alpha - W_{\tau(t)}}{\sqrt{t - \tau(t)}} \middle| W_{\tau(t)} \right] = 1 - \Phi\left(\frac{\alpha - w}{\sqrt{t - \tau(t)}}\middle)\right|_{w = W_{\tau(t)}},
    \end{split}
    \end{equation*}
    using the properties of a Brownian motion and $\Phi$ to be the cumulative distribution function of the standard normal distribution. Hence, it is easy to verify with the Leibniz integral rule that \Cref{assumption:4} is satisfied.
\end{example}

\section{Experiments}\label{sec:Experiments}

We test the applicability of the framework on synthetic datasets with reference solutions, which were discussed in \Cref{sec:Examples}.

To measure the quality of the trained models in the synthetic examples we use the \emph{validation loss} or the  \emph{test loss}, i.e., the loss \eqref{equ:Phi} on the validation or test dataset (which should be minimized by the true conditional expectation) and the \emph{evaluation metric} \citep[see][Section~8]{krach2022optimal}, which computes the distance between the model's prediction and the true conditional expectation on a time grid averaged over all test samples. We report the results for the best early stopped model, selected based on the validation loss. 

The code for running the experiments is available at \code\ and further details about the implementation of the experiments can be found in \Cref{sec:Experimental Details}.

\subsection{Scaled Brownian Motion with Uncertain Drift}\label{sec:Scaled Brownian Motion with Drift}
We consider a scaled Brownian motion with uncertain drift, as described in \Cref{exa:brownian motion uncertain drift}. 
In particular, we set $x_0 = 0, \sigma=0.2, a = 0.05$ and $b=0.1$.
The model gets the (irregular) observations of $X$ as input and tries to filter out the randomly sampled drift coefficient $\mu$. Additionally, we let the model predict the conditional expectation of $\mu^2$, which allows us to derive the conditional variance \citep[see][Section 5]{krach2022optimal}
\begin{equation*}
    \operatorname{Var}[\mu \, | \, \mathcal{A}_t] = \E[\mu^2 \, | \, \mathcal{A}_t] - \E[\mu \, | \, \mathcal{A}_t]^2 ,
\end{equation*}
estimating the aleatoric\footnote{If one trains a model on finitely many paths to estimate $\operatorname{Var}[\mu \, | \, \mathcal{A}_t]$, the inaccuracies of this model additionally introduce epistemic uncertainty which is not captured by $\operatorname{Var}[\mu \, | \, \mathcal{A}_t]$. $\operatorname{Var}[\mu \, | \, \mathcal{A}_t]$ can either be seen as aleatoric or epistemic uncertainty depending on the point of view. If we consider each partially observed path as one sample, then $\operatorname{Var}[\mu \, | \, \mathcal{A}_t]$ cannot be reduced by collecting more training samples nor by improving our NJODE, but it can be reduced by modifying the features $\mathcal{A}_t$ inputted to the NJODE. From this perspective, $\operatorname{Var}[\mu \, | \, \mathcal{A}_t]$ is aleatoric uncertainty according to \citet{CLEARcalibratedlearningepistemic}. However, from another perspective, where we consider each observation of one path as a sample, we can also interpret $\operatorname{Var}[\mu \, | \, \mathcal{A}_t]$ as epistemic uncertainty.} uncertainty.
In \Cref{fig:BMwUD} we show one test sample of the dataset, where we plot the conditional mean plus/minus the conditional standard deviation (approximately corresponding to a $68\%$ normal confidence interval). We see that the model correctly learns to predict the mean $a$ of the distribution of $\mu$ at $t=0$, where no additional information is available, and also updates its predictions correctly when new observations of $X$ become available. 
The conditional variance is slightly underestimated by the model and less accurate than the conditional expectation. The lower accuracy can be explained by the fact that the values of $\mu^2$ are mostly smaller in magnitude than the values of $\mu$, implying that they are less important in the loss (due to their smaller scaling). Additionally, by plotting the standard deviation, i.e., the square root of the variance, small errors of the (small) variance predictions are scaled up. 
The underestimation of the variance is further discussed below in \Cref{rem:Underestimating the Variance}.
The evaluation metric (which only compares the conditional mean, not the variance) is $1.7 \cdot 10^{-6}$, where the validation loss (which uses the predictions of $\mu$ and $\mu^2$) of the model has the minimal value (among trained epochs) of $0.01848$, close to the optimal validation loss $0.01846$ of the true conditional expectation.
\begin{figure}
\centering
\includegraphics[width=0.6\textwidth]{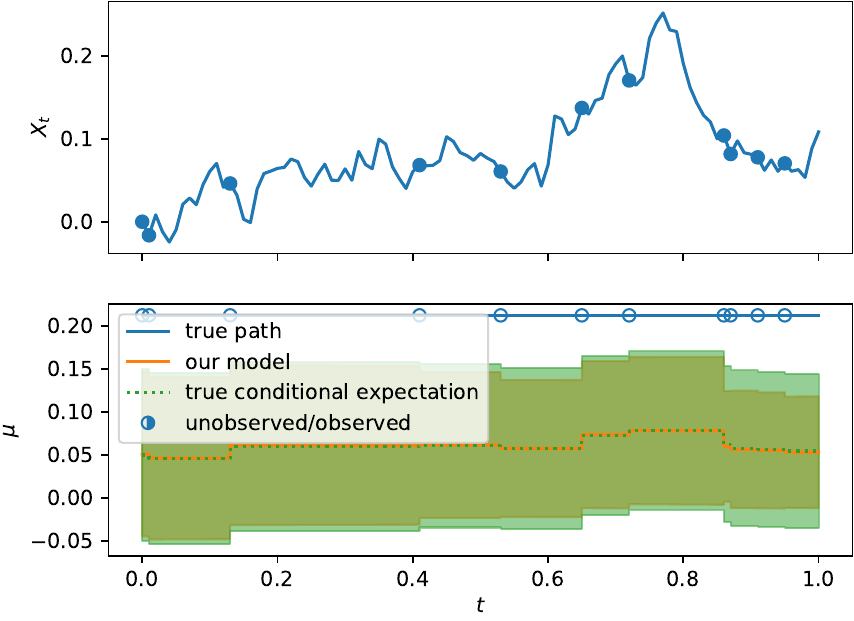}
\caption{Predicted and true conditional expectation $\pm$ standard deviation of the uncertain drift on a test sample of the scaled Brownian motion with random drift. The input coordinate $U=X$ is plotted on top and the output coordinate $V=\mu$ on the bottom. The PD-NJODE replicates well the true conditional expectation of the drift.}
\label{fig:BMwUD}
\end{figure}

\begin{rem}[Underestimating the Variance]\label{rem:Underestimating the Variance}
    \citet[Section~5.1]{krach2022optimal} shows that this estimator of the conditional variance is asymptotically unbiased. However, if we train our model only on a finite number of paths, our model can be slightly biased. In the case of conditional variance, our model tends to underestimate the conditional variance for the following reasons: Even if our model gave unbiased but noisy estimations of $\E[\mu^2 \, | \, \mathcal{A}_t]$ and $\E[\mu \, | \, \mathcal{A}_t]$, we would tend to overestimate $\E[\mu \, | \, \mathcal{A}_t]^2$ due to the strong convexity of the function $x \mapsto x^2$. An overestimation of $\E[\mu \, | \, \mathcal{A}_t]^2$ together with an unbiased estimator of $\E[\mu^2 \, | \, \mathcal{A}_t]$ results in underestimating $\operatorname{Var}[\mu \, | \, \mathcal{A}_t] = \E[\mu^2 \, | \, \mathcal{A}_t] - \E[\mu \, | \, \mathcal{A}_t]^2$. This bias only vanishes as the number of training paths tends to infinity.
\end{rem}

\subsection{Geometric Brownian Motion with Uncertain Parameters}\label{sec:Geometric Brownian Motion with Uncertain Parameters}
We consider a Black--Scholes model (geometric Brownian motion) with uncertain drift and diffusion parameters, as outlined in \Cref{exa:geometric Brownian motion uncertain params}. As already mentioned in the introduction, treating the firm value as (unobserved) geometric Brownian motion following the approach in \cite{merton1974pricing} and \cite{duffie2001term}. 
In particular, we use $x_0 = 1, a = 0.05, b=0.1, \sigma_{\min} = 0.05, \sigma_{\max}=0.5$ and note that $\Delta t_i \geq c := 0.01$ due to our sampling regime (cf.\ \Cref{sec:Details for Synthetic Datasets}). The parameters $\mu, \sigma$ should be filtered out from the irregular observations of $X$.

\begin{figure}[!tb]
\centering
\includegraphics[width=0.49\textwidth]{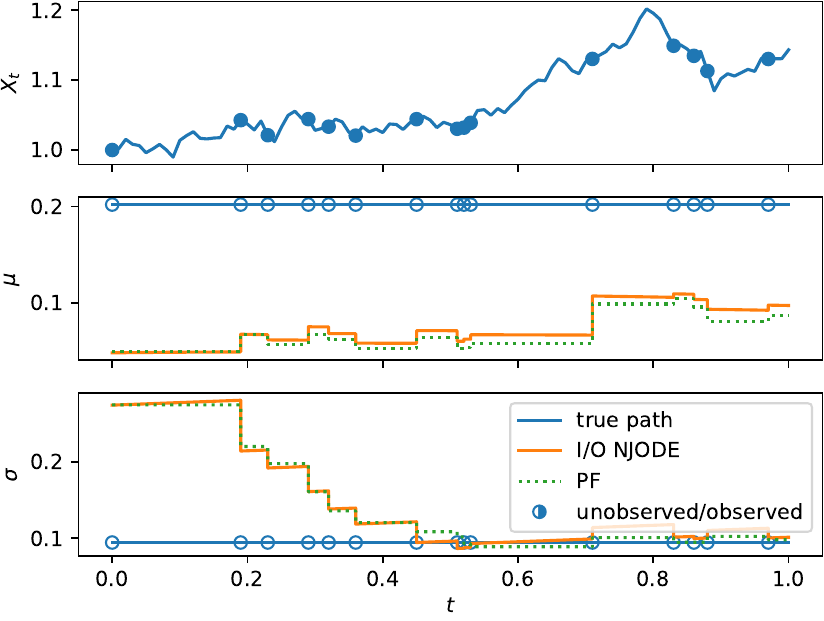}  
\includegraphics[width=0.49\textwidth]{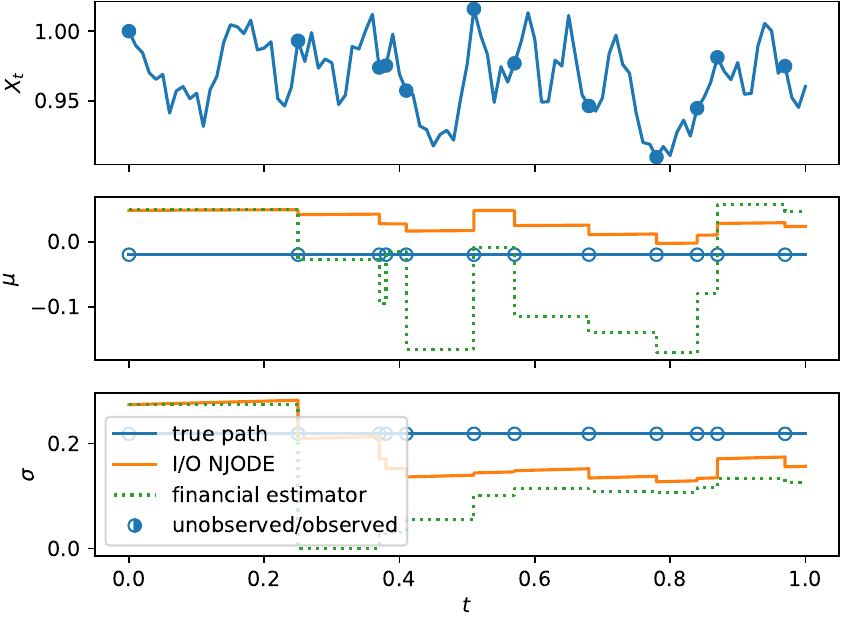}
\caption{Predicted conditional expectation of the uncertain parameters of a geometric Brownian motion on a test sample, as well as a particle filter estimation (left) and a typical financial estimator (right) for those parameters. The input coordinate $U=X$ is plotted on top, and the output coordinates $V=(\mu,\sigma)$ below. As a Bayesian-like estimator, our model's prediction and the particle filter have much less variance than the financial estimator.}
\label{fig:GBM-UP}
\end{figure}

In \Cref{fig:GBM-UP} we show two samples of the test set, where we see that our model correctly learns to predict the mean of the distributions of $\mu$ and $\sigma$ at $t=0$. 
Moreover, our model's prediction closely follows the particle filter's estimation (with $1000$ particles) of the parameters. The particle filter has a slightly better minimal validation loss of $0.0633$ compared to $0.0652$ of our model. 
However, it is important to keep in mind that the particle filter uses the knowledge of the distributions of $\mu, \sigma$ and $X$, while our model only uses the training data.
The evaluation metric on the test set at minimal validation loss\footnote{We note that the evaluation metric depends on the particle filter estimation, which is random, hence, this number varies slightly between evaluations.} is $2.59 \cdot 10^{-4}$.

Compared to the financial reference log-return estimator of the parameters, our model's prediction has less variance, since it is an approximation of the conditional expectation, which is a Bayesian-type estimator. Our model clearly outperforms the reference estimator in terms of the loss function \eqref{equ:Phi}, with a minimal validation loss of $0.0652$ compared to $2.4674$. Importantly, the reference estimator is not an approximation of the conditional expectation, which, in turn, is the unique optimizer of the loss function. Therefore, this outperformance was expected.

\begin{figure}
\centering
\includegraphics[width=0.49\textwidth]{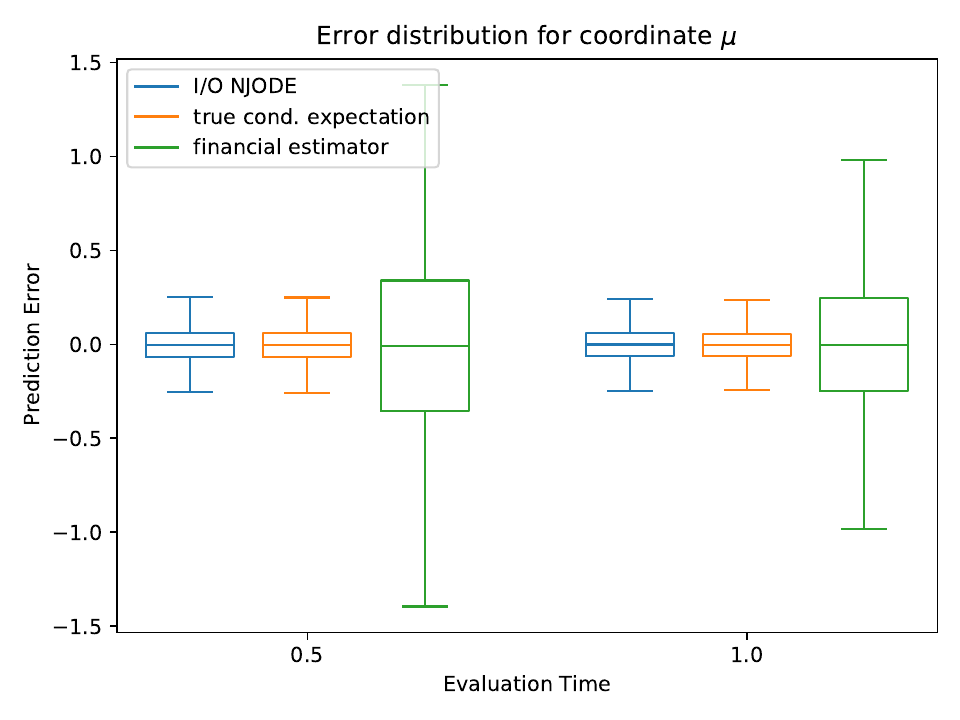}
\includegraphics[width=0.49\textwidth]{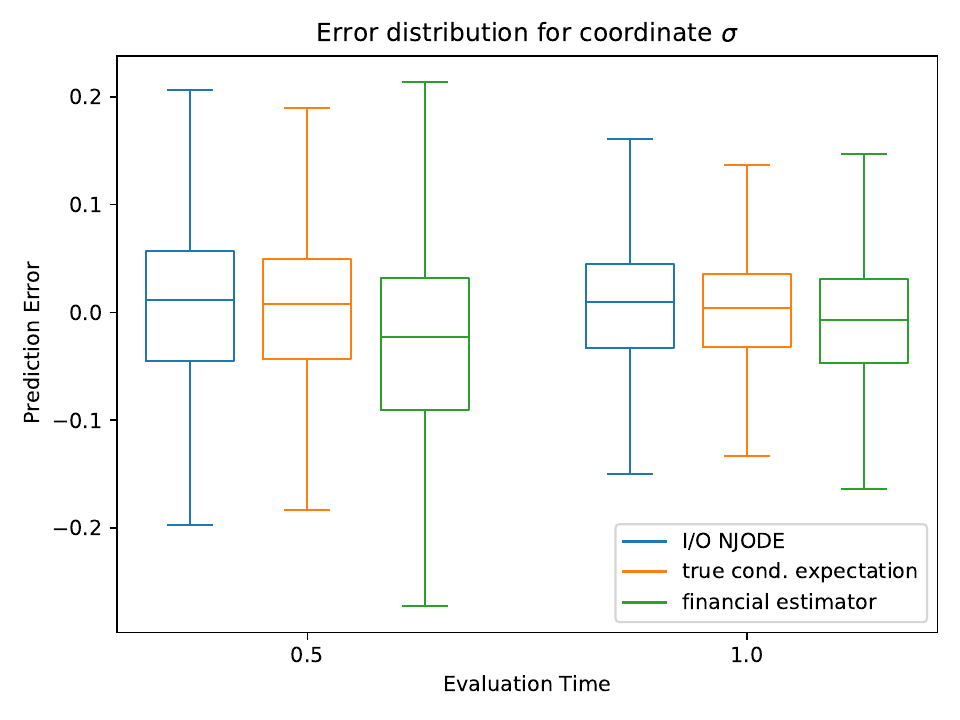}
\caption{Distribution of prediction error of our model and reference methods at evaluation times $t=T/2=0.5$ and $t=T=1$.}
\label{fig:prediction error BS UP}
\end{figure}

We compute the prediction error as the difference between the model prediction and the true value of $\mu$ and $\sigma$, respectively. In \Cref{fig:prediction error BS UP} we compare the distribution of the prediction error for our model, the particle filter approximation of the true conditional expectation and the financial estimator.
The NJODE and the particle filter (PF) are on par predicting the drift, with only a slight difference in the error distribution between $T/2$ and $T$. The financial estimator has much larger variance ($0.7$ compared to $0.1$ for NJODE and PF), which decreases slightly from $T/2$ to $T$ (to $0.5$). 
For the estimation of $\sigma$, the particle filter has slightly smaller variance ($0.077$) then the NJODE model ($0.080$), while the financial estimator (FE) has larger variance ($0.137$) again. From $T/2$ to $T$, the variance of all estimators decreases (to $0.061$ for NJODE, $0.057$ for PF, $0.085$ for FE), as expected, since the volatility estimation becomes more precise with the more observations available.

\subsubsection{Empirical Convergence Study}
The financial estimator should become more precise in estimating $\sigma$ when the number of observations increases. Moreover, its estimation of $\mu$ should improve as the time horizon $T$ increases. In addition, the true conditional expectation should become closer to the true values of $\mu$ and $\sigma$, implying that the error distributions of the NJODE and the particle filter should improve.
We empirically test this with a convergence study based on the  Black-Scholes data model with 
\begin{itemize}
    \item the same parameters as before except for varying $T \in \{ 0.1, 1, 2, 5 \}$, always keeping $100$ grid points leading to increased step sizes for larger $T$;
    \item the same parameters as before, but sampled on a grid with $1000$ instead of $100$ grid points, with varying observation probability $p \in \{ 0.01, 0.025, 0.05, 0.1 \}$ leading to an expected number of observations of $10$, $25$, $50$, $100$ within $[0,T]$, respectively. For comparison, with the original $100$ grid points and $p=0.1$, we had $10$ observations on average.
\end{itemize}

In \Cref{tab:results CS1} we show mean and standard deviations of the prediction errors $\operatorname{err}_\mu = \hat{\mu} - \mu$, where $\hat{\mu}$ is the model's estimation of the true parameter $\mu$ (which varies across samples), and similar for $\operatorname{err}_\sigma$. 
Note that  for a mean error of $0$, the standard deviation of the error coincides with the (root mean square error) RMSE.
We see in \Cref{tab:results CS1} that all methods produce unbiased estimates for $\mu$ and $\sigma$ when evaluated at $t=T$ for varying $T$. The RMSE of the NJODE is very close to the one of the particle filter, which approximates the true conditional expectation. For $\mu$, the RMSE decreases slightly for growing $T$. For the financial estimator, the RMSE is much larger (by a factor of 15 for $T=0.1$), however, it also decreases faster for growing $T$ (for $T=5$ the factor is approximately $3$). The drift is known to be difficult to estimate with the financial estimator, but, as expected, increasing the time horizon substantially improves the prediction. In contrast to this, the NJODE and the PF profit in a Bayesian way from their prior knowledge (in the case of NJODE learned from the training samples and in the case of PF given through knowledge of the initial distribution), while a larger time horizon $T$ only slightly improves their predictions. The RMSE of $\sigma$ stays approximately constant with $T$, since the number of observation does not change.

The results for varying observation probabilities $p$ in \Cref{tab:results CS2} are shown when evaluating the errors at $t = T/2 = 1/2$. The predictions of the NJODE model tend to get worse close to the time horizon $T$, since the model has fewer training samples there\footnote{
The model's prediction at time $t$ only adds (indirectly) to the loss if there is an observation at $s \geq t$. If there is no observation afterwards, the loss is independent of this prediction, hence, minimizing the loss does not necessarily lead to an improvement of this prediction. For uniformly distributed observation times, as in this example, the probability of having an observation after $t$ decreases with increasing $t$, therefore,  the models predictions tend to degrade close to $T$.
}. To avoid this (which is a side effect of the limited amount of training data), we evaluate at $T/2$. 
Again, all methods produce (nearly) unbiased estimates for $\mu$ and $\sigma$. For the NJODE and the particle filter, the RMSE of $\mu$ stays approximately constant with $p$, while it improves for the financial estimator. This is a side-effect of the improved estimates of $\sigma$, since they enter into the computation of $\hat{\mu}$ (cf.\ \Cref{exa:geometric Brownian motion uncertain params}).
The RMSE of $\sigma$ improves for all methods with $p$. For the NJODE the RMSE is slightly larger than for the PF. The relative difference between their RMSEs slightly grows, as the errors become smaller\footnote{
This can be explained by the fact that the NJODE model is jointly trained to predict $\mu$ and $\sigma$. As the predictions for $\sigma$ become better, they also become less significant in the loss, which becomes increasingly dominated by the prediction error of $\mu$. Therefore, the model's training focus shifts towards $\mu$. Training the model to only predict $\sigma$ would avoid this issue.
}. The financial estimator's RMSE of $\sigma$ is much lager for small $p$ (by a factor of $2$ for $p=0.01$), but also decreases faster with growing $p$ and eventually is of the same magnitude as for the other models at $p=0.1$. This shows that a large enough amount of observations can make up for the prior knowledge, hence, this prior knowledge is of less importance for the prediction of $\sigma$ than for the prediction of $\mu$. 

In all experiments, the evaluation metric on the test set is between $3.8 \cdot 10^{-4}$ and $6.5 \cdot 10^{-4}$.

\begin{table}[tb]
\caption{Mean $\pm$ standard deviation of the signed prediction errors for $\mu$ and $\sigma$ evaluated at $t=T$ for varying values of $T$. This table showcases the bias and variance of each method for the predictions of both, $\mu$ and $\sigma$.
}
\label{tab:results CS1}
\begin{center}
\resizebox{\textwidth}{!}{
\begin{tabular}{l | r r | r r | r r}
\toprule
 &\multicolumn{2}{c|}{IO NJODE}&\multicolumn{2}{c|}{Particle Filter}&\multicolumn{2}{c}{Financial Estimator}\\
\cmidrule{1-7}
 $T$ & $\operatorname{err}_{\mu}$ & $\operatorname{err}_{\sigma}$ & $\operatorname{err}_{\mu}$ & $\operatorname{err}_{\sigma}$ & $\operatorname{err}_{\mu} $ & $\operatorname{err}_{\sigma}$ \\
 \midrule[\heavyrulewidth]
 0.1 & $0.00 \pm 0.099$ & $0.00 \pm 0.07$ & $0.00 \pm 0.098$ & $0.00 \pm 0.06$ & $-0.01 \pm 1.605$ & $0.00 \pm 0.09$ \\
 1 & $0.00 \pm 0.092$ & $0.00 \pm 0.06$ & $0.00 \pm 0.091$ & $0.00 \pm 0.06$ & $0.00 \pm 0.508$ & $0.00 \pm 0.09$ \\
 2 & $0.00 \pm 0.086$ & $0.00 \pm 0.06$ & $0.00 \pm 0.086$& $0.00 \pm 0.06$ & $0.00 \pm 0.359$ & $0.00 \pm 0.09$\\
 5 & $0.00 \pm 0.085$ & $0.00 \pm 0.07$ & $0.00 \pm 0.077$ & $0.00 \pm 0.06$ & $0.00 \pm 0.226$ & $0.00 \pm 0.09$\\
\bottomrule
\end{tabular}
}
\end{center}
\end{table}

\begin{table}[tb]
\caption{Mean $\pm$ standard deviation of the signed prediction errors for $\mu$ and $\sigma$ evaluated at $t=T/2=0.5$ for varying values of the observation probability $p$. This table showcases the bias and variance of each method for the predictions of both, $\mu$ and $\sigma$.
}
\label{tab:results CS2}
\begin{center}
\resizebox{\textwidth}{!}{
\begin{tabular}{l | r r | r r | r r}
\toprule
 &\multicolumn{2}{c|}{IO NJODE}&\multicolumn{2}{c|}{Particle Filter}&\multicolumn{2}{c}{Financial Estimator}\\
\cmidrule{1-7}
 $p$ & $\operatorname{err}_{\mu}$ & $\operatorname{err}_{\sigma}$ & $\operatorname{err}_{\mu}$ & $\operatorname{err}_{\sigma}$ & $\operatorname{err}_{\mu} $ & $\operatorname{err}_{\sigma}$ \\
 \midrule[\heavyrulewidth]
 0.01 & $0.00 \pm 0.10$ & $0.00 \pm 0.082$ & $0.00 \pm 0.10$ & $0.00 \pm 0.077$ & $0.02 \pm 0.99$ & $-0.01 \pm 0.189$ \\
 0.025 &  $0.00 \pm 0.09$ & $0.00 \pm 0.059$&  $0.00 \pm 0.09$ & $0.00 \pm 0.052$ & $0.00 \pm 0.84$ & $0.01 \pm 0.088$ \\
 0.05 &  $-0.01 \pm 0.09$ & $0.00 \pm 0.045$ & $0.00 \pm 0.09$ & $0.00 \pm 0.038$ &  $-0.02 \pm 0.78$ & $0.00 \pm 0.051$ \\
 0.1 &  $0.00 \pm 0.09$ & $0.00 \pm 0.038$ & $0.00 \pm 0.09$ & $0.00 \pm 0.027$ & $-0.02 \pm 0.70$ & $0.00 \pm 0.032$ \\
\bottomrule
\end{tabular}
}
\end{center}
\end{table}

\subsection{Cox--Ingersoll-Ross Process with Uncertain Parameters}\label{sec:Cox--Ingersoll-Ross Process with Uncertain Parameters}

\subsubsection{Experiment 1}
We first consider a CIR process with constant parameters as described in \Cref{exa:CIR constant params}, where we set $x_0 = 1$, $a_{\min} = 0.2, a_{\max}=2, b_{\min} = 1, b_{\max} = 5, \sigma_{\min} = 0.05, \sigma_{\max} = 0.5$. While our model can easily deal with this setting, the particle filter is numerically unstable. In particular, when evaluating the transition density \eqref{equ: CIR transition density}, the exponential term often becomes so small that it is numerically equal to $0$, while the modified Bessel function $I_q$ becomes so large that it is numerically equal to $\infty$. Whenever this happens, the density is numerically undefined and we set it to $0$. If this happens for all particles, then all have the weight $0$, such that we reset them to an equal weighting\footnote{Note that this is a way to deal with the practical issue of numerical instability, while theoretically the particle filter approach should work.}.
Since this happens for many of the particles, the resulting approximation of the conditional expectations is not good, as can be seen in the results. In particular, our early stopped model achieves a test loss of $1.62$ compared to a test loss of $2.34$ of the particle filter. 
In \Cref{fig:CIR PF} top left, we see that the weights of the particle filter are often reset to equal weighting, resulting in the prediction of the unconditional mean of the parameters.

\subsubsection{Experiment 2}
To understand how well our model learns to approximate the true conditional expectation, we need a reliable reference solution to which we can compare our model. Therefore, we choose the parameter distributions in such a way that no numerical instabilities occur. In particular, the argument of $I_q$ has to be small enough, which can be achieved through relatively large values of $\sigma$. However, at the same time the inequality $2ab \geq \sigma^2$ has to be satisfied, to ensure that $X > 0$, since otherwise $u$ can become $0$, leading to another numerical instability in the computation of \eqref{equ: CIR transition density}. 
To meet all requirements, we settle on the hyper-parameters $a_{\min} = 2, a_{\max}=3, b_{\min} = 1, b_{\max} = 2, \sigma_{\min} = 1, \sigma_{\max} = 2$ for the parameter distributions in \Cref{exa:CIR constant params}, which turn out to make the computation of the transition probability numerically stable.
Hence, the particle filter can reliably be used as reference approximating the true conditional expectation. Its test loss is $0.4445$, which is nearly matched by our model with a test loss of $0.4456$ at the epoch of minimal validation loss. The evaluation metric is $6.84 \cdot 10^{-4}$ there, confirming that our model approximates the true conditional expectation well, as can also be seen in \Cref{fig:CIR PF} top right.

\subsubsection{Experiment 3}
We use \Cref{exa:CIR time-dep params} to show that our model can also easily deal with more complicated settings, where the parameters are not constant. In particular, we set $w=2\pi$ and otherwise the same parameters as in Experiment 1. Again, the particle filter is numerically unstable having an optimal test loss of $2.75$, which is much larger than our model's test loss of $1.94$. In \Cref{fig:CIR PF} bottom left, we see that our model correctly picks up the time dependence of parameter $b$, and also that the particle filter often needs to be reset to the unconditional mean due to the numerical instability.

\subsubsection{Experiment 4}
Finally, we use \Cref{exa:CIR time-dep params} with $w=2 \pi$ and otherwise the same parameters as in Experiment 2, to get a numerically more stable particle filter as baseline (\Cref{fig:CIR PF} bottom right). As pointed out in \Cref{exa:CIR time-dep params}, the transition probability of the particle filter is only approximated (since the time-dependent parameter $b$ is kept constant at the value of the last observation). Hence, it is not surprising that our model achieves a smaller test loss of $0.4712$ than the particle filter with $0.4745$.
In particular, dealing with time-dependent parameters is straightforward for our model, while the particle filter is only a non-convergent approximation of the true conditional expectation.

Overall, these experiments illustrate the flexibility and robustness of our model in contrast to the particle filter, which not only needs full knowledge of the distributions but is also numerically unstable for the CIR process except for carefully chosen parameter distributions. Moreover, our model allows for high-quality approximations of the conditional expectation, as can be seen in Experiments 2 and 4, where the particle filter works.

\begin{figure}
\centering
\includegraphics[width=0.49\textwidth]{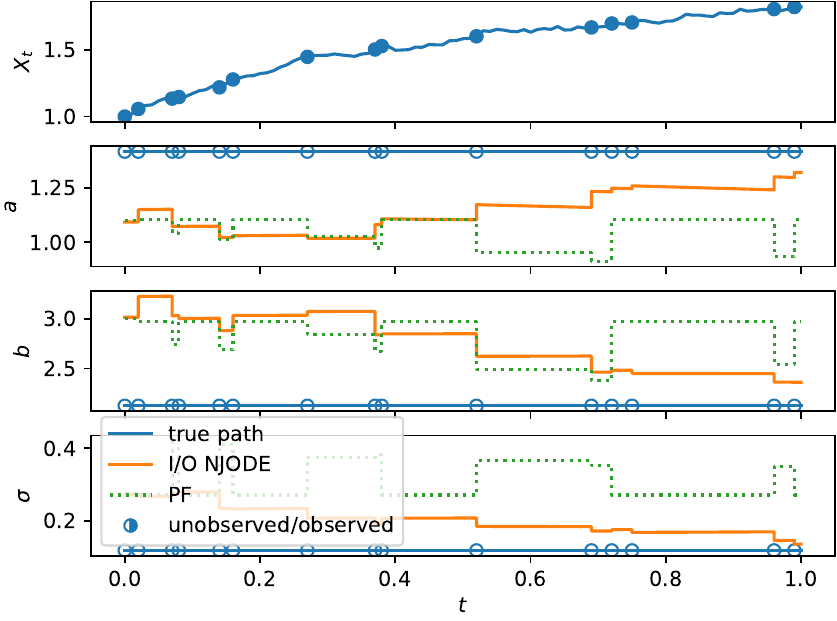}
\includegraphics[width=0.49\textwidth]{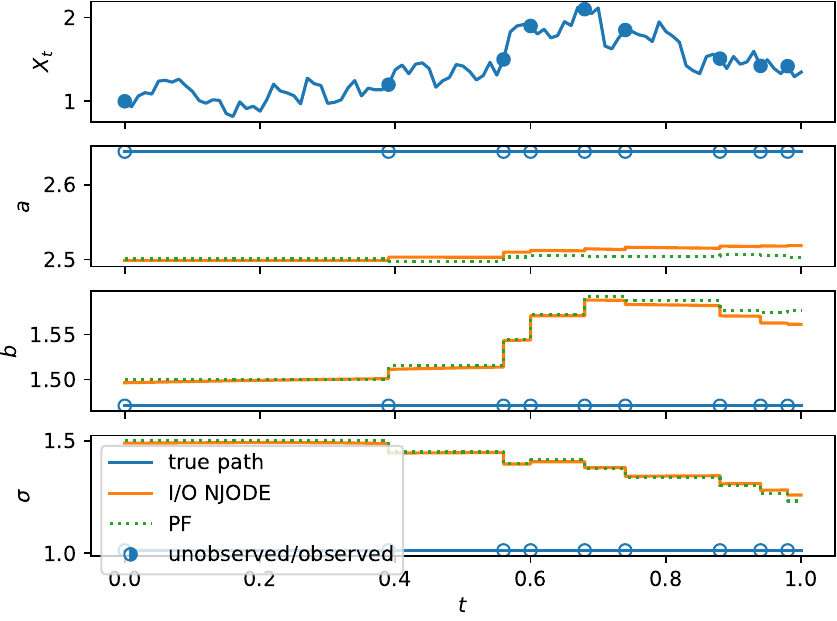}
\includegraphics[width=0.49\textwidth]{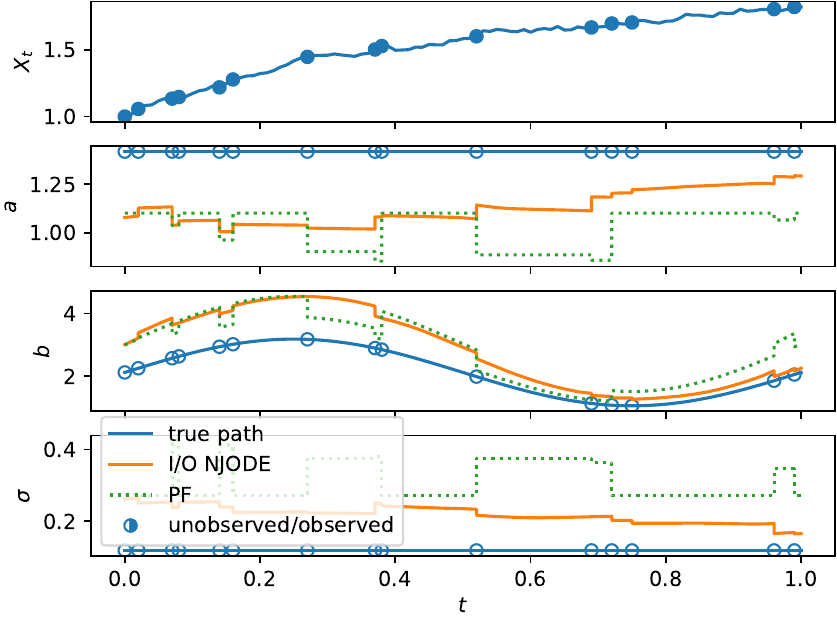}
\includegraphics[width=0.49\textwidth]{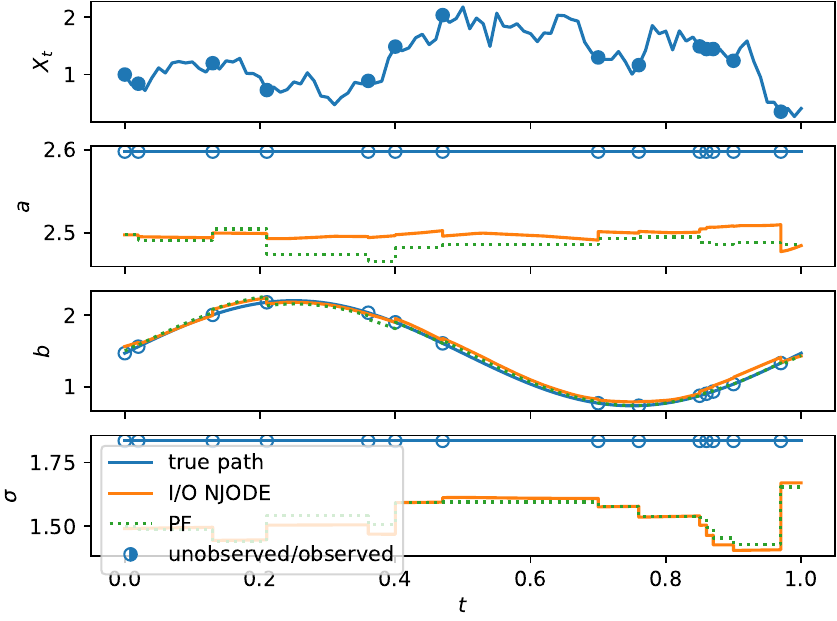}
\caption{Test samples of parameter predictions of the CIR processes in Experiment 1 (top left), Experiment 2 (top right), Experiment 3 (bottom left) and Experiment 4 (bottom right). The input coordinate $U=X$ is plotted on top and the output coordinates $V=(a,b_t, \sigma)$ below.}
\label{fig:CIR PF}
\end{figure}

\subsection{Brownian Motion Filtering}\label{sec:Brownian motion Filtering example}
We consider a signal and observation process $(X,Y)$ defined through Brownian motions as in \Cref{sec:Stochastic Filtering of Brownian Motion} with $\alpha=1$. The model uses the irregular observations of the observation process $Y$ as input and learns to filter out the signal process $X$. 
In \Cref{fig:BMFilter} left, we see that the model learns to filter the signal process well from the observations on one sample of the test set. The minimal validation loss of the model is $0.55186$ close to the optimal validation loss $0.55172$ achieved by the true conditional expectation. The evaluation metric on the test set at minimal validation loss is $9.6 \cdot 10^{-5}$. 

\begin{figure}
\centering
\includegraphics[width=0.49\textwidth]{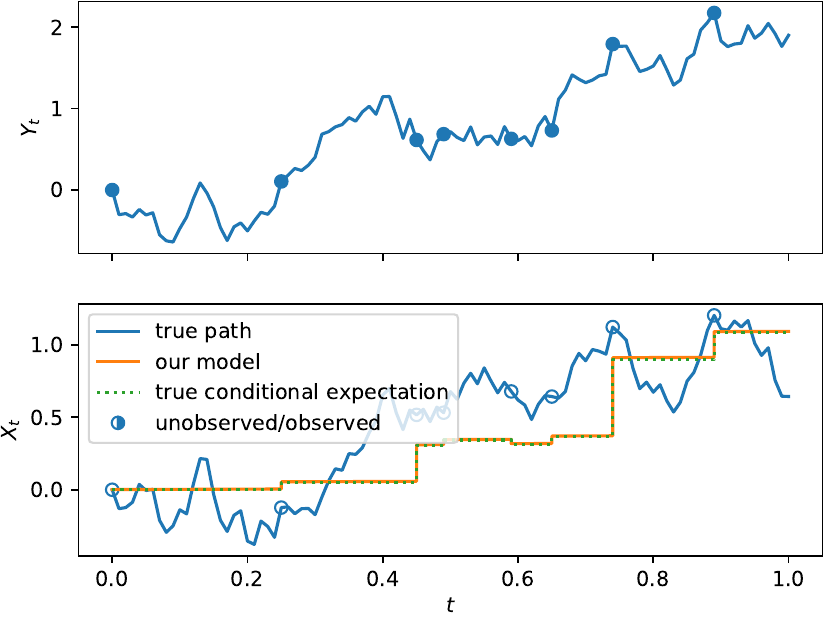}
\includegraphics[width=0.49\linewidth]{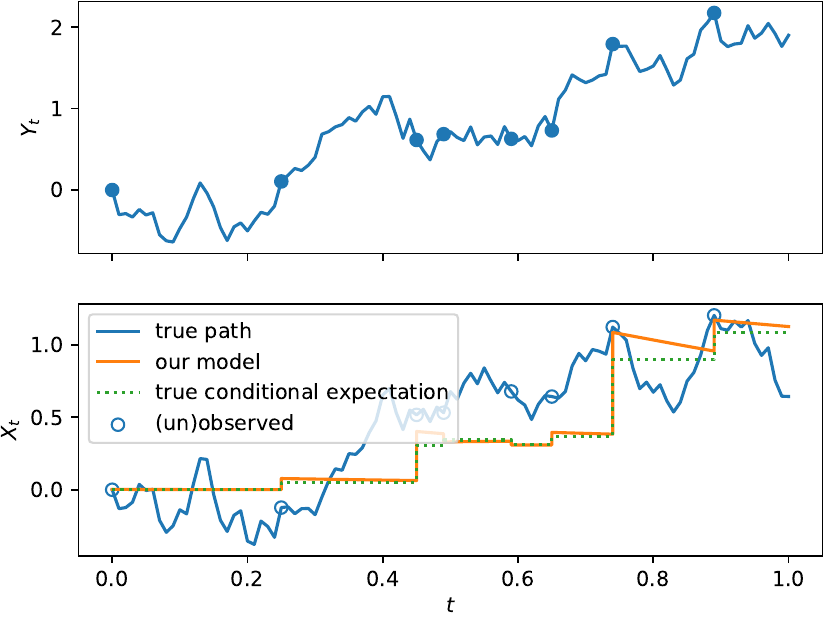}
\caption{Predicted conditional expectation of the signal process $X$ given the observations of the observation process $Y$ on a test sample, as well as the true conditional expectation. The input coordinate $U=Y$ is plotted on top, and the output coordinates $V=X$ below. The model closely replicates the true conditional expectation when trained with the input-output objective \eqref{equ:Psi} (left). Training with the old objective \eqref{equ:Psiold} leads to a different solution than the conditional expectation (right) as we will explain in \Cref{sec:Implications of the choice of Objective Function}.} 
\label{fig:BMFilter}
\label{fig:oldLossBias}
\end{figure}

\subsection{Classifying a Brownian Motion}\label{sec:Classifying a Brownian Motion}
We consider the online classification problem described in \Cref{exa:BM classification} with $\alpha=0$. In particular, given discrete and irregular past observations of the Brownian motion $W$, we want to compute the conditional probability that $W$ is currently larger than $0$. In \Cref{fig:BM Class} we show one sample of the test set, where we see that our model replicates the true conditional probability well. At minimal validation loss, it achieves a test loss of $0.1182$, which is slightly greater than the optimal test loss of $0.1147$ and the evaluation metric is $9.8 \cdot10^{-4}$. 
Examining the plots further, we see that the model does not fully learn the behavior close to the previous observation. In particular, if the previous observation was close to $\alpha = 0$, then the true conditional probability converges to $0.5$ very quickly, while it stays longer close to $1$ (or $0$) if the previous observation was much larger (or smaller) than $0$. To better learn this behavior, the model would need more training samples where two observations are close together, which are relatively scarce in the present setting. 

\begin{figure}
\centering
\includegraphics[width=0.49\textwidth]{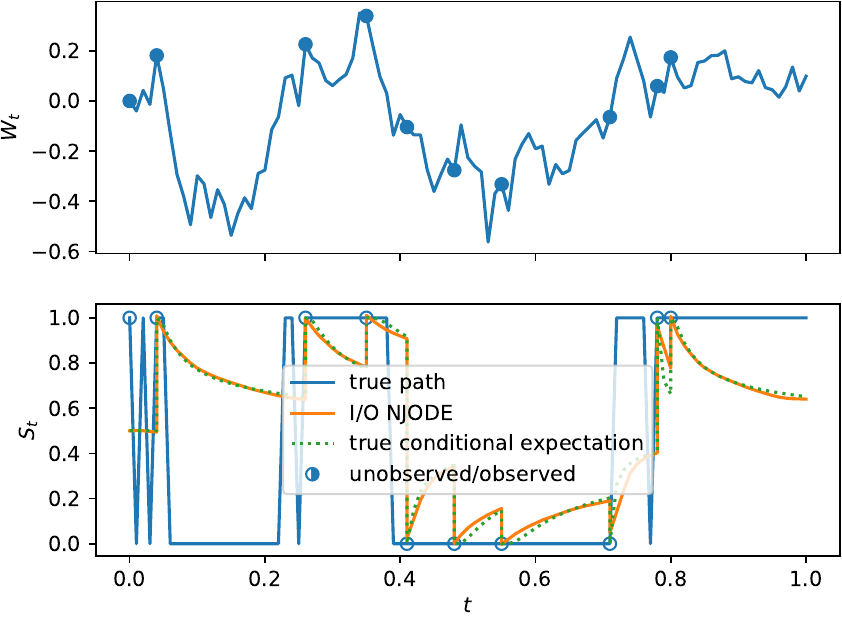}
\includegraphics[width=0.49\textwidth]{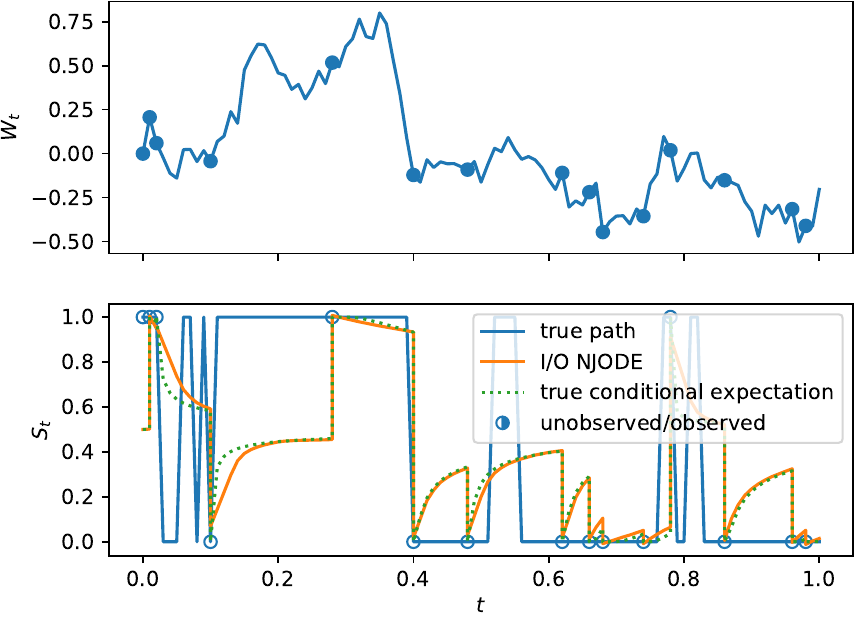}
\caption{Predicted and true conditional probability of the input process $U=W$ being greater than $0$, by computing the conditional expectation of the output process $V=S$ on two test samples.} 
\label{fig:BM Class}
\end{figure}

\section{Implications of the choice of Objective Function}\label{sec:Implications of the choice of Objective Function}
To be able to prove the convergence results as in \citet[Section~4]{krach2022optimal} within the more general input-output setting, we had to adapt the objective function. In particular, in the input-output case (more precisely, in the case where at least one of the output variables is not itself an input variable) the minimizer of the original loss function \citep[Equation~$11$]{krach2022optimal}, 
\begin{align}
\Psi_{\text{old}}(\eta) := \E\left[ \frac{1}{n} \sum_{i=1}^n    \left(\left\lvert \proj{V} (M_i) \odot ( V_{t_i} - \eta_{t_i} ) \right\rvert_2 + \left\lvert \proj{V} (M_i) \odot (V_{t_i} - \eta_{t_{i}-} ) \right\rvert_2\right)^2  \right], \label{equ:Psiold}
\end{align}
is not the conditional expectation. Hence, training the NJODE model with \eqref{equ:Psiold} should yield a solution different from the conditional expectation, with a smaller loss in terms of \eqref{equ:Psiold}, but larger loss with respect to \eqref{equ:Psi}.

To intuitively understand why the optimum of the original loss function differs from the conditional expectation, consider how the squared term outside the sum in the loss function couples the two loss terms: when the second term $\left\lvert \proj{V} (M_i) \odot (V_{t_i} - \eta_{t_{i}-} ) \right\rvert_2$ is far from zero, the first term $\left\lvert \proj{V} (M_i) \odot ( V_{t_i} - \eta_{t_i} ) \right\rvert_2 $ becomes more significant due to the large slope of the square function. Conversely, when the second term is close to zero, the first term's relevance diminishes (due to the diminishing slope of the square function close to zero).
In the following, we consider the setting from \Cref{sec:Brownian motion Filtering example} as an example.
In scenarios where there is a large observed jump, we expect the unobserved process to move in the same direction to some extent, but the NJODE can only respond through the first loss term. This situation creates two possible cases:
\begin{itemize}
    \item[1)] The unobserved process moves even further in the same direction, leading to a large second loss term
    \item[2)] The unobserved process moves less in the same direction, or slightly in the opposite direction, resulting in a smaller second loss term. 
\end{itemize}
The old loss function~\eqref{equ:Psiold} tends to emphasize the first term more in Case $1$, biasing the model to make larger jumps in the observed direction. Consequently, after jumping too far the model rapidly\footnote{Asymptotically the model would adjust back to the true conditional expectation directly after jumping too far, but due to (implicit) regularization for any finite training time, the model takes some time to move back to the correct conditional expectation after jumping too far.} adjusts to align with the correct conditional expectation in subsequent observations (see \Cref{fig:oldLossBias} right).

In general, experimental results show that the model trained with the old loss function~\eqref{equ:Psiold} typically outperforms the true conditional expectation when validated via \eqref{equ:Psiold}. 
For instance, training the NJODE with the old loss function on the same dataset as in \Cref{sec:Brownian motion Filtering example} yields a test loss of $1.05132$, significantly smaller than the test loss of $1.06276$ achieved by the conditional expectation. 
This confirms that the optimum of the old loss function is different from the conditional expectation and that the NJODE manages to capture this during training.
On the one hand, this makes it clear that we should use the new loss~\eqref{equ:Psi} instead of the old one~\eqref{equ:Psiold} when we want to learn the conditional expectation.
And, on the other hand, this showcases the flexibility and powerfulness of our model, allowing it to effectively exploit subtle changes in the loss structure. Moreover, as we saw in \Cref{sec:Brownian motion Filtering example}, training the NJODE model with the new objective function yields a very good approximation of the true conditional expectation, as our theoretical results suggest.

\subsection{Choosing the Objective Function}
In settings where the output process $V$ is a subprocess of the input process $U$, i.e., when all output coordinates are also used as input coordinates \citep[this is, in particular, the case in the setting of][]{krach2022optimal}, both loss functions \eqref{equ:Psiold} and \eqref{equ:Psi} can be used. Indeed, it is easy to verify that in this case the minimizer of \eqref{equ:Psiold} is also given by the conditional expectation, and therefore the model converges to it with both loss functions. 
Hence, the question arises as to which loss function should be preferred in this case. Comparing the two objective functions, we see that the input-output (IO) loss \eqref{equ:Psi} is a pure $L^2$ objective, while the original loss \eqref{equ:Psiold} has an $L^1$ aspect. In particular, by summing the two terms before squaring them, the first unsquared term, $\left\lvert \proj{V} (M_i) \odot ( V_{t_i} - \eta_{t_i} ) \right\rvert_2$,
acts similarly as a lasso regularization. It pushes the model to perfectly and quickly learn the jump such that the first term, and therefore also the interaction term (when computing the square), vanish. On the other hand, \eqref{equ:Psi} does not impose this inductive bias such that the model will only (closely) approximate, but not perfectly replicate, the jump within a finite training time (even though in the limit it would do so).
In particular, the convergence will be slower. 

We experimentally verify this by training the same model on the Black--Scholes dataset of \citet[Appendix~F.1]{herrera2020neural} where $U=V=X$ (cf.\ \Cref{sec:Details for Loss Comparison on Black--Scholes}) with the two different objective functions \eqref{equ:Psiold} and \eqref{equ:Psi}, while monitoring the pure \emph{jump loss}
\begin{align}\label{equ:pure jump loss}
\Psi_{\text{jump}}(\eta) := \E  \left[\frac{1}{n} \sum_{i=1}^n  \left\lvert \proj{V} (M_i) \odot ( V_{t_i} - \eta_{t_i} ) \right\rvert_2^2  \right], 
\end{align}
on the validation set. 
In particular, \eqref{equ:pure jump loss} monitors how well the model reconstructs the observation after it was fed as input.
In \Cref{fig:loss Comparison} we see that the model learns the jump faster and better with the original loss function. The minimal (pure jump) validation loss is $6.1 \cdot 10^{-10}$ when training with the original loss compared to $1.4 \cdot 10^{-6}$ when training with the IO loss. 
We note that a small constant $\epsilon>0$ is added for the computation of the square-roots in the original loss function for numerical stability. This is chosen as $\epsilon=10^{-10}$, hence, this is a natural lower bound for the original loss\footnote{In principle, the lower bound is $0$, however, during training the model does not see a benefit when decreasing this loss term further since it is overshadowed by the impact of $\epsilon$.}.
\begin{figure}
    \centering
    \includegraphics[width=0.6\linewidth]{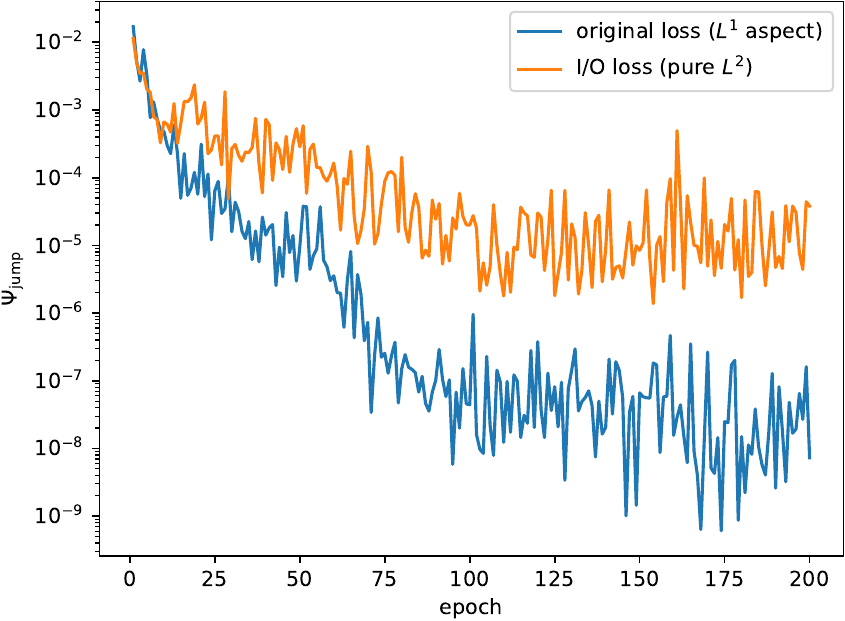}
    \caption{The jump loss $\Psi_{\text{jump}}$ converges faster and to a smaller value when trained with the original loss function (with $L^1$ aspect) \eqref{equ:Psiold} than with the pure $L^2$ IO loss function \eqref{equ:Psi}.}
    \label{fig:loss Comparison}
\end{figure}

Therefore, it is beneficial to use the old loss function when both loss functions are theoretically justified, i.e., when all output coordinates are also provided as input coordinates. As soon as an output coordinate is not provided as input, the IO loss function \eqref{equ:Psi} should be used to guarantee convergence to the conditional expectation.

\section{Conclusion}
In this work, we extended the framework of Neural Jump ODEs to input-output systems, making them applicable to nonparametric, data-driven filtering and classification tasks. 
In contrast to previous work that was limited to coinciding input and output processes, the Input-Output Neural Network Jump ODE  allows for a clear separation between the observation and the estimation processes, opening the way to a variety of real-world applications where data is often incomplete and non-regular and online processing of new incoming observations is necessary for rapid decision-making.

This research combines sound theoretical contributions, providing rigorous convergence proofs that ensure the optimality of the model in an $L^2$-optimization framework, with an open-source ready-to-use implementation and shows several empirical results. 
Our experiments highlight that the IO NJODE offers great flexibility and achieves high accuracy, while being simple to use as fully data-driven framework that does not need any knowledge or estimates of the underlying distributions.

Through this work, we aim to make Neural Jump ODEs accessible for promising applications in fields where input-output system dynamics are pivotal, such as algorithmic finance, real-time health monitoring and control systems, and risk detection in complex environments.

\if\preprint1
    \section*{Acknowledgment}
    The authors thank Josef Teichmann for helpful inputs and discussions. 
    Moreover, we thank the reviewers for their insightful comments that helped to improve the paper.
    The funding for  Félix Ndonfack by the Deutsche Forschungsgemeinschaft (DFG, German Research Foundation) – Project-ID 499552394 – SFB 1597 Small Data - and support from the FDMAI, Freiburg, are gratefully acknowledged.
\else
    \begin{acknowledgement}
      The authors thank Josef Teichmann for helpful inputs and discussions. 
      Moreover, we thank the reviewers for their insightful comments that helped to improve the paper.
    \end{acknowledgement}
    
    \begin{funding}
      The funding for  Félix Ndonfack by the Deutsche Forschungsgemeinschaft (DFG, German Research Foundation) – Project-ID 499552394 – SFB 1597 Small Data - and support from the FDMAI, Freiburg, are gratefully acknowledged.
    \end{funding}
\fi

\renewcommand{\bibname}{References}
\if\preprint1
    \bibliographystyle{tmlr}
\else
    \bibliographystyle{tmlr}
\fi
\bibliography{Ref}

\begin{thebibliography}{47}
\providecommand{\natexlab}[1]{#1}
\providecommand{\url}[1]{\texttt{#1}}
\expandafter\ifx\csname urlstyle\endcsname\relax
  \providecommand{\doi}[1]{doi: #1}\else
  \providecommand{\doi}{doi: \begingroup \urlstyle{rm}\Url}\fi

\bibitem[Andersson et~al.(2024)Andersson, Heiss, Krach, and
  Teichmann]{andersson2024extending}
William Andersson, Jakob Heiss, Florian Krach, and Josef Teichmann.
\newblock Extending path-dependent {NJ}-{ODE}s to noisy observations and a
  dependent observation framework.
\newblock \emph{Transactions on Machine Learning Research}, 2024.
\newblock ISSN 2835-8856.
\newblock URL \url{https://openreview.net/forum?id=0T2OTVCCC1}.

\bibitem[Archer et~al.(2015)Archer, Park, Buesing, Cunningham, and
  Paninski]{archer2015black}
Evan Archer, Il~Memming Park, Lars Buesing, John Cunningham, and Liam Paninski.
\newblock Black box variational inference for state space models.
\newblock \emph{arXiv preprint arXiv:1511.07367}, 2015.

\bibitem[Azizi et~al.(2025)Azizi, Bodik, Heiss, and
  Yu]{CLEARcalibratedlearningepistemic}
Ilia Azizi, Juraj Bodik, Jakob Heiss, and Bin Yu.
\newblock Clear: Calibrated learning for epistemic and aleatoric risk, 2025.
\newblock URL \url{https://arxiv.org/abs/2507.08150}.

\bibitem[Bugajewski \& Gulgowski(2020)Bugajewski and
  Gulgowski]{BUGAJEWSKI2020123752}
Dariusz Bugajewski and Jacek Gulgowski.
\newblock On the characterization of compactness in the space of functions of
  bounded variation in the sense of {Jordan}.
\newblock \emph{Journal of Mathematical Analysis and Applications},
  484\penalty0 (2), 2020.

\bibitem[Caruana et~al.(1997)Caruana, Pratt, and Thrun]{MultitaskCaruana1997}
Rich Caruana, Lorien Pratt, and Sebastian Thrun.
\newblock Multitask learning.
\newblock \emph{Machine Learning 1997 28:1}, 28:\penalty0 41--75, 1997.
\newblock ISSN 1573-0565.
\newblock \doi{10.1023/A:1007379606734}.
\newblock URL \url{https://link.springer.com/article/10.1023/A:1007379606734}.

\bibitem[Chevyrev \& Kormilitzin(2016)Chevyrev and
  Kormilitzin]{Chevyrev2016APO}
Ilya Chevyrev and Andrey Kormilitzin.
\newblock A primer on the signature method in machine learning.
\newblock \emph{arXiv}, 2016.

\bibitem[Cohen \& Elliott(2015)Cohen and Elliott]{cohen2015stochastic}
Samuel~N Cohen and Robert~James Elliott.
\newblock Stochastic calculus and applications.
\newblock \emph{Springer}, 2015.

\bibitem[Corenflos et~al.(2021)Corenflos, Thornton, Deligiannidis, and
  Doucet]{corenflos2021differentiable}
Adrien Corenflos, James Thornton, George Deligiannidis, and Arnaud Doucet.
\newblock Differentiable particle filtering via entropy-regularized optimal
  transport.
\newblock In \emph{International Conference on Machine Learning}, pp.\
  2100--2111. PMLR, 2021.

\bibitem[Cox et~al.(1985)Cox, Ingersoll, and Ross]{CIR}
John~C Cox, Jonathan~E Ingersoll, and Stephen~A Ross.
\newblock A theory of the term structure of interest rates.
\newblock \emph{Econometrica}, 53\penalty0 (2):\penalty0 385--407, 1985.
\newblock ISSN 00129682, 14680262.

\bibitem[Cs{\"o}rg{\H{o}} et~al.(1983)Cs{\"o}rg{\H{o}}, Tandori, and
  Totik]{csorgHo1983strong}
S{\'a}ndor Cs{\"o}rg{\H{o}}, K{\'a}roly Tandori, and Vilmos Totik.
\newblock On the strong law of large numbers for pairwise independent random
  variables.
\newblock \emph{Acta Mathematica Hungarica}, 42:\penalty0 319--330, 1983.

\bibitem[Cuchiero et~al.(2025)Cuchiero, Primavera, and
  Svaluto-Ferro]{cuchiero_universal_2025}
Christa Cuchiero, Francesca Primavera, and Sara Svaluto-Ferro.
\newblock Universal approximation theorems for continuous functions of càdlàg
  paths and {Lévy}-type signature models.
\newblock \emph{Finance and Stochastics}, 29\penalty0 (2):\penalty0 289--342,
  April 2025.
\newblock ISSN 1432-1122.
\newblock \doi{10.1007/s00780-025-00557-5}.

\bibitem[Djuric et~al.(2003)Djuric, Kotecha, Zhang, Huang, Ghirmai, Bugallo,
  and Miguez]{djuric2003particle}
Petar~M Djuric, Jayesh~H Kotecha, Jianqui Zhang, Yufei Huang, Tadesse Ghirmai,
  M{\'o}nica~F Bugallo, and Joaquin Miguez.
\newblock Particle filtering.
\newblock \emph{IEEE signal processing magazine}, 20\penalty0 (5):\penalty0
  19--38, 2003.

\bibitem[Duffie \& Lando(2001)Duffie and Lando]{duffie2001term}
Darrell Duffie and David Lando.
\newblock Term structures of credit spreads with incomplete accounting
  information.
\newblock \emph{Econometrica}, 69\penalty0 (3):\penalty0 633--664, 2001.

\bibitem[Eaton(2007)]{Eaton2007Multi}
Morris~L Eaton.
\newblock \emph{Multivariate statistics: A vector space approach}.
\newblock Institute of Mathematical Statistics, 2007.

\bibitem[Fermanian(2021)]{fermanian2020embedding}
Adeline Fermanian.
\newblock Embedding and learning with signatures.
\newblock \emph{Computational Statistics \& Data Analysis}, 157:\penalty0
  107148, 2021.

\bibitem[Fontana \& Schmidt(2018)Fontana and Schmidt]{fontana2018general}
Claudio Fontana and Thorsten Schmidt.
\newblock General dynamic term structures under default risk.
\newblock \emph{Stochastic Processes and their Applications}, 128\penalty0
  (10):\penalty0 3353--3386, 2018.

\bibitem[Frey \& Schmidt(2009)Frey and Schmidt]{frey2009pricing}
R{\"u}diger Frey and Thorsten Schmidt.
\newblock Pricing corporate securities under noisy asset information.
\newblock \emph{Mathematical Finance}, 19\penalty0 (3):\penalty0 403--421,
  2009.

\bibitem[Frey \& Schmidt(2012)Frey and Schmidt]{frey2012pricing}
R{\"u}diger Frey and Thorsten Schmidt.
\newblock Pricing and hedging of credit derivatives via the innovations
  approach to nonlinear filtering.
\newblock \emph{Finance and Stochastics}, 16\penalty0 (1):\penalty0 105--133,
  2012.

\bibitem[Gehmlich \& Schmidt(2018)Gehmlich and Schmidt]{gehmlich2018dynamic}
Frank Gehmlich and Thorsten Schmidt.
\newblock Dynamic defaultable term structure modeling beyond the intensity
  paradigm.
\newblock \emph{Mathematical Finance}, 28\penalty0 (1):\penalty0 211--239,
  2018.

\bibitem[Hackenberg et~al.(2022)Hackenberg, Harms, Pfaffenlehner, Pechmann,
  Kirschner, Schmidt, and Binder]{hackenberg2022deep}
Maren Hackenberg, Philipp Harms, Michelle Pfaffenlehner, Astrid Pechmann,
  Janbernd Kirschner, Thorsten Schmidt, and Harald Binder.
\newblock Deep dynamic modeling with just two time points: Can we still allow
  for individual trajectories?
\newblock \emph{Biometrical Journal}, 64\penalty0 (8):\penalty0 1426--1445,
  2022.

\bibitem[Hackenberg et~al.(2025)Hackenberg, Pfaffenlehner, Behrens, Pechmann,
  Kirschner, and Binder]{hackenberg2025investigating}
Maren Hackenberg, Michelle Pfaffenlehner, Max Behrens, Astrid Pechmann,
  Janbernd Kirschner, and Harald Binder.
\newblock Investigating a domain adaptation approach for integrating different
  measurement instruments in a longitudinal clinical registry.
\newblock \emph{Biometrical Journal}, 67\penalty0 (1):\penalty0 e70023, 2025.

\bibitem[Heiss(2024)]{HeissInductiveBias2024}
Jakob Heiss.
\newblock \emph{Inductive Bias of Neural Networks and Selected Applications}.
\newblock Doctoral thesis, ETH Zurich, Zurich, 2024.
\newblock URL
  \url{https://www.research-collection.ethz.ch/handle/20.500.11850/699241}.

\bibitem[Heiss et~al.(2022)Heiss, Teichmann, and Wutte]{HeissImplReg3}
Jakob Heiss, Josef Teichmann, and Hanna Wutte.
\newblock How infinitely wide neural networks can benefit from multi-task
  learning - an exact macroscopic characterization.
\newblock \emph{arXiv preprint arXiv:2112.15577}, 2022.
\newblock \doi{10.3929/ETHZ-B-000550890}.
\newblock URL \url{http://hdl.handle.net/20.500.11850/550890}.

\bibitem[Herrera et~al.(2021)Herrera, Krach, and Teichmann]{herrera2020neural}
Calypso Herrera, Florian Krach, and Josef Teichmann.
\newblock Neural jump ordinary differential equations: Consistent
  continuous-time prediction and filtering.
\newblock In \emph{International Conference on Learning Representations}, 2021.
\newblock URL \url{https://openreview.net/forum?id=JFKR3WqwyXR}.

\bibitem[Hornik(1991)]{hornik1991approximation}
Kurt Hornik.
\newblock Approximation capabilities of multilayer feedforward networks.
\newblock \emph{Neural networks}, 4\penalty0 (2):\penalty0 251--257, 1991.

\bibitem[Hornik et~al.(1989)Hornik, Stinchcombe, and
  White]{10.5555/70405.70408}
Kurt Hornik, Maxwell Stinchcombe, and Halbert White.
\newblock Multilayer feedforward networks are universal approximators.
\newblock \emph{Neural Networks}, 2, 1989.

\bibitem[Jordan(1997)]{jordan1997serial}
Michael~I Jordan.
\newblock Serial order: A parallel distributed processing approach.
\newblock In \emph{Advances in Psychology}, volume 121. Elsevier, 1997.

\bibitem[Kallenberg(2021)]{kallenberg2021foundations}
Olav Kallenberg.
\newblock \emph{Foundations of modern probability}.
\newblock Springer, 3rd edition, 2021.

\bibitem[Kalman et~al.(1960)]{kalman1960new}
Rudolph~E Kalman et~al.
\newblock A new approach to linear filtering and prediction problems [j].
\newblock \emph{Journal of basic Engineering}, 82\penalty0 (1):\penalty0
  35--45, 1960.

\bibitem[Karandikar \& Rao(2018)Karandikar and Rao]{KarandikarRao2018}
Rajeeva~L Karandikar and Bhamidi~V Rao.
\newblock \emph{Introduction to Stochastic Calculus}.
\newblock Indian Statistical Institute Series. Springer Singapore, Singapore,
  2018.
\newblock ISBN 978-981-10-8317-4.
\newblock \doi{10.1007/978-981-10-8318-1}.
\newblock URL \url{http://link.springer.com/10.1007/978-981-10-8318-1}.

\bibitem[Karl et~al.(2016)Karl, Soelch, Bayer, and Van~der Smagt]{karl2016deep}
Maximilian Karl, Maximilian Soelch, Justin Bayer, and Patrick Van~der Smagt.
\newblock Deep variational bayes filters: Unsupervised learning of state space
  models from raw data.
\newblock \emph{arXiv preprint arXiv:1605.06432}, 2016.

\bibitem[Khaled \& Samia(2010)Khaled and Samia]{khaled2010estimation}
Khaldi Khaled and Meddahi Samia.
\newblock Estimation of the parameters of the stochastic differential equations
  black-scholes model share price of gold.
\newblock \emph{Journal of Mathematics and Statistics}, 6\penalty0
  (4):\penalty0 421, 2010.

\bibitem[Kiraly \& Oberhauser(2019)Kiraly and Oberhauser]{KiralyOberhauser2019}
Franz~J Kiraly and Harald Oberhauser.
\newblock Kernels for sequentially ordered data.
\newblock \emph{Journal of Machine Learning Research}, 20\penalty0
  (31):\penalty0 1--45, 2019.

\bibitem[Krach(2025)]{KrachPhDThesis}
Florian Krach.
\newblock \emph{Neural Jump Ordinary Differential Equations}.
\newblock Doctoral thesis, ETH Zurich, Zurich, 2025.

\bibitem[Krach \& Teichmann(2024)Krach and Teichmann]{krach2024learning}
Florian Krach and Josef Teichmann.
\newblock Learning chaotic systems and long-term predictions with neural jump
  odes.
\newblock \emph{arXiv preprint arXiv:2407.18808}, 2024.

\bibitem[Krach et~al.(2022)Krach, N{\"u}bel, and Teichmann]{krach2022optimal}
Florian Krach, Marc N{\"u}bel, and Josef Teichmann.
\newblock {Optimal estimation of generic dynamics by path-dependent neural jump
  ODEs}.
\newblock \emph{arXiv preprint arXiv:2206.14284}, 2022.

\bibitem[Krishnan et~al.(2015)Krishnan, Shalit, and Sontag]{krishnan2015deep}
Rahul~G Krishnan, Uri Shalit, and David Sontag.
\newblock Deep {Kalman} filters.
\newblock \emph{arXiv preprint arXiv:1511.05121}, 2015.

\bibitem[Krishnan et~al.(2017)Krishnan, Shalit, and
  Sontag]{krishnan2017structured}
Rahul~G Krishnan, Uri Shalit, and David Sontag.
\newblock Structured inference networks for nonlinear state space models.
\newblock In \emph{Proceedings of the AAAI Conference on Artificial
  Intelligence}, volume~31, 2017.

\bibitem[Lai et~al.(2022)Lai, Domke, and Sheldon]{lai2022variational}
Jinlin Lai, Justin Domke, and Daniel Sheldon.
\newblock Variational marginal particle filters.
\newblock In \emph{International Conference on Artificial Intelligence and
  Statistics}, pp.\  875--895. PMLR, 2022.

\bibitem[Le et~al.(2017)Le, Igl, Rainforth, Jin, and Wood]{le2017auto}
Tuan~Anh Le, Maximilian Igl, Tom Rainforth, Tom Jin, and Frank Wood.
\newblock Auto-encoding sequential monte carlo.
\newblock \emph{arXiv}, 2017.

\bibitem[Maddison et~al.(2017)Maddison, Lawson, Tucker, Heess, Norouzi, Mnih,
  Doucet, and Teh]{maddison2017filtering}
Chris~J Maddison, John Lawson, George Tucker, Nicolas Heess, Mohammad Norouzi,
  Andriy Mnih, Arnaud Doucet, and Yee Teh.
\newblock Filtering variational objectives.
\newblock \emph{Advances in Neural Information Processing Systems}, 30, 2017.

\bibitem[Merton(1974)]{merton1974pricing}
Robert~C Merton.
\newblock On the pricing of corporate debt: The risk structure of interest
  rates.
\newblock \emph{The Journal of finance}, 29\penalty0 (2):\penalty0 449--470,
  1974.

\bibitem[Naesseth et~al.(2018)Naesseth, Linderman, Ranganath, and
  Blei]{naesseth2018variational}
Christian Naesseth, Scott Linderman, Rajesh Ranganath, and David Blei.
\newblock Variational sequential monte carlo.
\newblock In \emph{International conference on artificial intelligence and
  statistics}, pp.\  968--977. PMLR, 2018.

\bibitem[Ralff(2021)]{stackexchange1}
Ralff.
\newblock Kalman filtering: Processing all measurements together vs processing
  them sequentially.
\newblock Mathematics Stack Exchange, 2021.
\newblock URL \url{https://math.stackexchange.com/q/4058151}.
\newblock URL: https://math.stackexchange.com/q/4058151 (version: 2021-03-11).

\bibitem[Revach et~al.(2022)Revach, Shlezinger, Ni, Escoriza, Van~Sloun, and
  Eldar]{revach2022kalmannet}
Guy Revach, Nir Shlezinger, Xiaoyong Ni, Adria~Lopez Escoriza, Ruud~JG
  Van~Sloun, and Yonina~C Eldar.
\newblock Kalmannet: Neural network aided kalman filtering for partially known
  dynamics.
\newblock \emph{IEEE Transactions on Signal Processing}, 70:\penalty0
  1532--1547, 2022.

\bibitem[Rumelhart et~al.(1985)Rumelhart, Hinton, and
  Williams]{rumelhart1985learning}
David~E Rumelhart, Geoffrey~E Hinton, and Ronald~J Williams.
\newblock Learning internal representations by error propagation.
\newblock Technical report, California Univ San Diego La Jolla Inst for
  Cognitive Science, 1985.

\bibitem[Schmidt \& Novikov(2008)Schmidt and Novikov]{schmidt2008structural}
Thorsten Schmidt and Alexander Novikov.
\newblock A structural model with unobserved default boundary.
\newblock \emph{Applied mathematical finance}, 15\penalty0 (2):\penalty0
  183--203, 2008.

\end{thebibliography}

\appendix
\section*{Appendix}

\section{Signatures}\label{sec:Signatures}
We give a brief overview of the signature transform and its universal approximation results, following \citet[Section~3.2]{krach2022optimal}. We start by defining paths of bounded variation.
\begin{definition}\label{def:p-variation}
    Let  $J$ be a closed interval in $\R$ and $d\geq 1$.
    Let ${X}:J\rightarrow\R^{d}$ be a path on $J$.
    The variation of ${X}$ on the interval $J$ is defined by
    \begin{equation*}
        \|{X}\|_{var, J}
        =  \sup_{P(J)}\sum_{t_j \in P(J)} |{X}_{t_{j}}-{X}_{t_{j-1}} |_2 ,
    \end{equation*}
    where the supremum is taken over all finite partitions $P(J)$ of $J$.
\end{definition}
\begin{definition}\label{def:BV}
We denote the set of $\R^{d}$-valued paths of bounded variation  on $J$ by $BV(J, \R^d)$ and endow it with the norm 
        $$\lVert {X} \rVert_{BV} := |{X}_0|_2 +  \|{X}\|_{var, J}.$$ 
\end{definition}
For continuous paths of bounded variation we can define the signature transform.
\begin{definition}\label{def:signature}
    Let $J$ denote a closed interval in $\R$.
    Let ${X}:J\rightarrow\R^{d}$ be a continuous path with finite variation.
    The signature of ${X}$ is defined as
    \begin{equation*}
        S({X}) = \left(1, {X}_J^1, {X}_J^2, \dots\right),
    \end{equation*}
    where, for each $m\geq 1$,
    \begin{equation*}
        {X}_J^m = \int_{\substack{u_1<\dots<u_m \\ u_1,\dots,u_m\in J}} d{X}_{u_1}\otimes\dots\otimes d{X}_{u_m} \in (\R^d)^{\otimes m}
    \end{equation*}
    is a collection of iterated integrals.
    The map from a path to its signature is called signature transform.
\end{definition}

A good introduction to the signature transform with its properties and examples can be found in \citet{Chevyrev2016APO, KiralyOberhauser2019, fermanian2020embedding}. 
In practice, we are not able to use the full (infinite) signature, but instead use a truncated version.
\begin{definition}
    Let $J$ denote a compact interval in $\R$.
    Let ${X}:J\rightarrow\R^{d}$ be a continuous path with finite variation.
    The  truncated signature of ${X}$ of order $m$ is defined as 
    \begin{equation*}
        \pi_m({X}) = \left(1,{X}_J^1,{X}_J^2,\dots,{X}_J^m\right),
    \end{equation*}
    i.e., the first $m+1$ terms (levels) of the signature of ${X}$.
\end{definition}
Note that the size of the truncated signature depends on the dimension of ${X}$, as well as the chosen truncation level.
Specifically, for a path of dimension $d$, the dimension of the truncated signature of order $m$ is given by
\begin{equation}\label{eq:sig_nb_terms}
\begin{cases}
m+1, & \text{if } d =1, \\
 \frac{d^{m+1}-1}{d-1}, & \text{if } d >1. 
\end{cases}   
\end{equation}
When using the truncated signature as input to a model this results in a trade-off between accurately describing the path and model complexity.

A well-known result in stochastic analysis states that  continuous functions  can be uniformly approximated using truncated signatures, which is made precise in the following theorem. For references to the literature and an idea of the proof, see \citet{KiralyOberhauser2019}, Theorem 1. 
This classical result was extended in \citet[Proposition~3.8]{krach2022optimal} to  additionally incorporate the input $c$ from a compact set $C \subset \R^m$ which can be stated as follows:

\begin{prop}\label{prop:universal_approx_sig}
Consider $\mathcal{P}$ as a compact subset of $BV_0^c ([0,1],\mathcal{H})$ consisting of paths that are not tree-like equivalent, and let $C \subset \R^m$ for some $m \in \N$ be compact. We take the Cartesian product $BV_0^c ([0,1],\mathcal{H}) \times \R^m$ with the product norm defined as the sum of the individual norms (variation norm and $1$-norm). Suppose $f:\mathcal{P} \times C \rightarrow \R$ is continuous. Then, for any $\varepsilon>0$, there exists an $M>0$ and a continuous function $\tilde f$ such that
$$
    \sup_{(x,c) \in \mathcal{P} \times C} \left| f(x, c) - \tilde f (\pi_M(x), c) \right| < \varepsilon.
$$
\end{prop}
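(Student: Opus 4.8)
The plan is to prove this by applying the Stone--Weierstrass theorem on the compact Hausdorff space $\mathcal{P} \times C$, with the classical signature approximation result supplying the reason that linear signature functionals separate paths. First I would introduce the subalgebra $\mathcal{A} \subset C(\mathcal{P} \times C, \R)$ generated by two families of continuous functions: the linear signature functionals $(x,c) \mapsto \langle \ell, S(x) \rangle$, where $\ell$ ranges over linear functionals on the tensor algebra (equivalently, words in a basis of $\mathcal{H}$), and the coordinate projections $(x,c) \mapsto c_i$ for $1 \le i \le m$. Each generator is continuous: the projections trivially, and the signature functionals because $S$ is continuous in the variation norm and each fixed level of the signature is continuous, so $\langle \ell, S(\cdot)\rangle$ depends continuously on finitely many levels. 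A key structural point is the shuffle-product identity, which guarantees that a pointwise product of two signature functionals is again a finite linear combination of signature functionals; hence every element of $\mathcal{A}$ is a finite sum of terms $\langle \ell, S(x)\rangle \, P(c)$ with $P$ a polynomial in $c$, and in particular depends on $x$ only through finitely many signature levels.

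Next I would check the two hypotheses of Stone--Weierstrass. The algebra contains the constants, since the empty word yields $\langle \ell, S(x)\rangle = 1$. For point separation, take $(x_1,c_1) \ne (x_2,c_2)$: if $c_1 \ne c_2$, a coordinate projection separates them; if $c_1 = c_2$ but $x_1 \ne x_2$, then because $\mathcal{P}$ consists of paths that are not tree-like equivalent, the Hambly--Lyons uniqueness theorem (the same injectivity that underlies the classical result being extended here) gives $S(x_1) \ne S(x_2)$, so some functional $\ell$ yields $\langle \ell, S(x_1)\rangle \ne \langle \ell, S(x_2)\rangle$. Thus $\mathcal{A}$ is a point-separating subalgebra containing constants on the compact space $\mathcal{P} \times C$, so by Stone--Weierstrass it is dense in $C(\mathcal{P} \times C, \R)$ for the supremum norm. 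Given $\varepsilon > 0$, I would pick $g \in \mathcal{A}$ with $\sup_{(x,c)} |f(x,c) - g(x,c)| < \varepsilon$; since $g$ involves only finitely many signature levels, there is an $M \in \N$ and a continuous function $\tilde f$ (polynomial in the level-$M$ signature coordinates and in $c$) with $g(x,c) = \tilde f(\pi_M(x), c)$, which is exactly the claimed approximation.

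The main obstacle, and the only place where genuine signature theory is needed, is the point-separation step in the $x$-direction: one must ensure that signature functionals distinguish distinct paths of $\mathcal{P}$, which relies on the non-tree-like hypothesis together with Hambly--Lyons uniqueness, and one must use the shuffle identity to confirm that the generated algebra remains within the span of signature functionals so that a \emph{finite} truncation level $M$ suffices. Everything else is the routine mechanics of Stone--Weierstrass on a compact product space.
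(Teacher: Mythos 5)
Your proposal is correct and follows essentially the same route as the proof the paper relies on: the paper does not prove \Cref{prop:universal_approx_sig} itself but cites \citet[Proposition~3.8]{krach2022optimal} (building on \citet{KiralyOberhauser2019}), whose argument is exactly the Stone--Weierstrass approach you describe, with the shuffle identity closing the algebra of signature functionals, the level-zero term supplying constants, Hambly--Lyons uniqueness on non-tree-like paths giving point separation in $x$, and coordinate projections handling the extra compact factor $C$. The finiteness of the approximating element of the algebra, which yields the truncation level $M$, is handled correctly as well.
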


To apply this result, we need a tractable description of certain compact subsets of $BV_0^c ([0,1],\mathcal{\R}^d)$ that include suitable paths for our considerations. Since $BV_0^c ([0,1],\mathcal{\R}^d)$ is not finite dimensional, not every closed and bounded subset is compact. \citet{BUGAJEWSKI2020123752} prove in Example 4,  that the following set of functions is relatively compact. As already observed in \citet[Remark 3.11]{krach2022optimal}, this also holds for $\R^d$-valued paths.
\begin{prop}\label{prop:compact subset BV}
For any $N \in \N$, the set $A_N \subseteq BV_0^c([0,1], \R)$, consisting of all piecewise linear, bounded, and continuous functions expressible as
\begin{equation*}
f(t) = (a_1 t) \1_{[s_0, s_1]}(t) + \sum_{i=2}^N (a_i t + b_i) \1_{(s_{i-1}, s_i]}(t),
\end{equation*}
is relatively compact. Here, $a_i, b_i \in [-N, N]$, $b_1 = 0$, $a_i s_i +b_i = a_{i+1} s_i + b_{i+1}$ for all $1 \leq i \leq N$, and $0 = s_0 < s_1 < \dotsb < s_N = 1$.
\end{prop}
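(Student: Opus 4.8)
The plan is to prove relative compactness directly via sequential compactness: since $BV_0^c([0,1],\R)$ is a metric space, it suffices to show that every sequence in $A_N$ admits a subsequence converging in the $\lVert \cdot \rVert_{BV}$-norm to an element of $BV_0^c([0,1],\R)$.

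First I would encode each $f \in A_N$ by its finite parameter vector. The interior breakpoints $(s_1,\dots,s_{N-1})$ lie in the closed ordered simplex $\overline{S} := \{0 \le s_1 \le \dots \le s_{N-1} \le 1\}$, and the slopes $(a_1,\dots,a_N)$ lie in $[-N,N]^N$; the intercepts $b_i$ are then determined by $b_1 = 0$ together with the continuity recursion $b_{i+1} = b_i + (a_i - a_{i+1})s_i$, so that $A_N$ is the image of a subset of the compact set $K := [-N,N]^N \times \overline{S}$ under the map sending parameters to the associated piecewise linear path $f_{a,s}$. A direct computation gives $\lVert f_{a,s}\rVert_{var} = \sum_{i=1}^N |a_i|(s_i - s_{i-1}) \le N \sum_{i=1}^N (s_i-s_{i-1}) = N$, and since $f(0)=0$ this yields $A_N \subset \{\lVert \cdot \rVert_{BV} \le N\}$. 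Boundedness alone is of course insufficient in the infinite-dimensional space $BV_0^c$, which is exactly why the finite parametrization is the essential ingredient.

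Given a sequence $f^{(k)} = f_{a^{(k)},s^{(k)}}$ in $A_N$, I would use compactness of $K$ to pass to a subsequence with $a^{(k)}\to a^*$ and $s^{(k)}\to s^*$, and set $f^* := f_{a^*,s^*} \in BV_0^c([0,1],\R)$, the piecewise linear path attached to the limiting parameters (where coincident limiting breakpoints simply merge adjacent pieces). The crucial step is to upgrade parameter convergence to $\lVert \cdot \rVert_{BV}$-convergence. Since each path is Lipschitz, hence absolutely continuous, its variation equals the $L^1$-norm of its derivative, so $\lVert f^{(k)} - f^*\rVert_{var} = \int_0^1 |g^{(k)}(t) - g^*(t)|\,dt$, where $g^{(k)}, g^*$ denote the piecewise-constant derivatives, each bounded pointwise by $N$. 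For every $t\in(0,1)$ that is not one of the finitely many limiting breakpoints $s_j^*$, the index of the piece containing $t$ stabilizes for large $k$ (because $s^{(k)}\to s^*$), and then $a^{(k)}\to a^*$ forces $g^{(k)}(t)\to g^*(t)$; thus $g^{(k)}\to g^*$ almost everywhere while $|g^{(k)}-g^*|\le 2N$ on $[0,1]$. Dominated convergence then gives $\int_0^1|g^{(k)}-g^*|\,dt\to 0$, and together with $f^{(k)}(0)=f^*(0)=0$ this delivers $\lVert f^{(k)}-f^*\rVert_{BV}\to 0$.

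I expect the main obstacle to be the degenerate case where two or more breakpoints collide in the limit, i.e.\ $s_i^{(k)}, s_{i+1}^{(k)} \to s^*$; there the naive ``continuous parametrization into $BV$'' viewpoint seems to break down, since an intermediate piece disappears and one worries about a discontinuous jump in variation. The reason I route the argument through the $L^1$-norm of the derivative rather than a pointwise or uniform estimate is precisely that the uniform slope bound $|a_i|\le N$ forces the variation carried by any collapsing piece (of length $s_i^{(k)}-s_{i-1}^{(k)} \to 0$) to vanish, and the dominated-convergence step absorbs all such collisions simultaneously because the offending breakpoints form a Lebesgue-null set. Finally, $f^*$ is continuous, piecewise linear with at most $N$ pieces, satisfies $f^*(0)=0$ and $\lVert f^*\rVert_{var}\le N$ (with $a_i^*, b_i^* \in [-N,N]$ as limits of the corresponding bounded quantities), hence $f^*\in BV_0^c([0,1],\R)$. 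Therefore every sequence in $A_N$ has a $BV$-convergent subsequence with limit in $BV_0^c([0,1],\R)$, which proves that $A_N$ is relatively compact.
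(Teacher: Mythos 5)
Your proof is correct, and it takes a genuinely different route from the paper: the paper never proves this proposition itself, but instead cites \citet[Example~4]{BUGAJEWSKI2020123752} for the scalar case and \citet[Remark~3.11]{krach2022optimal} for the extension to $\R^d$-valued paths. Your argument is a self-contained replacement: you parametrize $A_N$ by a subset of the compact set $[-N,N]^N \times \{0 \le s_1 \le \dotsb \le s_{N-1} \le 1\}$ of slopes and ordered breakpoints (the intercepts being determined by $b_1=0$ and the continuity recursion), extract a convergent subsequence of parameters, and then upgrade parameter convergence to $\lVert \cdot \rVert_{BV}$-convergence by writing the variation of the Lipschitz difference $f^{(k)} - f^*$ as the $L^1$-norm of the difference of piecewise-constant derivatives and invoking dominated convergence with the uniform bound $2N$. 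This correctly isolates and resolves the only delicate point, namely collision of breakpoints in the limit: pointwise convergence of the derivatives can fail only on the finite (hence Lebesgue-null) set of limiting breakpoints, and the slope bound forces the variation carried by any collapsing piece to vanish. Two cosmetic remarks: the limit $f^* = f_{a^*,s^*}$ is most cleanly defined as $f^*(t) = \int_0^t g^*(u)\, du$ with $g^* = \sum_{i=1}^N a_i^* \1_{(s_{i-1}^*, s_i^*)}$, which makes the ``merging of coincident breakpoints'' automatic rather than a case distinction; and your argument extends verbatim to vector-valued slopes $a_i \in [-N,N]^d$, which is the form the paper actually uses via the cited remark. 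As for what each approach buys: the citation keeps the paper's appendix short, while your proof makes the result self-contained and makes transparent why only \emph{relative} compactness can be asserted --- the limit may leave $A_N$ itself when the strict ordering of the $s_i$ degenerates, but it always stays in $BV_0^c([0,1],\R)$, which is all that relative compactness requires.
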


\section{Kalman Filter}\label{sec:Kalman Filter}
If the observation and signal distributions in a filtering system are Gaussian and independent, then the Kalman filter \citep{kalman1960new} recovers the optimal solution, i.e., the true conditional expectation. This is the case in \Cref{exa:brownian motion uncertain drift}, where the normally distributed drift should be filtered from observations of a Brownian motion with drift, and in the filtering example with two Brownian motions in \Cref{sec:Brownian motion Filtering example}. 

We first recall the Kalman filter in \Cref{sec:Definition of the Kalman Filter} and then show that applying it in \Cref{exa:brownian motion uncertain drift} leads to the same result as the direct computation of the conditional expectation given there. The example of \Cref{sec:Brownian motion Filtering example} works along the same lines. 

\subsection{Definition of the Kalman Filter}\label{sec:Definition of the Kalman Filter}
The Kalman filter (without control input) assumes an underlying system of a discrete, unobserved state process $x$ and an observation process $z$ given as
\begin{equation}\label{equ:state process kalman filter}
    \begin{split}
        x_k = F_k x_{k-1} + w_k, \quad \text{and} \quad z_k = H_k x_k + v_k,
    \end{split}
\end{equation}
where $F_k$ is the state transition matrix, $w_k \sim N(0, Q_k)$ is the process noise, $H_k$ is the observation matrix and $v_k \sim N(0, R_k)$ is the observation noise. The initial state $x_0$ and all noise terms $(w_k)_K$ and $(v_k)_k$ are assumed to be mutually independent.

Then the Kalman filter, which is optimal in this setting, initializes the prediction of $x$ as $\hat{x}_{0|0} = \E[x_0]$ and of its covariance by $P_{0|0} = \operatorname{Cov}(x_0)$.
Then the Kalman filter proceeds in two steps, the prediction and the update step, which can be summarized as
\begin{align*}
        \hat{x}_{k|k-1} &= F_k \hat{x}_{k-1|k-1},  &
        P_{k | k-1} &= F_k P_{k-1|k-1} F_k^\top + Q_k, \tag{Predict} \\
        \hat{x}_{k|k} &= \hat{x}_{k|k-1} + K_k \tilde y_k,  &
        P_{k | k} &= (I - K_k H_k) P_{k|k-1} , \tag{Update} \\
\end{align*}
where
\begin{align*}
        \tilde y_k &= z_k - H_k \hat{x}_{k|k-1}, & 
        S_k &= H_k P_{k|k-1} H_k^\top + R_k,&
        K_k &= P_{k|k-1} H_k^\top S_k^{-1}.
\end{align*}

\subsection{Kalman Filter Applied to \texorpdfstring{\Cref{exa:brownian motion uncertain drift}}{Example 5.1}}\label{sec:Kalman Filter Applied to exa}
There are different ways to set up the Kalman filter for the filtering task of \Cref{exa:brownian motion uncertain drift}, see \Cref{rem:other ways to apply Kalman filter}. 
Here, we shall use the set-up resembling the computation of  conditional expectation as we did it in the example.
In particular, we have the constant signal process $x_k = \mu$, which we want to predict, hence, $F_k =1, w_k = 0, Q_k = 0$, and we use only one step of the observation process, where we assume to make all observations concurrently with\footnote{It is important not to mix up $X_{t_k}$ and $X_0$ which refer to the model of \eqref{equ:model equ exa BM w drift} with $x_k$ which refer to the state process of the Kalman filter \eqref{equ:state process kalman filter}. For more clarity, we write here $X_0$ (instead of $x_0$) for the starting value of $(X_t)_t$.} 
$$z_1 = (X_{t_1}-X_0, \dotsc, X_{t_n}-X_0)^\top = (\mu t_k + \sigma W_{t_k})_{1\leq k \leq n}^\top,$$ 
hence, $H_1 = (t_1, \dotsc, t_n)^\top$ and $v_1 =  (W_{t_1}, \dotsc, W_{t_n})^\top \sigma \sim N(0, R)$, with $R_{i,j} = \sigma^2 \min{(t_i, t_j)}$. The initial values are $\hat{x}_{0|0} = a$ and $P_{0|0} = b^2$ and since the state process is constant we have $\hat{x}_{1|0} = \hat{x}_{0|0} $ and $P_{1|0} =P_{0|0} $. Moreover, 
$$\tilde y_1 = (X_{t_1}-X_0 - t_1 a, \dotsc, X_{t_n}-X_0 -t_n a)^\top,$$ 
and $S_1 = b^2 (t_i t_j)_{i,j} + R$. A direct calculation shows that $S_1 = \operatorname{Cov}(z_1)$, which implies that $S_1 = \tilde\Sigma_{11}$ holds, for $\tilde\Sigma_{11}$ defined in \Cref{exa:brownian motion uncertain drift}. Moreover, note that $P_{1|0} H_k^\top = b^2 (t_1, \cdots, t_n) = \operatorname{Cov}(\mu, z_1^\top) = \tilde \Sigma_{2,1}$, hence, $K_k = \tilde \Sigma_{2,1}  \tilde\Sigma_{11}^{-1}$. Therefore, the update step leads to the a posteriori prediction
\begin{equation*}
    \hat{x}_{1|1} = a + \tilde \Sigma_{2,1} \tilde\Sigma_{11}^{-1} (X_{t_1}-X_0 - t_1 a, \dotsc, X_{t_n}-X_0 -t_n a)^\top,
\end{equation*}
which coincides with the prediction $\hat \mu$ computed in \Cref{exa:brownian motion uncertain drift}. Moreover, it is easy to verify that the a posteriori covariance satisfies $P_{k|k} = \hat{\Sigma}$.

\begin{rem}\label{rem:other ways to apply Kalman filter}
    As mentioned before, the Kalman filter could also be applied differently, leading to the same result. In particular, and probably more naturally, one could use the increments $z_1 = (X_{t_1}- X_{t_0}, \dotsc, X_{t_n} - X_{t_{n-1}})$ as observations. Using them as concurrent observations, one only has to adapt $H_1, v_1, R$ in the above computations. Checking that this coincides with the given solution in \Cref{exa:brownian motion uncertain drift} might be a bit tedious. However, it is easy to adjust the computation in \Cref{exa:brownian motion uncertain drift} to also use the increments, making it easy to verify that this coincides with the Kalman filter. 
    Conditioning on the same information, whether using the increments or the direct values of the path $X_{t_k}$, yields identical conditional distributions. Therefore, the resulting computed $\hat \mu$ must also be the same.
    Hence, all 4 mentioned ways lead to the same result. 
    Since the increments are mutually independent, using them also allows for a multi-step (recursive) application of the Kalman filter (while this does not work with the correlated path values $X_{t_k}$). It is important to note that the increments could be used as observations once at a step or multiple at a step and in arbitrary order, always leading to the same resulting estimator $\hat{\mu}$ as was outlined, e.g., by \citet{stackexchange1}.
\end{rem}

\section{Experimental Details}\label{sec:Experimental Details}
Our experiments are based on the implementation used by \citet{krach2022optimal}, which is available at \url{https://github.com/FlorianKrach/PD-NJODE}. Therefore, we refer the reader to its Appendix for any details that are not provided here.

\subsection{A Practical Note on Self-Imputation}\label{sec:A Practical Note on Self-Imputation}
In real world settings, data is usually incomplete. In experiments, we have seen that self-imputation is an effective way to deal with missing values, outperforming imputing the last observation or $0$s or only using the signature as input without directly passing the current observation to the model. 
Using an input-output setting, where some input variables are not output variables, leads to the problem that self-imputation can not be used, since the model does not predict an estimate for all input variables. 
On the other hand, using all input variables also as output variables, might lead to a worsened performance, if the added input variables overshadow the actual target variables in the loss function.
There are different approaches to overcome this issue. 
\begin{itemize}
    \item One can first train a separate model for predicting all input variables, and then use this model for imputation of missing values while training the second model with the actual target variables. This is probably the best possible imputation, but also the most expensive to train.
    \item An alternative is to jointly train both networks, and to do so efficiently, use only a different readout network $\tilde g_{\tilde \theta}$ for the two models, while sharing the other neural networks $f_\theta$ and $\rho_\theta$. Hence, two different losses are used to train the two models. Variants, where one of the optimizers only has access to the parameters of its respective readout network, can be used. 
    \item Another approach is to use only one model predicting all input and the target variables at once and weighting the different variables in the loss function. Then a reasonable self-imputation and good results for the target prediction can by achieved by changing the weighting throughout the training from putting all weight on the input variables in the beginning of the training to putting all weight on the target variables later on.
\end{itemize}
 
We note that if a large number of training paths is available, training a different model for each output variable will usually lead to an improvement of the results, since no overshadowing of the different variables in the loss can happen.
If this is of importance, then it might be beneficial to use the first approach or to use the second approach with a different readout network for each target variable.

On the other hand, if we can computationally afford to train a very large architecture with many hidden neurons for many epochs with a small learning rate but only have access to a limited number of training paths, then having one model that predicts everything could be even better in terms of generalization to new paths. If we suspect that there might be some hidden features that are valuable for predicting both the outputs and the inputs, 
one can even benefit from multi-task learning \citep{MultitaskCaruana1997,HeissImplReg3,andersson2024extending,HeissInductiveBias2024}.
Training one model that predicts both can help to learn more reliable features in the hidden state rather than overfitting its features to only be useful for predicting the outputs. In practice, it can still be beneficial to weight down the loss for predicting the inputs to mitigate numerical problems related to overshadowing the loss for the outputs.

\subsection{Differences between the Implementation and the Theoretical Description of the NJODE}\label{sec:Differences between the Implementation and the Theoretical Description of the PD-NJODE}
Since we use the same implementation of the PD-NJODE, all differences between the implementation and the theoretical description listed in \citet[Appendix~D.1.1]{krach2022optimal} also apply here.

\subsection{Details for Synthetic Datasets}\label{sec:Details for Synthetic Datasets}
Below we list the standard settings for all synthetic datasets. Any deviations or additions are listed in the respective subsections of the specific datasets.

\textbf{Dataset.} 
We use the Euler scheme to sample paths from the given stochastic processes on the interval $[0,1]$, i.e., with ${T}=1$ and a discretisation time grid with step size $0.01$. At each time point we observe the process with probability $p=0.1$. We sample between $20'000$ and $100'000$ paths of which $80\%$ are used as training set and the remaining $20\%$ as validation set. Additionally, a test set with $4'000$ to $5'000$ paths is generated. 

\textbf{Architecture.}
We use the PD-NJODE with the following architecture. The latent dimension is $d_H \in \{100,200\}$ and all 3 neural networks have the same structure of 1 hidden layer with $\operatorname{ReLU}$ or $\tanh$ activation function and $100$ nodes. The signature is used up to truncation level $3$, the encoder is recurrent and the both the encoder and decoder use a residual connection.

\textbf{Training.}
We use the Adam optimizer with the standard choices $\beta = (0.9, 0.999)$, weight decay of $0.0005$ and learning rate $0.001$. Moreover, a dropout rate of $0.1$ is used for every layer and training is performed with a mini-batch size of $200$ for $200$ epochs.
The PD-NJODE models are trained with the loss function \eqref{equ:Psi}.

\subsubsection{Details for Scaled Brownian Motion with Uncertain Drift}
\textbf{Dataset.} $20'000$ paths are sampled for the combined training and validation set and the test set has $5'000$ independent paths.

\textbf{Architecture.} We use $d_H=100$ and $\tanh$ activation function.

\subsubsection{Details for Geometric Brownian Motion with Uncertain Parameters}
\textbf{Dataset.} $100'000$ paths are sampled for the combined training and validation set and the test set has $5'000$ independent paths.

\textbf{Architecture.} We use $d_H=100$ and $\operatorname{ReLU}$ activation function for the first experiments. In the convergence study we use $d_H=200$.

\subsubsection{Details for CIR Process with Uncertain Parameters}
\textbf{Dataset.} $100'000$ paths are sampled for the combined training and validation set and the test set has $4'000$ independent paths.

\textbf{Architecture.} We use $d_H=200$ and $\operatorname{ReLU}$ activation function for the first experiments. For Experiment 2, the empirical performance is slightly better when the encoder and decoder do not use residual connections (validation loss of $0.446$ compared to $0.450$), therefore, we report these results.

\subsubsection{Details for Brownian Motion Filtering}
\textbf{Dataset.} $40'000$ paths are sampled for the combined training and validation set and the test set has $4'000$ independent paths.

\textbf{Architecture.} We use $d_H=100$ and $\tanh$ activation functions.

\subsubsection{Details for Classifying a Brownian Motion}
\textbf{Dataset.} $40'000$ paths are sampled for the combined training and validation set and the test set has $4'000$ independent paths.

\textbf{Architecture.} We use $d_H=200$ and $\operatorname{ReLU}$ activation functions.

\subsubsection{Details for Loss Comparison on Black--Scholes}\label{sec:Details for Loss Comparison on Black--Scholes}
\textbf{Dataset.} The geometric Brownian motion model described in \citet[Appendix~F.1]{herrera2020neural} is used with the same parameters. $20'000$ paths are sampled for the combined train and validation set, no test set is used.

\textbf{Architecture.} We use $d_H=100$ and $\operatorname{ReLU}$ activation function.

\section{Inductive Bias}
We have proven in \Cref{thm:MC convergence Yt} that the IO NJODE is asymptotically unbiased. In \Cref{sec:Examples,sec:Experiments}, we have studied settings, where we can simulate arbitrarily many training paths. In such setting we can rely on \Cref{thm:MC convergence Yt} if we sample sufficiently many training paths. However, in settings where we observe a limited number of real-world training paths without being able to simulate further training paths, the inductive bias becomes more important. The inductive bias of NJODEs has been discussed in \citet[Appendix~B]{andersson2024extending} and in \citet{HeissInductiveBias2024} and this discussion is also applicable to IO NJODE. These insights on the inductive bias can be helpful for understanding when IO NJODE will be able to generalize well from a few observed training paths to new unseen test paths.

\end{document}